\documentclass[hidelinks,onefignum,onetabnum,tikz]{siamart250211}



\usepackage{lipsum}
\usepackage{amsfonts}
\usepackage{graphicx}
\usepackage{epstopdf}
\usepackage{algorithmic}
\usepackage{xr}
\usepackage{hyperref}
\usepackage{tcolorbox}
\ifpdf
  \DeclareGraphicsExtensions{.eps,.pdf,.png,.jpg}
\else
  \DeclareGraphicsExtensions{.eps}
\fi

\usepackage{amsmath,bm}
\usepackage{amssymb}
\usepackage{mathtools}
\usepackage{xfrac}
\usepackage{tikz}
\usetikzlibrary{calc,angles,quotes,arrows.meta}


\newcommand{\Rea}{\mathbb{R}}
\newcommand{\paren}[1]{\left( #1 \right)}
\newcommand{\set}[1]{\left\{ #1 \right\}}

\newcommand{\norm}[1]{\left\lVert #1 \right\rVert}
\newcommand{\expectE}{\mathbb{E}\,}
\newcommand{\expect}[2]{\mathbb{E}_{#1}\left[#2\right]}
\newcommand{\range}{\operatorname{range}}

\DeclareMathOperator{\argm}{argmin}
\newcommand{\argmin}[1]{\underset{#1}{\argm\;}}

\newcommand{\rev}[1]{#1}
\newcommand{\revv}[1]{#1}
\newcommand{\revh}[1]{#1}

\newcommand{\xrh}{x^{(\rho)}}
\newcommand{\rrh}{r^{(\rho)}}
\newsiamremark{remark}{Remark}
\newsiamremark{hypothesis}{Hypothesis}
\crefname{hypothesis}{Hypothesis}{Hypotheses}
\newsiamthm{claim}{Claim}
\newsiamremark{fact}{Fact}
\crefname{fact}{Fact}{Facts}
\newsiamthm{question}{Question}
\newsiamthm{example}{Example}

\headers{Randomized block Kaczmarz without Preprocessing}{G. Goldshlager, J. Hu, and L. Lin}

\title{Worth Their Weight: Randomized and Regularized Block Kaczmarz Algorithms without Preprocessing \thanks{G.G. acknowledges support from the U.S. Department of
Energy, Office of Science, Office of Advanced Scientific Computing Research, Department of
Energy Computational Science Graduate Fellowship under Award Number DE-SC0023112.
This effort was supported by the SciAI Center, and funded by the Office of Naval Research (ONR), under Grant Number N00014-23-1-2729 (J.H., L.L.).  L.L. is a Simons Investigator in Mathematics. This research used the Savio computational cluster resource provided by the Berkeley Research Computing program at the University of California, Berkeley.}}

\author{Gil Goldshlager \thanks{Department of Mathematics, UC Berkeley, Berkeley, CA 94720 USA
  (\email{ggoldsh@berkeley.edu, jianghu@berkeley.edu, linlin@berkeley.edu}).}
\and Jiang Hu \footnotemark[2] 
\and Lin Lin \footnotemark[2] 
\thanks{Applied Mathematics and Computational Research Division, Lawrence Berkeley National Laboratory, Berkeley, CA 94720 USA (\email{linlin@lbl.gov}).}
}

\usepackage{amsopn}


\externaldocument{reblock_supplement}

\ifpdf
\hypersetup{
  pdftitle={Worth Their Weight: Randomized and Regularized Block Kaczmarz Algorithms without Preprocessing},
  pdfauthor={G. Goldshlager, J. Hu, and L. Lin}
}
\fi


\begin{document}

\maketitle

\begin{abstract}
Due to the ever growing amounts of data leveraged for machine learning and scientific computing, it is increasingly important to develop algorithms that sample only a small portion of the data at a time.
In the case of linear least-squares, the randomized block Kaczmarz method (RBK) is an appealing example of such an algorithm, but its convergence is only understood under sampling distributions that require potentially prohibitively expensive preprocessing steps.
To address this limitation, we analyze RBK when the data is sampled uniformly, showing that its iterates converge in a Monte Carlo sense to a \textit{weighted} least-squares solution.
Unfortunately, for general problems \revv{the bias of the weighted least-squares solution} and the variance of the iterates can become arbitrarily large.
\revv{We show that these quantities can be rigorously controlled} by incorporating regularization into the RBK iterations, yielding the regularized algorithm ReBlocK.
Numerical experiments including examples arising from natural gradient optimization demonstrate that ReBlocK can outperform both RBK and minibatch stochastic gradient descent for inconsistent problems with rapidly decaying singular values.
\end{abstract}

\begin{keywords}
randomized Kaczmarz, linear least-squares, inconsistent problems, uniform sampling, regularization, natural gradient descent
\end{keywords}

\begin{MSCcodes}
65F10, 65F20, 68W20, 15A06
\end{MSCcodes}

\section{Introduction}
Consider the linear least-squares problem 
\begin{equation} \label{eq:ls}
\begin{split}
& \min_{x \in \Rea^n} \norm{Ax-b}^2, \\
A  &  \in \Rea^{m \times n} ,\; b \in \Rea^m.
\end{split}
\end{equation}
The minimal-norm ordinary least-squares solution is 
\begin{equation}
x^* = A^+b,
\end{equation}
where $A^+$ is the Moore-Penrose pseudoinverse of $A$. 

In this work, we are interested in algorithms that solve \eqref{eq:ls} by sampling just a small number of rows at a time. Such algorithms can be useful for extremely large problems for which direct methods and Krylov subspace methods are prohibitively expensive \cite{censor1981row}. Of particular interest are problems in which the dimension $n$ is so large that it is only possible to access $k \ll n$ rows at a time, but not so large that it is necessary to take $k=1$. As a canonical example, we might have $m=10^9$, $n=10^6$, and $k=10^3$.

As further motivation for algorithms of this type, there are some applications in which sampling rows is the only efficient way to access the data. For example, the rows may be computed on the fly, especially when solving the ``semi-infinite'' version of \eqref{eq:ls} in which the rows are indexed by continuous variables rather than discrete integers \cite{shustin2022semi}. An example that captures both of these features is the problem of calculating natural gradient directions for continuous function learning problems; see \Cref{sec:ngd}.

One well-known approach for solving \eqref{eq:ls} using only a few rows at a time is minibatch stochastic gradient descent (mSGD). The mSGD algorithm works by sampling $k$ rows uniformly at random and using them to calculate an unbiased estimator of the gradient of the least-squares loss function. This is equivalent to averaging the $k$ independent gradient estimators furnished by each sampled row. For a thorough discussion of mSGD, see \cite{jain2018parallelizing}.

The technique of averaging used in mSGD, while simple and efficient, is a relatively crude way of processing a block of $k$ rows. A more sophisticated approach is provided by the randomized block Kaczmarz method, which goes beyond averaging by making use of the pseudoinverse of the sampled block \cite{needell2014paved}. As we shall see, the use of the pseudoinverse opens up the possibility of faster convergence, but it also creates a number of complications regarding the implementation and analysis of the algorithm.

\subsection{Notation}
We denote by $r = b - Ax^*$ the residual vector of \eqref{eq:ls}, by $a_i^\top \in \Rea^n$ the row of $A$ with index $i$, and by $b_i \in \Rea$ the corresponding entry of $b$. Additionally, for an index set $S \subseteq \set{1, \ldots, m}$ with $|S| = k$, let $A_S \in \Rea^{k \times n}$ represent the block of rows of $A$ whose indices are in $S$, and let $b_S \in \Rea^k$ represent the corresponding entries of $b$. The same subscript notations will also be applied as needed to any matrices and vectors other than $A$ and $b$. Additionally, denote by $\mathbf{U}(m,k)$ the uniform distribution over all size-$k$ subsets of $\set{1, \ldots, m}$.
 
For symmetric matrices $X,Y$, denote by $X \succ Y$ that the difference $X-Y$ is positive definite, and define $\succeq, \prec, \preceq$ correspondingly. For any vector $x$ and any positive definite matrix $Y$ of the same size, let $\norm{x}_Y = \sqrt{x^\top Y x}$.  For any matrix $X$, denote by $\|X\|_F$ its Frobenius norm and by $\sigma_{\rm min}^+(X)$ its minimum nonzero singular value.

\subsection{Randomized Kaczmarz\label{sub:rk_intro}}
The modern version of the randomized Kaczmarz method (RK) was proposed by Thomas Strohmer and Roman Vershynin in 2009 \cite{strohmer2009randomized}. In RK, an initial guess $x_0$ is updated iteratively using a two-step procedure:
\newline
\begin{enumerate}
    \item \textit{Sample} a row index $i_t \in \set{1, \ldots, m}$ with probability proportional to $\norm{a_{i_t}}^2$.
    \item \textit{Update} 
\begin{equation}
\label{eq:rk}
x_{t+1} = x_t + a_{i_t} \frac{b_{i_t} - a_{i_t}^\top x_t}{\norm{a_{i_t}}^2}.
\end{equation}
\end{enumerate}

To reduce the sampling cost, it is also possible to run RK with uniform sampling, which is equivalent to running RK on a diagonally reweighted problem \cite{needell2014stochastic}.

The iteration \eqref{eq:rk} has the interpretation of projecting $x_t$ onto the hyperplane of solutions to the individual equation $a_{i_t}^\top x = b_{i_t}$. For consistent systems, $Ax^*=b$, the RK iterates $x_t$ converge linearly to $x^*$ with a rate that depends on the conditioning of $A$. For inconsistent systems, $Ax^* \neq b$, RK converges only to within a finite horizon of the ordinary least-squares solution $x^*$ \cite{needell2010randomized}. In particular, the expected squared error $\expectE \norm{x_t - x^*}^2$ converges to a finite, nonzero value that depends on the conditioning of $A$ and the norm of the residual vector $r = b - Ax^*$. See Theorem 7 of \cite{zouzias2013randomized} for the strongest known convergence bound of this type. 

\subsection{Tail Averaging}
Tail averaging is a common technique for boosting the accuracy of stochastic algorithms \cite{rakhlin2011making, jain2018parallelizing, epperly2024randomizedkaczmarztailaveraging}. Given a series of stochastic iterates $x_0, \ldots, x_T$ and a burn-in time $T_b$, the tail-averaged estimator is given by
\begin{equation}
\label{eq:tail}
\overline{x}_T = \frac{1}{T - T_b} \sum_{t=T_b+1}^{T} x_t.
\end{equation}
The recent work \cite{epperly2024randomizedkaczmarztailaveraging} shows that applying tail averaging to the RK iterates yields exact convergence (with no finite horizon) to the ordinary least-squares solution $x^*$, even for inconsistent systems. Building on these results, we will make use of tail averaging to obtain exact convergence to a \textit{weighted} least-squares solution in the block case. 

\subsection{Randomized Block Kaczmarz\label{sub:rbk_intro}}
Randomized block Kaczmarz (RBK) is an extension of RK which uses blocks of rows to accelerate the convergence and make better use of parallel and distributed computing resources \cite{elfving1980block,needell2014paved}. Like RK, each RBK iteration proceeds in two steps:
\begin{enumerate}
    \item \textit{Sample} a subset $S_t \subset \set{1, \ldots, m}$ of the row indices from some chosen sampling distribution $\rho$.
    \item \textit{Update}
\begin{equation}
x_{t+1} = x_t + A_{S_t}^+ (b_{S_t} - A_{S_t} x).
\label{eq:rbk}
\end{equation}
\end{enumerate}
Here $A_{S_t}^+$ is the Moore-Penrose pseudoinverse of $A_{S_t}$. The iteration \eqref{eq:rbk} has the interpretation of projecting $x_t$ onto the hyperplane of solutions to the block of equations $A_{S_t} x = b_{S_t}$. The RBK method can also be viewed as a ``sketch-and-project'' algorithm; see \cite{gower2015randomized}.

There have been many proposals for how to choose the blocks in the RBK method. One idea is to use a preprocessing step to partition the matrix $A$ into well-conditioned blocks, then sample this fixed set of blocks uniformly \cite{needell2014paved}. It has also been suggested to preprocess the matrix with an incoherence transform, which can make it easier to generate a well-conditioned partition \cite{needell2014paved} or, relatedly, enable RBK with uniform sampling to converge rapidly for the transformed problem \cite{derezinski2024solving}. The work of \cite{derezinski2024solving} also provides an analysis of the RBK algorithm when sampling from a determinantal point process, and other proposals include greedy block Kaczmarz algorithms such as \cite{liu2021greedy} which require evaluating the complete residual vector at each iteration. 

Unfortunately most of these proposals apply only to consistent linear systems, and all of them require at least a preprocessing step in which the entire data matrix must be accessed. Such preprocessing can be prohibitively expensive for very large-scale problems, for which 1) it can be necessary to furnish an approximate solution without processing the entire data set even once (for instance, in the semi-infinite case), and 2) it can be impossible to manipulate more than a tiny subset of the data at a time due to storage constraints. This leads us to the central question of our work: 
\begin{tcolorbox}[colback=green!10!white,colframe=green!60!black]
  \begin{question} \label{ques:main}
      Can RBK, or some variant thereof, be applied to solve inconsistent linear systems without preprocessing the input matrix?
  \end{question} 
\end{tcolorbox}

\revv{To avoid preprocessing, we consider the case that the sampling strategy is chosen \emph{a priori} in that it is not necessarily adapted to the data matrix $A$ or the right-hand side $b$. To simplify the analysis, we focus concretely on the case when the sampling is uniform. This is a natural choice since \eqref{eq:ls} can be viewed as a uniform mixture of $m$ distinct rows. We emphasize that this choice is made for simplicity and that our results are not inherently restricted to the case of uniform sampling. For example,} our results can be readily generalized to both weighted and semi-infinite problems of the form
\begin{equation}
\min_{x \in \Rea^n} \expect{i \sim \mu}{(a_i^\top x - b_i)^2},
\end{equation}
for which the corresponding algorithms would independently sample $k$ indices $i_1, \ldots, i_k \sim \mu$. This is especially relevant for scientific applications, in which data is often continuous and may be sampled using a physics-based probability distribution. For example, in the case of neural network wavefunctions \cite{hermann2023ab}, the sampling distribution is known as the Born probability density.

\subsection{Contributions}
\rev{
In this work, we make substantial progress towards answering our central question \ref{ques:main}. We first provide a new analysis of the RBK algorithm under uniform sampling which shows both the strengths and weaknesses of the algorithm. Inspired by this analysis, we propose and analyze a regularized algorithm, ReBlocK, which behaves more robustly for general types of data.  Concretely, we summarize our contributions as follows:
\begin{enumerate}    
\item \textbf{We demonstrate that the RBK algorithm with uniform sampling (RBK-U) converges in a Monte Carlo sense to a weighted least-squares solution for both consistent and inconsistent linear systems}. In particular, \cref{thm:rbk} shows that convergence is obtained by both expectation values of individual iterates and tail averages of the sequence of iterates. The weight matrix depends on the block size $k$ and the matrix $A$, but not on the vector $b$. Our results provide a new perspective on  RBK in the inconsistent case, going beyond previous analyses that only characterized proximity to the ordinary least-squares solution. 
\item  \textbf{We provide a new perspective on the pitfalls of RBK with uniform sampling.} Concretely, our analysis reveals that when the problem contains many blocks $A_S$ that are nearly singular, \revv{the bias of the weighted least-squares solution} and the variance of the iterates can become arbitrarily large. \revv{See \cref{thm:rbk,thm:no-go} for theoretical barriers and \cref{fig:rbku_fail_examples} for numerical examples that manifest these issues.}
\item  \textbf{We show that RBK-U performs robustly and efficiently for Gaussian data.} Concretely, \cref{thm:rbk_gaussian} shows that convergence is obtained to the ordinary least-squares solution when the data arises from a multivariate Gaussian distribution. Furthermore, in this case both the variance of the iterates and the convergence parameter $\alpha$ can be explicitly bounded. When the singular values of the covariance matrix decay rapidly, the convergence rate can be much faster than mSGD; see \cref{coro:gauss} and \cref{fig:gaussian_results}.
\item  \textbf{We propose and analyze a regularized algorithm to handle more general types of data.} To provide a more general solution, we propose to regularize the RBK iterations as follows:
\begin{equation}
x_{t+1} = x_t + A_{S_t}^\top (A_{S_t} A_{S_t}^\top + \lambda k I)^{-1} (b_{S_t} - A_{S_t} x_t)
\label{eq:reblock_it}.
\end{equation}
We refer to this algorithm as the regularized block Kaczmarz method, or ReBlocK. Similar to RBK-U, we show that ReBlocK with uniform sampling (ReBlocK-U) converges in a Monte Carlo sense to a weighted least-squares solution; see \cref{thm:reblock}. Unlike for RBK-U, \revv{the bias of the weighted least-squares solution} and the variance of the iterates can be controlled in terms of just $\lambda$ and some coarse properties of the data $A,b$. This makes ReBlocK-U much more reliable than RBK-U in practice; see \cref{fig:reblock_success,fig:nn}. As an added benefit,  ReBlocK iterations can be significantly more efficient than RBK iterations; see \Cref{sec:reblock_imp,fig:it_speed}. 
\item \textbf{We demonstrate promising results for using ReBlocK to calculate natural gradient directions.}
Our initial motivation for this work came from the problem of calculating natural gradient directions for deep neural networks. In \Cref{sec:ngd}, we explain how this setting naturally lends itself to the kinds of linear least-squares solvers that we study in this paper. Encouragingly, \cref{fig:nn} shows that ReBlocK-U outperforms mSGD and RBK-U when calculating natural gradient directions for a simple neural function regression task. 
\end{enumerate}
Altogether, our results demonstrate that regularization can improve the robustness of the randomized block Kaczmarz algorithm and unlock new applications to challenging problems for which sophisticated preprocessing schemes are out of reach.  Furthermore, while our focus is on the case of uniform sampling, the same proof techniques can be applied to other sampling distributions. For example, in \cref{app:dpp} we show that sampling from an appropriate determinantal point process can in theory enable tail averages of ReBlocK iterates to converge rapidly to the ordinary least-squares solution for arbitrary inconsistent problems.}

\subsection{Related Works}
Our regularized algorithm, ReBlocK, is closely related to the iterated Tikhonov-Kaczmarz method \cite{de2011modified}. ReBlocK can also be viewed as a specific application of stochastic proximal point algorithms (sPPA) \cite{bertsekas2011incremental,asi2019stochastic,davis2019stochastic} for solving stochastic optimization problems with objective functions of the least-squares type. To ensure the exact convergence of sPPA, diminishing step sizes are required; see for example \cite{puatracscu2021new}. In contrast, our work investigates the convergence of sPPA with a large constant step size for the special case of a least-squares loss. Such an approach allows for aggressive updates throughout the algorithm, potentially improving its practical efficiency. 

\rev{ The ReBlocK algorithm is additionally a special case of the SlimLS algorithm proposed by 
\cite{chung2020sampled} for solving massive linear inverse problems. In fact, their convergence result for the expectation of the ReBlocK iterates, found in result (a)  of their Theorem 3.1, is equivalent to  equation \eqref{eq:reblock_exp} of our \cref{thm:reblock}. However, by identifying the limiting expectation value as a weighted least-squares solution, we provide greater conceptual clarity regarding its meaning. Additionally, our bounds regarding the variance of the ReBlocK iterates, the convergence of tail averages, and the suboptimality of the weighted least-squares solution are all much stronger than the corresponding bounds of \cite{chung2020sampled}. These stronger bounds serve to explain our numerical findings that ReBlocK can be robust and efficient even when $\lambda$ is very small and the problem contains many nearly singular blocks $A_S$, which is the most relevant case and is not explained by the previous analysis. }

Our work is also related to the nearly concurrent paper \cite{derezinski2025}, which introduces Tikhonov regularization into the RBK iterations just like ReBlocK. \cite{derezinski2025} focuses on consistent systems and in this context, the regularization gives rise to optimal convergence rates in the presence of Nesterov acceleration. On the other hand, our work focuses on solving inconsistent systems, in which case the regularization is needed to ensure the stability of the algorithm. 
Exploring the combination of regularization and Nesterov acceleration in the inconsistent case is a promising direction for future work.

\section{Overview of the Analysis\label{sec:rbk}}

Our results are stated in terms of a unified framework that includes RBK, ReBlocK, and even mSGD as special cases. Consider the following iteration: \newline
\begin{enumerate}
    \item \textit{Sample} $S_t \sim \rho$.
    \item \textit{Update } 
\begin{equation} x_{t+1} = x_t + A_{S_t}^\top M(A_{S_t}) (b_{S_t} - A_{S_t} x_t) \label{eq:it_general}. \end{equation}
\end{enumerate}
Here $M(\cdot): \Rea^{k \times n} \rightarrow \Rea^{k \times k}$ takes in the sampled block $A_{S_t}$ and returns a positive semidefinite ``mass'' matrix. RBK is recovered by setting $M(A_{S_t}) = (A_{S_t} A_{S_t}^\top)^+$ and ReBlocK by setting $M(A_{S_t}) = (A_{S_t} A_{S_t}^\top + \lambda k I)^{-1}$. See \cref{alg:gen} for the full procedure with and without tail averaging. 

Once the function $M(A_S)$ is chosen, let
\begin{equation}
W(S)= I_{S}^\top M(A_{S}) I_{S},\;
P(S) = A_{S}^\top  M(A_{S}) A_{S},
\end{equation}
where $I_S$ represents the $k$ rows of the $n \times n$ identity matrix whose indices are in $S$ (recall $|S|=k$). These quantities are natural because they enable the general iteration \eqref{eq:it_general} to be rewritten as 
\begin{equation}
x_{t+1} = (I - P(S_t)) x_t + A^\top W(S_t) b.
\end{equation}
Note that $P(S_t)$ is a projection matrix in the case of RBK.

Next, let
\begin{equation}
\overline{W} = \expect{S \sim \rho}{W(S)},\; \overline{P} =  \expect{S \sim \rho}{P(S)}.
\end{equation}
Additionally, define the weighted solution $\xrh$ and the weighted residual $\rrh$ via
\begin{equation} 
\label{eq:xrh}
\xrh = \argmin{x \in \Rea^n} \norm{Ax-b}_{\overline{W}}^2, \;
\rrh = b - A\xrh.
\end{equation}
When the solution to the weighted problem is not unique, let $\xrh$ refer to the minimal-norm solution.

Using these definitions, we can further rewrite the iteration \eqref{eq:it_general} as
\begin{equation}
\label{eq:it_nice}
x_{t+1} - \xrh = (I - P(S_t)) (x_t - \xrh) + A^\top W(S_t) \rrh.
\end{equation}

Since the normal equations for \eqref{eq:xrh} can be written as $A^\top \overline{W} \rrh = 0$, we observe that the final term in \eqref{eq:it_nice} vanishes in expectation. Indeed, identifying the appropriate weighted solution $\xrh$ to enable the generalized iteration \eqref{eq:it_general} to be written in the form of \eqref{eq:it_nice}, namely as a linear contraction of the error plus a zero-mean additive term, is the main technical innovation underlying our results. From here, the analysis of \cite{epperly2024randomizedkaczmarztailaveraging} can be readily generalized to show that convergence to $\xrh$ is obtained.

\begin{algorithm}[tb]
   \caption{Generalized iterative least-squares solver with optional tail averaging}
   \label{alg:gen}
\begin{algorithmic}
   \STATE {\bfseries Input:} Data $A,b$, block size $k$, initial guess $x_0$
   \STATE {\bfseries Input:} Mass matrix $M(A_S)$, sampling distribution $\rho$
   \STATE {\bfseries Input:} Total iterations $T$, optional burn-in time $T_b$
   \FOR{$t=0$ {\bfseries to} $T-1$}
   \STATE Sample $S_t \sim \rho$
   \STATE $x_{t+1} = x_t + A_{S_t}^\top M(A_{S_t}) (b_{S_t} - A_{S_t} x)$
   \ENDFOR
    \vspace{0.1cm}
   \IF{$T_b$ is not provided} 
           \STATE {\bfseries Return} $x_T$
   \ENDIF
  \vspace{0.1cm}
   \STATE $\overline{x}_{T} = \frac{1}{T - T_b} \sum_{t=T_b+1}^{T} x_t$
   \vspace{0.1cm}
   \STATE {\bfseries Return} $\overline{x}_T$
\end{algorithmic}
\end{algorithm}

\section{RBK without Preprocessing \label{sec:rbku}}
\revv{
We begin by presenting our general convergence bounds for RBK-U. We emphasize that these bounds should not be viewed as a proof that algorithm is effective, but rather as a helpful framework to understand the properties and the potential problems of the algorithm. Indeed we will later demonstrate severe failure modes of RBK-U that are inspired by the analysis (\Cref{thm:no-go}, \Cref{fig:rbku_fail_examples}) and we will ultimately show that regularization is needed in order to eliminate these failure modes and obtain satisfying bounds on the bias and the variance (\Cref{thm:reblock}).}

\begin{theorem}
\label{thm:rbk}
Consider the RBK-U algorithm, namely \cref{alg:gen} with $M(A_S) = (A_S A_S^\top)^+$ and  $\rho = \mathbf{U}(m,k)$. Let $\alpha = \sigma^+_{\rm min}(\overline{P})$ and assume that $x_0 \in \range(A^\top)$. Then the expectation of the RBK-U iterates $x_T$ converges to $\xrh$ as
\begin{equation}
\label{eq:rbk_expect_converge}
\norm{\expect{}{x_T} - \xrh} \leq (1 - \alpha)^T \norm{x_0 - \xrh}.
\end{equation}
Furthermore, the tail averages $\overline{x}_T$ converge to $\xrh$ as
\begin{equation}\label{eq:rbk_ta_converge}
\expectE \norm{\overline{x}_T - \xrh}^2 \leq  \paren{1 - \alpha}^{T_b+1} \norm{x_0 - \xrh}^2 + \frac{2}{\alpha^2 (T-T_b)} \; \revv{V}
\end{equation}
\revv{with $V = \expectE_{S \sim \rho} \|A_S^+ \rrh_S\|^2$. Finally, when the rows of $A$ are in general position the condition number $\kappa(\overline{W})$, the weighted residual $r^{(\rho)}$, the bias $x^{(\rho)} - x^*$, and the variance $V$ are bounded as
\begin{equation}
\begin{aligned}
\kappa(\overline{W}) &\leq k \cdot \max_i \norm{a_i}^2 \cdot \max_S \|A_S^+\|^2, \\
\|r^{(\rho)}\| &\leq \sqrt{\kappa(\overline{W})} \cdot \norm{r}, \\ 
 \|x^{(\rho)} - x^*\| &\leq \sqrt{\kappa(\overline{W}) -1} \cdot 
\norm{A^+} \cdot \norm{r}, \\
V & \leq \max_S \|A_S^+\|^2 \cdot \kappa(\overline{W}) \cdot \frac{k\norm{r}^2}{m}.
\end{aligned}
\end{equation}
}
\end{theorem}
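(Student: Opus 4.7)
The plan is to leverage the reformulated iteration \eqref{eq:it_nice}, which writes the error $e_t := x_t - \xrh$ as a linear contraction $(I - P(S_t)) e_t$ plus an additive term $\eta_t := A^\top W(S_t) \rrh$ that has mean zero by the weighted normal equations $A^\top \overline{W} \rrh = 0$. For the expectation bound \eqref{eq:rbk_expect_converge}, taking expectations of \eqref{eq:it_nice} yields $\expectE[e_{t+1}] = (I - \overline{P})\, \expectE[e_t]$ and hence $\expectE[e_T] = (I - \overline{P})^T e_0$. Since each $P(S_t) = A_{S_t}^\top M(A_{S_t}) A_{S_t}$ maps into $\range(A^\top)$, and the hypothesis $x_0 \in \range(A^\top)$ together with $\xrh \in \range(A^\top)$ (by the minimal-norm convention) gives $e_0 \in \range(A^\top)$, the errors stay in $\range(A^\top)$. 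On this subspace $\overline{P} = A^\top \overline{W} A$ is positive definite with smallest nonzero eigenvalue $\alpha$, so $\norm{(I - \overline{P}) v} \leq (1 - \alpha) \norm{v}$ whenever $v \in \range(A^\top)$, which completes this part.

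For the tail-averaging bound \eqref{eq:rbk_ta_converge}, the plan is to adapt the argument of \cite{epperly2024randomizedkaczmarztailaveraging}. Expanding $\overline{x}_T - \xrh = \frac{1}{T-T_b} \sum_{t > T_b} e_t$, the squared norm decomposes into a bias contribution, governed by $\expectE[e_{T_b+1}]$ and controlled by the contraction already proved, and a variance contribution built from the covariances $\expectE[e_s^\top e_t]$. Writing $e_t$ recursively in terms of the $\eta_{t'}$ for $t' < t$ gives cross-correlations that decay geometrically at rate $(1-\alpha)^{|s-t|}$; summing the resulting geometric series produces the $1/\alpha^2$ factor and shows that the leading order is controlled by the one-step noise second moment. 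For the RBK-U choice of $M$, a direct computation gives $\expectE \norm{\eta_t}^2 = \expectE \norm{A^\top W(S_t) \rrh}^2 = \expectE \norm{A_{S_t}^+ \rrh_{S_t}}^2 = V$, delivering \eqref{eq:rbk_ta_converge}.

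The structural bounds all flow from sandwiching the eigenvalues of $\overline{W}$. Under general position, $A_S A_S^\top$ is invertible for every sampled $S$, so for any $v \in \Rea^m$,
\begin{equation*}
v^\top \overline{W} v = \expect{S \sim \rho}{v_S^\top (A_S A_S^\top)^{-1} v_S}.
\end{equation*}
Bounding the quadratic form above by $\norm{A_S^+}^2 \norm{v_S}^2$ and using $\expectE \norm{v_S}^2 = \frac{k}{m} \norm{v}^2$ yields $\sigma_{\max}(\overline{W}) \leq \frac{k}{m} \max_S \norm{A_S^+}^2$, while bounding it below by $\norm{v_S}^2 / \norm{A_S}^2 \geq \norm{v_S}^2 / (k \max_i \norm{a_i}^2)$ (using $\norm{A_S}^2 \leq \norm{A_S}_F^2$) yields $\sigma_{\min}(\overline{W}) \geq 1/(m \max_i \norm{a_i}^2)$. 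Taking the ratio produces the stated bound on $\kappa(\overline{W})$.

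The remaining three inequalities follow quickly from this. For $\norm{\rrh} \leq \sqrt{\kappa(\overline{W})}\, \norm{r}$, the optimality of $\rrh$ in the weighted norm gives $\norm{\rrh}_{\overline{W}}^2 \leq \norm{r}_{\overline{W}}^2$, and sandwiching each side between $\sigma_{\min}(\overline{W})$ and $\sigma_{\max}(\overline{W})$ finishes the job. For the bias bound, orthogonality of $r$ to $\range(A)$ gives the Pythagorean identity $\norm{\rrh}^2 = \norm{r}^2 + \norm{A(\xrh - x^*)}^2$; combined with the previous inequality this yields $\norm{A(\xrh - x^*)} \leq \sqrt{\kappa(\overline{W}) - 1}\, \norm{r}$, and multiplication by $\norm{A^+}$ converts this to the stated bound since $\xrh - x^* \in \range(A^\top)$. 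Finally, $V \leq \max_S \norm{A_S^+}^2 \cdot \expectE \norm{\rrh_S}^2 = \max_S \norm{A_S^+}^2 \cdot \frac{k}{m} \norm{\rrh}^2$ combines with the bound on $\norm{\rrh}$ to close out the theorem. I expect the main obstacle to be the careful bookkeeping in the tail-averaging step, specifically verifying that the geometric decay of cross-correlations carries through in the weighted setting where the noise $\eta_t$ is driven by $\rrh$ rather than $r$.
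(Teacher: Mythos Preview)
Your treatment of the expectation bound \eqref{eq:rbk_expect_converge} and of all four structural inequalities (the sandwich on $\overline{W}$, the weighted residual, the bias, and the variance) is correct and essentially identical to the paper's.

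For the tail-averaging bound \eqref{eq:rbk_ta_converge}, however, your outline misses the structural feature that drives the paper's argument. The noise term $\eta_t = A_{S_t}^+ \rrh_{S_t}$ is not merely mean-zero: it lies in $\range(A_{S_t}^\top)$, which is the orthogonal complement of $\range(I - P(S_t)) = \ker(A_{S_t})$. Hence $(I - P(S_t)) e_t \perp \eta_t$ holds \emph{pointwise}, and combined with the idempotency of $I - P(S_t)$ this gives the exact identity
\[
\norm{e_{t+1}}^2 \;=\; e_t^\top (I - P(S_t))\, e_t \;+\; \norm{A_{S_t}^+ \rrh_{S_t}}^2.
\]
Taking expectations and iterating yields the per-iterate bound $\expectE \norm{e_t}^2 \leq (1-\alpha)^t \norm{e_0}^2 + V/\alpha$. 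The paper then handles the cross-covariances via the one-line conditional expectation $\expect{}{e_t \mid e_s} = (I - \overline{P})^{t-s} e_s$, giving $\expectE[e_s^\top e_t] \leq (1-\alpha)^{t-s} \expectE\norm{e_s}^2$, into which the per-iterate bound is substituted. One factor of $1/\alpha$ comes from the per-iterate variance horizon, the other from summing the covariance series; your sketch attributes both to the same geometric sum.

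Your plan of expanding each $e_t$ recursively in the $\eta_{t'}$ would involve random operator products $(I - P_{t-1}) \cdots (I - P_{s+1})$ whose second moments are awkward to control directly, and if you treat $\eta_t$ as generic mean-zero noise you would at best fall back on the bias-variance sequence decomposition the paper uses for ReBlocK, incurring extra factors of $2$ that do not match the constants stated in \eqref{eq:rbk_ta_converge}. The paper explicitly flags this orthogonality as what separates the RBK analysis from the ReBlocK one.
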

To our knowledge, this is the first result for inconsistent linear systems that characterizes the exact solution to which the randomized block Kaczmarz iterates converge, albeit in a Monte Carlo sense. The $O(1/T)$ convergence rate for the tail-averaged bound is optimal for row-access methods, and a reasonable default for the burn-in time is $T_b = T/2$; see \cite{epperly2024randomizedkaczmarztailaveraging} for a more thorough discussion of these points. The proof of \cref{thm:rbk}, which takes advantage of an orthogonal decomposition of the error term $x_{t+1} - \xrh$, can be found in \cref{app:rbk}.

Unfortunately, \cref{thm:rbk} does not imply robust convergence for general problems. Indeed, for problems containing nearly singular blocks $A_S$, \revv{the bounds on the condition number, the weighted residual, the bias, and the variance can be arbitrarily large. In fact, these issues are not just defects of the analysis but real problems with RBK-U. To substantiate this, we present a single-parameter family of $3 \times 2$ examples that completely breaks RBK-U:}

\revv{
\begin{example} [No-go for RBK-U]\label{prop:rbk_fail}
Let $\epsilon > 0$ and
\begin{equation}
A = \begin{bmatrix}
0 & 1 \\
1 & \epsilon^2 \\
1  & -\epsilon^2 
\end{bmatrix}, \quad 
b = \begin{bmatrix}
0 \\ 1 + \epsilon \\ 1 - \epsilon
\end{bmatrix}. \label{eq:isosceles}
\end{equation} 
Then $\lim_{\epsilon \rightarrow 0^+} \kappa(A) = \sqrt{2}$ and $\lim_{\epsilon \rightarrow 0^+} \norm{r} = 0$, indicating that the problem is well conditioned and approaches consistency as $\epsilon \rightarrow 0$.
On the other hand, as $\epsilon \rightarrow 0$ the bottom $2\times2$ block approaches singularity and so for $k=2$ the upper bounds of $\kappa(\overline{W})$, $\norm{r^{(\rho)}}$, $\norm{x^{(\rho)} - x^*}$, and $V$ from \Cref{thm:rbk} all diverge in this limit. In fact, this divergence is necessary as it can be verified that for RBK-U with $k=2$, it holds \newline
\begin{equation}
\lim_{\epsilon \rightarrow 0^+} \kappa(\overline{W}) = 
\lim_{\epsilon \rightarrow 0^+} \|r^{(\rho)}\| =
\lim_{\epsilon \rightarrow 0^+} \|x^{(\rho)} - x^* \| = \lim_{\epsilon \rightarrow 0^+} V = \infty.
\end{equation}
\label{thm:no-go}
\end{example}
The fact that the bias \textit{and} variance of the RBK-U iterates diverges while the underlying problem is approaching consistency represents a clear and catastrophic failure of the algorithm. We can understand this failure by noting that geometrically, this problem represents three constraints in two dimensions which together form an isosceles triangle with vertices at $(1 -\epsilon, 0)$, $(1 + \epsilon,0)$, and $(1, 1/\epsilon)$. When running RBK-U with $k=2$, the algorithm jumps between the three vertices of the triangle with equal probabilities, and as a result $x^{(\rho)}$ coincides with the centroid of the triangle. The problem is that as $\epsilon \rightarrow 0$ the top vertex walks off to infinity and drags the centroid with it, while the least-squares solution $x^*$ converges to $(1,0)$. This insight can be used to directly show the divergence of the bias, the residual, and the variance.  The residual bound $\|r^{(\rho)}\| \leq \sqrt{\kappa(\overline{W})} \cdot \norm{r}$ then implies the divergence of $\kappa(\overline{W})$ since $\|r^{(\rho)}\|$ diverges while $\norm{r}$ goes to zero. The situation is depicted visually in \cref{fig:rbku_fail}.
}

\begin{figure}[htbp]
  \centering
  \begin{tikzpicture}[scale=0.85, line cap=round, line join=round]
    \definecolor{niceblue}{RGB}{30,90,200}
    \definecolor{nicegreen}{RGB}{20,170,60}
    \definecolor{errorred}{RGB}{255,0,0}
    \tikzset{
      constraint/.style={line width=0.9pt, draw=niceblue},
      motionarrow/.style={-{Latex[length=2.5mm]}, line width=0.9pt, draw=nicegreen},
      errorarrow/.style={-{Latex[length=2.5mm]}, line width=0.9pt, draw=errorred},
      bluelabel/.style={text=niceblue}
    }

    \path[use as bounding box] (-3,-1) rectangle (3,6);
    \begin{scope}
      \clip (-3,-1) rectangle (3,6);

      \draw[constraint] (-10,0) -- (10,0);     
      \draw[constraint] (-10,55) -- (10,-45);  
      \draw[constraint] (-10,-45) -- (10,55);  
    \end{scope}

    \path (-10,0) -- (10,0)
      node[pos=0.50, below=6pt, bluelabel] {$\bm a_{1}^{\top}\bm x = b_{1}$};
    \path (0,5) -- (1,0)
      node[pos=0.60, sloped, above=6pt, bluelabel] {$\bm a_{2}^{\top}\bm x = b_{2}$};
    \path (0,5) -- (-1,0)
      node[pos=0.60, sloped, above=6pt, bluelabel] {$\bm a_{3}^{\top}\bm x = b_{3}$};

    \coordinate (Apex) at (0,5);
    \coordinate (Left) at (-1,0);
    \coordinate (Right) at (1,0);
    \coordinate (xstar) at (0,{5/14});
    \coordinate (xr)    at (0,{5/3});

    \foreach \pt in {(Apex),(Left),(Right)}{\fill[niceblue] \pt circle (1.6pt);}

    \fill[nicegreen] (xstar) circle (1.4pt) node[above=1pt] {$x^{\ast}$};
    \fill[errorred] (xr)    circle (1.4pt) node[below=1pt] {$x^{(\rho)}$};

    \draw[errorarrow] (xr)    -- ++(0,0.8);
    \draw[motionarrow] (xstar) -- ++(0,-0.5);
    \draw[motionarrow] (Apex)  -- ++(0,1.0);
    \draw[motionarrow] (Left)+(0.01,0.05) -- ++(0.7, 0.05);
    \draw[motionarrow] (Right)+(0.01,0.05) -- ++(-0.7, 0.01);
  \end{tikzpicture}
  \caption{\revv{Visual depiction of the $3 \times 2$ linear system \cref{eq:isosceles} that causes RBK-U to fail catastrophically when $k=2$. As $\epsilon \rightarrow 0$ the top vertex walks off to infinity and the recovered solution $x^{(\rho)}$ goes with it, while the true solution $x^*$ approaches the $x$-axis.}} 
  \label{fig:rbku_fail}
\end{figure}

\revv{It is worth noting that such problems are not limited to any particular batch size such as $k=2$. More generally we can view the skinny isosceles triangle as a stretched $2$-simplex and indeed, the same arguments can be readily generalized to show that for arbitrary $k$, a similarly stretched $k$-simplex suffices to break RBK-U with a block size of $k$. The problem is also not directly related to the use of uniform sampling, and similar problems could arise for any \emph{a priori} sampling distribution.}

\subsection{Numerical demonstration\label{sec:rbk_fail_num}}
\revv{We now demonstrate that the issues highlighted by \Cref{thm:no-go} can occur for more realistic problems. To do so, we set $m=10^5$, $n=10^2$ and construct two inconsistent problems whose columns are discretized representations of continuous functions,
leading the matrices A to contain many nearly singular blocks of rows. We apply both minibatch SGD (mSGD) and RBK-U to the resulting problems, applying tail averaging to each algorithm to observe convergence beyond the variance horizon.
The results in \cref{fig:rbku_fail_examples} confirm that RBK-U performs poorly for these problems, just as our theory suggests. In fact  RBK-U is not even able to reliably attain a relative error of $1.0$ in either case, meaning the algorithm performs worse than simply guessing $x^* = 0$. For these and other experiments in \Cref{sec:rbku,sec:gauss_data,sec:reblock} we choose the burn-in time $T_b$ to demonstrate the different phases of convergence as clearly as possible.}

\revv{We note that for such realistic examples, it is difficult to pin down exactly how large the bias $x^{(\rho)} - x^*$ is. The reason is that the variance $V$ is so large that running the algorithm to convergence would be prohibitively expensive. Nonetheless it is evident that RBK-U is not effective for these problems, even after incorporating tail averaging.} The code for all of our experiments can be found at \url{https://github.com/ggoldsh/block-kaczmarz-without-preprocessing}, and further details on these particular experiments can found in \Cref{app:synth}.

\begin{figure}[htbp]
\centering
{\includegraphics[width=0.4\textwidth]{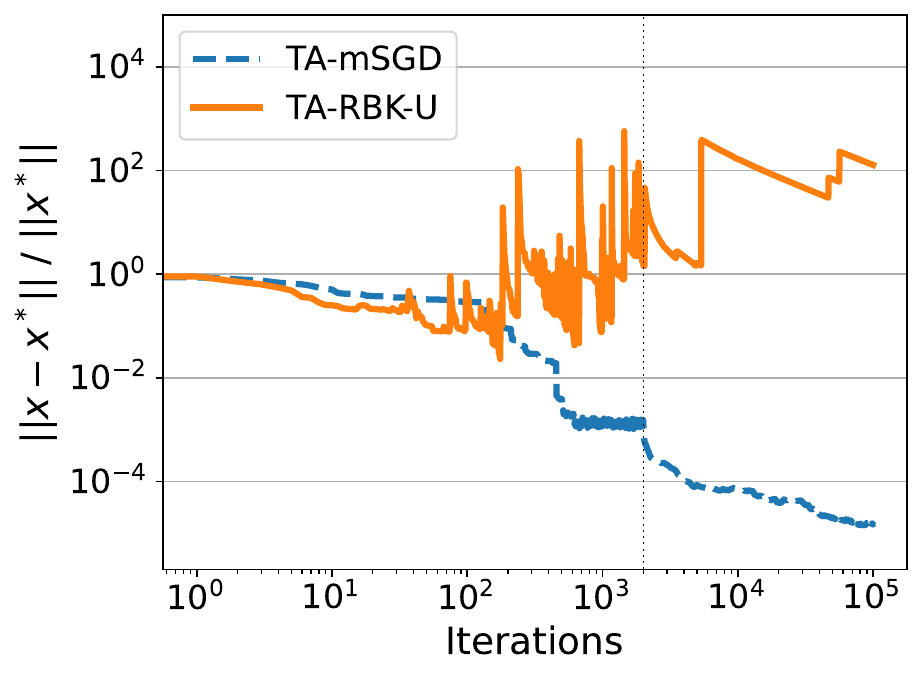}}
\quad
{\includegraphics[width=0.4\textwidth]{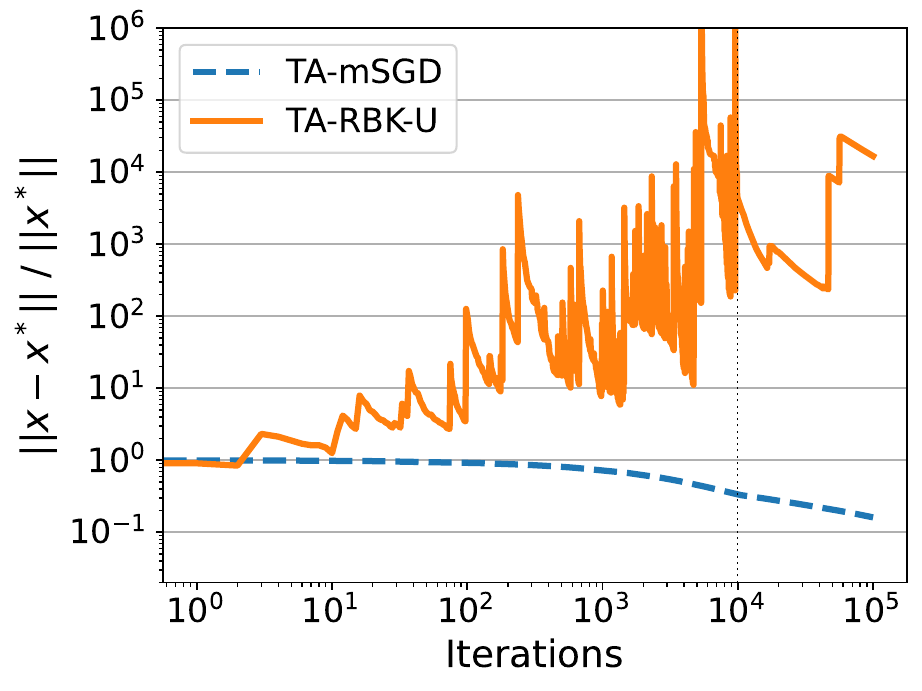}}
\caption{\revv{Failure of RBK-U} for two inconsistent problems with many nearly singular blocks, with mild singular value decay (left) and rapid singular value decay (right). The vertical dotted line indicates the burn-in time, before which results are shown for individual iterates.}
\label{fig:rbku_fail_examples}
\end{figure}

\subsection{Implementation Details}

To stably implement the RBK iteration \eqref{eq:rbk}, we employ a \revv{QR-based} least-squares solver to calculate $A_{S_t}^+ (b_{S_t} - A_{S_t} x).$  The most expensive part of this procedure is the \revv{QR decomposition}, which has an asymptotic cost of $O(nk^2)$. This cost is greater than the cost of an mSGD iteration, which is $O(nk)$.

\section{Linear Least-squares with Gaussian Data\label{sec:gauss_data}}

\revv{Given the problems with RBK-U for general data, it is natural to wonder if there are more restricted categories of benign problems for which the bias and variance of RBK-U can be controlled. Previous works such as \cite{derezinski2024fine,derezinski2024solving,derezinski2025} have characterized the performance of RBK-U for consistent systems that have been preprocessed with a randomized Hadamard transform. After preprocessing the resulting system is highly unlikely to contain nearly singular blocks or other problematic coherence properties, leading to strong convergence guarantees for RBK-U. 
However, it is not clear why matrices that have not been preprocessed would resemble such preprocessed matrices. 
Thus, instead of directly extending these results to the inconsistent case, we instead consider inconsistent problems whose rows arise from a Gaussian distribution, which seem more likely to appear in practice. 
Furthermore, to simplify the analysis we consider the infinite version of the problem in which each uniformly selected row is drawn directly from the corresponding Gaussian distribution. 
This enables us to exploit an elegant connection between uniform block samples of Gaussian data and dense Gaussian sketches of arbitrary data.
It should be possible to extend our results to provide high probability bounds for finite problems using more involved techniques such as those of \cite{derezinski2024fine,derezinski2024solving,derezinski2025}, but we consider this beyond the scope of the current work.
}

Now, consider the statistical least-squares problem 
\begin{equation} \label{eq:stat_ls}
x^* = \argmin{x \in \Rea^n} \expect{[a^\top  \;  b] \sim \mathcal{N}(0,Q)}{(a^\top x - b)^2}.
\end{equation}
where $Q \in \mathbb{R}^{(n+1) \times (n+1)}$ is a positive semidefinite covariance matrix. Furthermore, let $Q_n$ be the top left $n \times n$ block of $Q$, assume for simplicity that $Q_n$ is full rank, and let $Q_n = L_n L_n^\top$ be its Cholesky decomposition. Denote the singular values of $L_n$ by $\sigma_1 \ge \dots \geq \sigma_{n}>0$. 
\begin{theorem}
\label{thm:rbk_gaussian}
\revv{Consider the equivalent of the RBK-U algorithm for the statistical least-squares problem \cref{eq:stat_ls}, for which each sample $A_{S_t}, b_{S_t}$ is drawn by generating $k$ independent samples from $\mathcal{N}(0,Q)$}. Then the results of \cref{thm:rbk} apply \revv{and it holds $x^{(\rho)} = x^*$}.  In addition, the convergence parameter $\alpha$ satisfies
\begin{equation} \label{eq:alpha-bound}
\revv{\alpha} \geq C_{n,k} \max\left\{ \frac{k\sigma_n^2}{\|L_n\|_F^2},  \max_{2\leq \ell < k} \frac{(\ell-1)\sigma_n^2}{ \sum_{i \geq k-\ell-1} \sigma_i^2} \right \}
\end{equation}
with $C_{n,k} \rightarrow 1$ as $n \rightarrow \infty$ for fixed $k$. Furthermore, as long as $k \geq 6$ and $\operatorname{rank}(L_n) \geq 2k$, the variance term satisfies
\begin{equation} \label{eq:variance-gauss}
\revv{V} \leq \frac{200}{\sigma_{2k}^2} \cdot \expect{[a^\top  \;  b] \sim \mathcal{N}(0,Q)}{(a^\top x^* - b)^2}.
\end{equation}
\end{theorem}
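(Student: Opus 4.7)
The plan is to establish the three conclusions in turn, exploiting the identity $A_S = G L_n^\top$ with $G \in \mathbb{R}^{k \times n}$ having i.i.d.\ standard normal entries: each uniform Gaussian row block is realized as a dense Gaussian sketch of $L_n^\top$. To see $x^{(\rho)} = x^*$, use the Gaussian conditional decomposition to write $b = a^\top x^* + \epsilon$ with $\epsilon$ independent of $a$ and $\expectE[\epsilon^2] = \expect{[a^\top \; b] \sim \mathcal{N}(0,Q)}{(a^\top x^* - b)^2}$. Then $b_S = A_S x^* + \epsilon_S$ with $\epsilon_S$ independent of $A_S$, so substituting into \cref{eq:it_general} yields
\begin{equation*}
x_{t+1} - x^* = (I - P(S_t))(x_t - x^*) + A_{S_t}^\top M(A_{S_t}) \epsilon_{S_t}.
\end{equation*}
Since $\expectE[\epsilon_S]=0$ and $\epsilon_S \perp A_S$, the additive term has mean zero, identifying $x^* = x^{(\rho)}$ and allowing the proof of \cref{thm:rbk} from \cref{app:rbk} to transfer to the statistical setting essentially unchanged.

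For the lower bound on $\alpha = \sigma^+_{\rm min}(\overline{P})$, diagonalize via the SVD $L_n = U \Sigma V^\top$ and use rotational invariance of $G$ to reduce to $\overline{P} = U \Sigma \, \expectE[G^\top (G \Sigma^2 G^\top)^{-1} G] \, \Sigma U^\top$. A sign-flip argument ($G_{\ell i} \mapsto -G_{\ell i}$) shows that the inner expectation is diagonal, so $\alpha = \min_i \sigma_i^2 z_i$ for certain positive $z_i$ normalized by $\sum_i \sigma_i^2 z_i = \operatorname{tr}(\overline{P}) = k$. The first bound $k\sigma_n^2 / \|L_n\|_F^2$ follows from a Jensen-style operator inequality rooted in $\expectE[G \Sigma^2 G^\top] = \|L_n\|_F^2 I_k$, combined with a Sherman--Morrison identity for the contribution of each individual column of $G$. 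The more refined bound requires splitting $G$ into two independent Gaussian blocks aligned with the top versus bottom singular directions of $L_n$ and exploiting the ``free'' projection contributed by the top block onto a generic $(\ell-1)$-dimensional subspace; the constant $C_{n,k}\to 1$ absorbs the small losses incurred when converting exact Gaussian integrals into Jensen-type bounds.

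For the variance bound, independence of $A_S$ and $\epsilon_S$, together with $r^{(\rho)}_S = \epsilon_S$ and the identity $(A_S^+)^\top A_S^+ = (A_S A_S^\top)^{-1}$, gives
\begin{equation*}
V = \expectE[\epsilon^2] \cdot \expectE[\operatorname{tr}((G \Sigma^2 G^\top)^{-1})].
\end{equation*}
Splitting $\Sigma^2$ into its top-$2k$ and remaining diagonal blocks and dropping the PSD tail yields $G \Sigma^2 G^\top \succeq \sigma_{2k}^2 \, G_{[2k]} G_{[2k]}^\top$, where $G_{[2k]} \in \mathbb{R}^{k \times 2k}$ consists of the first $2k$ columns of $G$. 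The matrix $G_{[2k]} G_{[2k]}^\top$ is a $k \times k$ Wishart with $2k$ degrees of freedom whose inverse has expected trace $k/(k-1)$; for $k \geq 6$ this is at most $6/5$, safely inside the claimed factor of $200$.

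The main obstacle is the second, refined $\alpha$ bound: the first $\alpha$ bound and the variance bound both reduce to fairly direct Gaussian or Wishart calculations, but extracting the ratio $(\ell-1)\sigma_n^2/\sum_{i \geq k-\ell-1}\sigma_i^2$ requires tracking the contribution of each block of singular directions separately, and naive Jensen-style arguments discard exactly the structure that makes this bound sharp when the singular values of $L_n$ decay rapidly.
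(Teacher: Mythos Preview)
Your argument for $x^{(\rho)} = x^*$ via the conditional Gaussian decomposition $b = a^\top x^* + \epsilon$ with $\epsilon$ independent of $a$ is correct and is essentially the same device the paper uses: the paper obtains the identical independence structure through the Cholesky factor $L = \begin{bmatrix} L_a \\ \ell_b^\top \end{bmatrix}$ and the observation that $\tilde{r} = \ell_b - L_a^\top x^*$ lies along $e_{n+1}$, so that $Z_t^\top \tilde{r}$ depends only on the last column of $Z_t$, which is independent of $A_{S_t} = Z_t^1 L_n^\top$. Your variance argument is also correct and in fact sharper than the paper's: you use the exact inverse-Wishart identity $\expectE\bigl[\operatorname{tr}((G_{[2k]} G_{[2k]}^\top)^{-1})\bigr] = k/(k-1)$, whereas the paper instead bounds $\|(Z_t^1 L_n^\top)^+ Z_t^2 \tilde{r}_{n+1}\|^2 \leq \sigma_{\min}^{-2}(Z_t^1 L_n^\top)\,\|Z_t^2 \tilde{r}_{n+1}\|^2$ and then controls $\expectE[\sigma_{\min}^{-2}]$ via a concentration lemma from \cite{rebrova2021block}, which is where the constant $200$ comes from.

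The genuine gap is the $\alpha$ bound. You try to derive it from first principles via sign-flip diagonalization, a Jensen/Sherman--Morrison argument for the first term, and a block-splitting argument for the refined term, and you explicitly flag the refined bound as ``the main obstacle'' without completing it. The paper does none of this: it simply observes that $P_t = (Z_t^\top L_a^\top)^+ Z_t^\top L_a^\top$ is exactly the projector arising in block Gaussian sketch-and-project applied to the matrix $L_a$ (which has the same singular values as $L_n$), and then invokes Theorem~3.1 of \cite{derezinski2024sharp} verbatim to obtain \eqref{eq:alpha-bound}, including the constant $C_{n,k}\to 1$. So the part you identify as the hardest step is not something you need to prove at all---it is an existing result about Gaussian sketches that the paper cites wholesale. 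Replacing your incomplete derivation with that citation closes the gap.
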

In summary, in the case of Gaussian data the ordinary least-squares solution is recovered, the convergence rate $\alpha$ improves at least linearly with the block size $k$, and the variance of the iterates is bounded.  In addition, the following corollary, which is based on Corollary 3.4 of \cite{derezinski2024sharp}, shows that RBK converges much faster  than mSGD for polynomially decaying singular values.

\begin{corollary} \label{coro:gauss}
    Consider the setting of \cref{thm:rbk_gaussian} with fixed \(k \leq n/2\), and assume the \(L_n\) factor has polynomial spectral decay $\sigma_i^2 \leq i^{-\beta} \sigma_1^2$  for all $i$ and some $\beta > 1$. Then the convergence parameters of RBK and mSGD satisfy
   \begin{equation}
    \alpha^{\rm RBK} \geq C k^\beta \tfrac{\sigma_n^2}{\|L_n\|_F^2}, \; 
    \alpha^{\rm mSGD} \leq k \tfrac{ \sigma_n^2}{\|L_n\|_F^2}
    \end{equation}    
    for some constant \(C =  C(\beta) > 0\).
\end{corollary}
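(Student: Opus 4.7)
The plan is to treat the two bounds separately: the RBK lower bound follows from the spectral estimate \eqref{eq:alpha-bound} of \cref{thm:rbk_gaussian} with a well-chosen $\ell$, and the mSGD upper bound follows from a direct calculation of $\overline{P}$ together with the standard stability limit for mini-batch SGD on a least-squares objective.

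For the RBK side, I would select $\ell = \lceil k/2 \rceil$ in the inner maximum of \eqref{eq:alpha-bound}. This forces the numerator $(\ell-1)\sigma_n^2$ to be of order $k\sigma_n^2$, while the denominator $\sum_{i \geq k-\ell-1} \sigma_i^2$ becomes a spectral tail beginning near index $k/2$. An integral comparison under the polynomial decay hypothesis yields
\begin{equation*}
\sum_{i \geq k/2 - 1} \sigma_i^2 \;\leq\; \sigma_1^2 \sum_{i \geq k/2 - 1} i^{-\beta} \;\leq\; C_1(\beta)\, k^{1-\beta}\, \sigma_1^2,
\end{equation*}
so the ratio in \eqref{eq:alpha-bound} is at least of order $k^\beta \sigma_n^2 / \sigma_1^2$. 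Applying the trivial inequality $\sigma_1^2 \leq \|L_n\|_F^2$, and using that $C_{n,k}$ is bounded below away from zero under the hypothesis $k \leq n/2$ (since $k$ is fixed and $C_{n,k} \to 1$ as $n \to \infty$), this delivers $\alpha^{\rm RBK} \geq C(\beta)\, k^\beta \sigma_n^2 / \|L_n\|_F^2$.

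For the mSGD side, I would view the algorithm as the special case of the generalized iteration \eqref{eq:it_general} with $M(A_S) = (\eta/k)\,I$ for some step size $\eta > 0$. Since each block consists of $k$ i.i.d.\ samples from $\mathcal{N}(0,Q)$, we have $\mathbb{E}[A_S^\top A_S] = k\,Q_n$, and hence $\overline{P} = \eta\, Q_n$ and $\alpha^{\rm mSGD} = \eta\,\sigma_n^2$. The classical stability analysis of mini-batch SGD on a least-squares loss \cite{jain2018parallelizing} caps the admissible step size at $\eta \leq k/\|L_n\|_F^2$, which reflects the effective inverse-smoothness of the averaged stochastic gradient estimator. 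This immediately yields the claimed upper bound $\alpha^{\rm mSGD} \leq k\,\sigma_n^2/\|L_n\|_F^2$.

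The main obstacle is the RBK step: picking an $\ell$ that simultaneously makes the numerator large and controls the tail-sum denominator, and then tracking the $\beta$-dependent constants with enough care that the final prefactor depends only on $\beta$ (and not on $k$, $n$, or the fine structure of the spectrum). The mSGD estimate, by contrast, collapses to essentially one line once the admissible step size has been invoked.
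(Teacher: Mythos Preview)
Your mSGD bound is fine and matches the paper's argument: both reduce to the step-size cap from \cite{jain2018parallelizing}, which limits $\eta$ (equivalently, the per-iteration contraction) to scale at most linearly in $k$, yielding $\alpha^{\rm mSGD} \leq k\,\sigma_n^2/\|L_n\|_F^2$.

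The RBK bound has a genuine gap in how you handle $C_{n,k}$. Your tail-sum calculation with $\ell = \lceil k/2 \rceil$ is correct and does produce the factor $k^\beta \sigma_n^2/\|L_n\|_F^2$ inside the maximum of \eqref{eq:alpha-bound}. The problem is the prefactor: Theorem~\ref{thm:rbk_gaussian} only asserts $C_{n,k} \to 1$ as $n \to \infty$ for \emph{fixed} $k$. From this you can conclude, for each individual $k$, that $C_{n,k}$ is bounded below over $n \geq 2k$, but the resulting lower bound depends on $k$. That would make your final constant $C = C(\beta,k)$ rather than $C = C(\beta)$, which defeats the point of the corollary since the comparison $k^\beta$ versus $k$ is only meaningful if the constants are $k$-uniform. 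You flag this as ``the main obstacle'' but do not actually close it.

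The paper does not try to extract a $k$-uniform bound on $C_{n,k}$ from Theorem~\ref{thm:rbk_gaussian} at all. Instead it invokes Corollary~3.4 of \cite{derezinski2024sharp} directly, which states that under polynomial spectral decay of order $\beta > 1$ and $k \leq n/2$, the dependence of the constant on $n$ and $k$ can be eliminated and the rate bound $\alpha^{\rm RBK} \geq C(\beta)\,k^\beta \sigma_n^2/\|L_n\|_F^2$ holds outright. Your self-contained route via \eqref{eq:alpha-bound} is appealing, but to complete it you would need to either cite that same external result for the constant, or reproduce the sharper analysis that makes $C_{n,k}$ uniform in $k$ under the decay hypothesis.
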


Similarly, the faster convergence rate of RBK over mSGD extends to exponentially decaying singular values.
The proofs of \cref{thm:rbk_gaussian,coro:gauss}, provided in \cref{app:rbk}, rely on a connection between RBK-U for Gaussian data and block Gaussian Kaczmarz for arbitrary data. These results generalize the techniques of  \cite{derezinski2024sharp} to the case of inconsistent linear systems, improving upon the results of  \cite{rebrova2021block} in terms of both the convergence rate and the variance.

\subsection{Numerical Demonstration\label{sec:rbk_num}}
We verify our theoretical results for the RBK-U and mSGD algorithms on two problems with Gaussian data, with results in \cref{fig:gaussian_results}. We apply tail averaging to each algorithm to observe convergence beyond the variance horizon. As expected, tail-averaged RBK-U (TA-RBK-U) converges much more rapidly than tail-averaged mSGD (TA-mSGD) in the presence of fast singular value decay.  \rev{We additionally explore the effect of choosing different burn-in periods for TA-RBK-U, with results in \cref{fig:tburns}. As expected, optimal performance is attained by setting $T_b$ to be just slightly after the iterates reach their fixed convergence horizon}. More details on these experiments can be found in \Cref{app:synth}.

\begin{figure}[htbp]
\centering
{\includegraphics[width=0.4\textwidth]{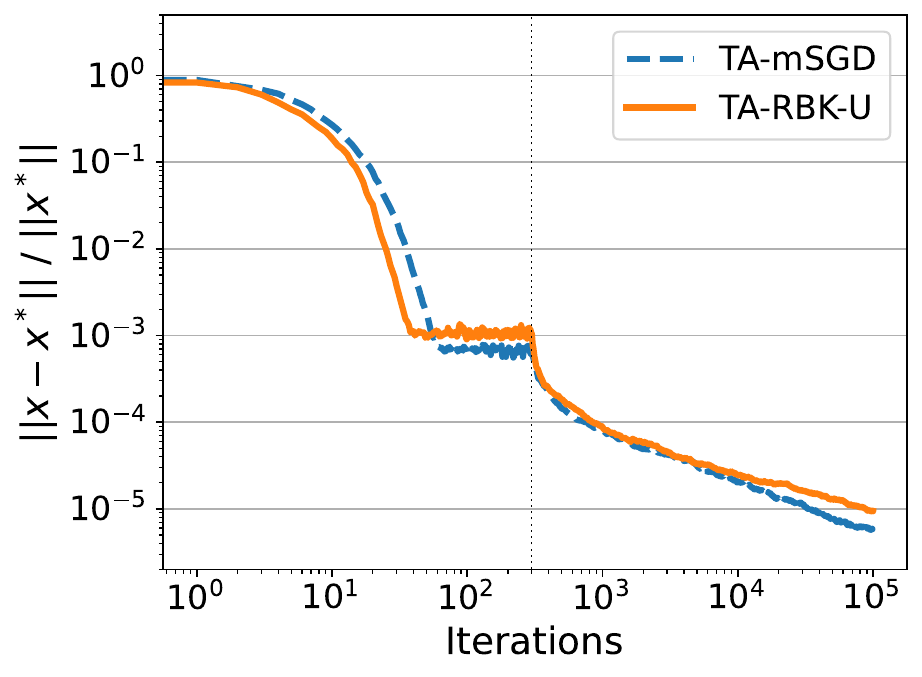}}
\quad
{\includegraphics[width=0.4\textwidth]{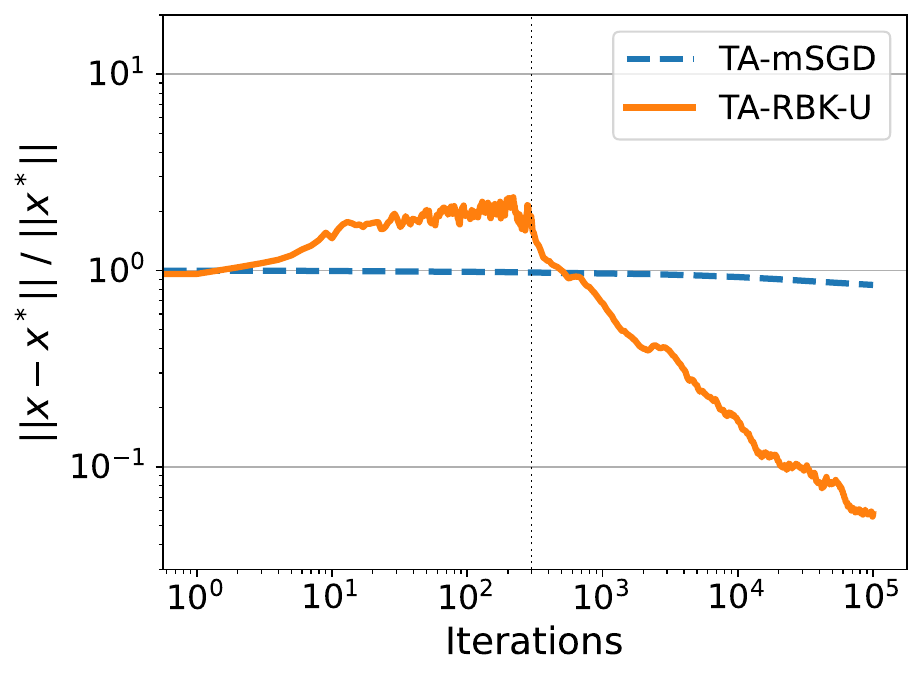}}
\caption{Comparison of methods on two problems with Gaussian data, with no singular value decay (left) and rapid singular value decay (right). The vertical dotted line indicates the burn-in time, before which results are shown for individual iterates.}
\label{fig:gaussian_results}
\end{figure}

\begin{figure}[htbp]
\centering
{\includegraphics[width=0.4\textwidth]{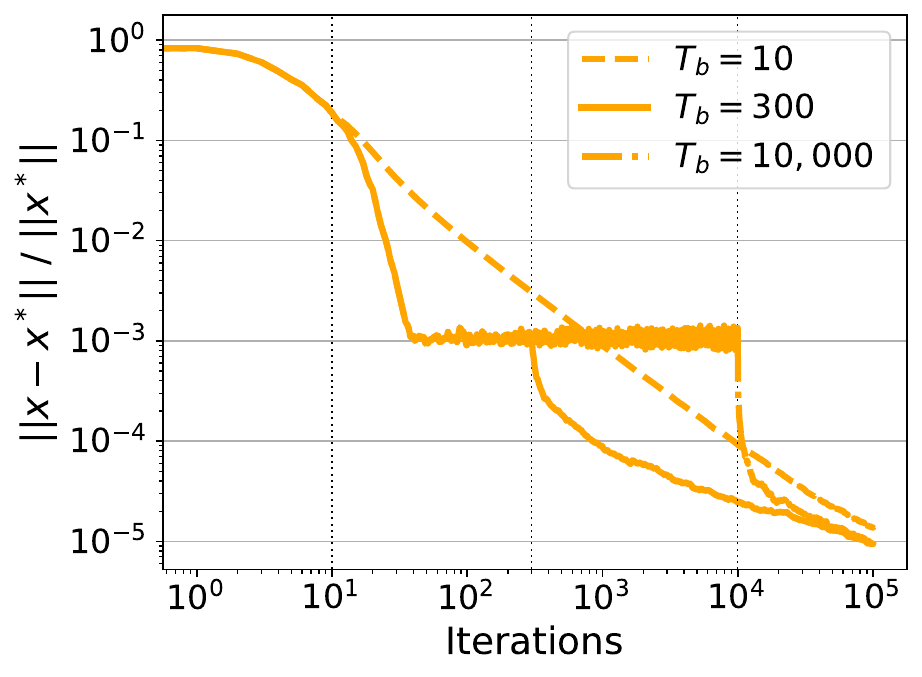}}
\caption{Behavior of TA-RBK-U with different burn-in periods $T_b$. The problem is the same as in the left panel of \cref{fig:gaussian_results}.}
\label{fig:tburns}
\end{figure}

\section{Robustness through Regularization\label{sec:reblock}}
To address the shortcomings of RBK-U in the case of general data, we propose to incorporate regularization into the RBK algorithm. A natural way to do this is to replace the RBK iteration with a stochastic proximal point iteration \cite{asi2019stochastic}, namely
\begin{equation}
x_{t+1} = \argmin{x \in \Rea^n} \left[\norm{A_{S_t} x - b_{S_t}}^2 + \lambda k \norm{x - x_t}^2 \right].
\end{equation}
This minimization problem leads to the closed form
\begin{equation}
x_{t+1} = x_t + A_{S_t}^\top (A_{S_t} A_{S_t}^\top + \lambda k I)^{-1} (b_{S_t} - A_{S_t} x_t),
\end{equation}
and that the RBK iteration \eqref{eq:rbk} is recovered in the limit $\lambda \rightarrow 0$. To incorporate a mild regularization without significantly slowing down the convergence, we propose to use a small, constant value of $\lambda$ throughout the algorithm. We suggest $\lambda=0.001$ as a practical default value, but to obtain optimal performance the value will need to be tuned on a case-by-case basis; see \cref{fig:lambdas} for some numerical results for different values of $\lambda$. We refer to the resulting scheme as the regularized block Kaczmarz method, or ReBlocK.

\begin{theorem}
\label{thm:reblock}
Consider the ReBlocK-U algorithm, namely \cref{alg:gen} with $M(A_S) = (A_S A_S^\top + \lambda k I)^{-1}$ and $\rho = \mathbf{U}(m,k)$. Let  $\alpha = \sigma_{\rm min}^+(\overline{P})$ and assume $x_0 \in \range(A^\top)$. Then the expectation of the ReBlocK iterates $x_T$ converges to $\xrh$ as
\begin{equation}
\label{eq:reblock_exp}
\norm{\expect{}{x_T} - \xrh} \leq (1 - \alpha)^T \norm{x_0 - \xrh}.
\end{equation}
Furthermore, the tail averages $\overline{x}_T$ converge to $\xrh$ as 
\begin{equation}
\label{eq:reblock_converge}
 \expectE \norm{\overline{x}_T - \xrh}^2 \leq 2 (1 - \alpha)^{T_b+1} \norm{x_0 - \xrh}^2 
 + \revv{\frac{4}{\alpha^2 (T-T_b)} V}.
\end{equation}
\revv{with $V = \expectE_{S \sim \rho} \|A_S^\top (A_S A_S^\top + \lambda k I)^{-1} \rrh_S\|^2$. Finally, the condition number $\kappa(\overline{W})$, the weighted residual $r^{(\rho)}$, the bias $x^{(\rho)} - x^*$, and the variance $V$ are bounded as
\begin{equation}
\begin{aligned}
\kappa(\overline{W}) &\leq 1 + \frac{1}{\lambda} \cdot  \max_i \|a_i\|^2, \\
\|r^{(\rho)}\| &\leq \sqrt{\kappa(\overline{W})} \cdot \norm{r}, \\ 
\|x^{(\rho)} - x^*\| &\leq \sqrt{\kappa(\overline{W}) - 1} \cdot \norm{A^+} \cdot \| r\| \leq  \frac{1}{\sqrt{\lambda}} \cdot \kappa(A) \cdot \|r\|, \\
V & \leq \frac{1}{4 \lambda} \cdot \kappa(\overline{W}) \cdot \frac{\norm{r}^2}{m}. 
\end{aligned}
\label{eq:reblock_bounds}
\end{equation}}
\end{theorem}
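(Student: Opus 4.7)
The plan is to use the general rewriting derived in \Cref{sec:rbk}, namely
\begin{equation*}
x_{t+1} - \xrh = (I - P(S_t))(x_t - \xrh) + A^\top W(S_t) \rrh,
\end{equation*}
together with $\expect{}{A^\top W(S_t) \rrh} = A^\top \overline{W} \rrh = 0$, which is the normal equation for the weighted problem. Since both $P(S)$ and $A^\top W(S)$ map $\Rea^n$ into $\range(A^\top)$, the assumption $x_0 \in \range(A^\top)$ ensures $x_t - \xrh \in \range(A^\top)$ for all $t$, and I would work entirely in this subspace, on which $\overline{P} \succeq \alpha I$ by definition of $\alpha$. Taking expectations of the recurrence gives $\expect{}{x_{t+1} - \xrh} = (I - \overline{P}) \expect{}{x_t - \xrh}$, and since $\norm{I - \overline{P}} \leq 1 - \alpha$ on $\range(A^\top)$, iterating yields \cref{eq:reblock_exp}.

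For the tail-average bound \cref{eq:reblock_converge}, I would adapt the argument used for the RBK version \cref{eq:rbk_ta_converge}, which itself follows \cite{epperly2024randomizedkaczmarztailaveraging}. Letting $e_t := x_t - \xrh$, the recursion reads $e_{t+1} = (I - P(S_t)) e_t + z_t$ with $z_t := A^\top W(S_t) \rrh$ a mean-zero noise term of second moment $\expect{}{\norm{z_t}^2} = V$. Decomposing $\overline{x}_T - \xrh$ into a bias piece controlled by the contraction and a martingale-like variance piece yields a bias contribution of order $(1 - \alpha)^{T_b+1} \norm{x_0 - \xrh}^2$ and a variance contribution of order $V / (\alpha^2 (T - T_b))$. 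The additional factors of $2$ and $4$ relative to \cref{eq:rbk_ta_converge} arise because $I - P(S)$ is here only a strict contraction rather than an orthogonal projection, so the Pythagorean identity used for RBK must be replaced by a Young-type inequality.

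The remaining bounds \cref{eq:reblock_bounds} are purely spectral. For $\kappa(\overline{W})$, since $A_S A_S^\top \succeq 0$ we have $M(A_S) \preceq (\lambda k)^{-1} I$, hence $W(S) \preceq (\lambda k)^{-1} I_S^\top I_S$, and so $\overline{W} \preceq (\lambda k)^{-1} \expect{S}{I_S^\top I_S} = (\lambda m)^{-1} I$; combining with $\lambda_{\max}(A_S A_S^\top) \leq \norm{A_S}_F^2 \leq k \max_i \norm{a_i}^2$ gives $\overline{W} \succeq (m(\lambda + \max_i \norm{a_i}^2))^{-1} I$, and the stated bound on $\kappa(\overline{W})$ follows by taking the ratio. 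The residual bound uses the optimality of $\xrh$: $\norm{\rrh}_{\overline{W}}^2 \leq \norm{r}_{\overline{W}}^2$, combined with the eigenvalue bounds, gives $\norm{\rrh}^2 \leq \kappa(\overline{W}) \norm{r}^2$. For the bias, $A^\top r = 0$ implies $r \perp (r - \rrh) \in \range(A)$, so Pythagoras yields $\norm{r - \rrh}^2 = \norm{\rrh}^2 - \norm{r}^2 \leq (\kappa(\overline{W}) - 1) \norm{r}^2$, and $\norm{\xrh - x^*} = \norm{A^+(r - \rrh)} \leq \norm{A^+} \sqrt{\kappa(\overline{W}) - 1} \norm{r}$; the second bias estimate follows by inserting the $\kappa$ bound and using $\max_i \norm{a_i} \leq \norm{A}$. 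Finally for the variance, setting $B = A_S A_S^\top$ yields
\begin{equation*}
\norm{A_S^\top (B + \lambda k I)^{-1} \rrh_S}^2 = (\rrh_S)^\top (B + \lambda k I)^{-1} B (B + \lambda k I)^{-1} \rrh_S,
\end{equation*}
and the scalar estimate $\mu / (\mu + \lambda k)^2 \leq 1/(4 \lambda k)$, maximized at $\mu = \lambda k$, gives $V \leq (4 \lambda k)^{-1} \expect{S}{\norm{\rrh_S}^2} = (4 \lambda m)^{-1} \norm{\rrh}^2$, after which the residual bound finishes the argument.

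The main obstacle is the tail-averaging bound: transporting the variance-decomposition argument of \cite{epperly2024randomizedkaczmarztailaveraging} from the orthogonal-projection setting of RBK to the strictly contractive setting of ReBlocK requires care in handling the cross terms that arise from the averaging. The spectral bounds on $\kappa(\overline{W})$, $\norm{\rrh}$, the bias, and $V$ then reduce to elementary eigenvalue and scalar-calculus estimates once the general framework of \Cref{sec:rbk} is in hand.
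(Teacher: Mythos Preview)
Your proposal is correct and follows essentially the same route as the paper: the expectation bound via the contraction of $I-\overline{P}$ on $\range(A^\top)$, the tail-average bound via a bias--variance decomposition adapted from \cite{epperly2024randomizedkaczmarztailaveraging} (the paper carries this out by first bounding $\expectE\|x_t-\xrh\|^2$ through auxiliary bias and variance sequences $d_s,v_s$ and then applying a covariance-summing tail-average lemma), and the spectral bounds on $\kappa(\overline{W})$, $\|\rrh\|$, the bias, and $V$ are identical to yours, including the scalar maximization $\mu/(\mu+\lambda k)^2\le 1/(4\lambda k)$. The only points you leave implicit that the paper makes explicit are the verifications $P(S)\preceq I$ and $\range(\overline{P})=\range(A^\top)$, both of which are routine.
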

Note that the values of $\xrh$, $\alpha$, \revv{and $V$} here differ from those in \Cref{sec:rbk} due to the different choice of $M(A_S)$ used by ReBlocK. The advantage relative to RBK-U is that ReBlocK-U is able to control \revv{the condition number of the weight matrix, the weighted residual, the bias of the weighted least-squares solution, and the variance of the iterates} in terms of the reciprocal of the regularization parameter $\lambda$. As a result, the algorithm converges robustly even when the problem contains many nearly singular blocks $A_S$. 

\revv{As a concrete example of the benefits of regularization, recall the isosceles triangle problem \cref{eq:isosceles} for which the bias and variance of RBK-U both diverge as $\epsilon \rightarrow 0$. In contrast, for ReBlocK-U we have the following corollary:
\begin{corollary}
Consider the isosceles triangle problem \cref{eq:isosceles} from \Cref{thm:no-go}. For ReBlocK-U with any fixed $\lambda >0$, the bounds \cref{eq:reblock_bounds} imply 
\begin{equation}
\lim_{\epsilon \rightarrow 0^+} \|r^{(\rho)}\| = \lim_{\epsilon \rightarrow 0^+} \|x^{(\rho)} - x^*\| = \lim_{\epsilon \rightarrow 0^+} V = 0
\end{equation}
since the row norms $a_i$ remain bounded and the residual norm $\norm{r}$ vanishes as $\epsilon \rightarrow 0$.
\end{corollary}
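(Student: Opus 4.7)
The plan is to instantiate the four bounds of \cref{eq:reblock_bounds} on the concrete matrix $A$ and right-hand side $b$ from \cref{eq:isosceles}, then verify each right-hand side either stays finite or tends to zero as $\epsilon \to 0^+$. With $\lambda > 0$ held fixed, the only data-dependent factors I need to track are $\max_i \norm{a_i}^2$, $\kappa(A)$, and $\norm{r}$; the limits of the latter two are already supplied in the statement of \cref{thm:no-go}.

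First I would dispatch the condition number of the weight matrix. A direct inspection of \cref{eq:isosceles} gives $\max_i \norm{a_i}^2 = 1 + \epsilon^4$, so the first inequality of \cref{eq:reblock_bounds} yields $\limsup_{\epsilon \to 0^+} \kappa(\overline{W}) \leq 1 + 1/\lambda$, a finite constant depending only on $\lambda$. Plugging this bound together with $\norm{r} \to 0$ into the residual and variance inequalities of \cref{eq:reblock_bounds} immediately produces $\norm{r^{(\rho)}} \to 0$ and $V \to 0$.

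For the bias, I would use the second form of the inequality in \cref{eq:reblock_bounds}, namely $\norm{x^{(\rho)} - x^*} \leq \frac{1}{\sqrt{\lambda}}\, \kappa(A)\, \norm{r}$. Since $\kappa(A) \to \sqrt{2}$ is bounded and $\norm{r} \to 0$, this collapses to $\norm{x^{(\rho)} - x^*} \to 0$ without needing any further analysis of $\norm{A^+}$ or $\overline{W}$.

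No step here is technically subtle: the entire argument is bookkeeping on top of \cref{thm:reblock}, consisting of identifying the bounded quantities ($\max_i \norm{a_i}^2$ and $\kappa(A)$), identifying the vanishing quantity ($\norm{r}$), and combining them through the four inequalities of \cref{eq:reblock_bounds}. The main conceptual point, already established by \cref{thm:reblock} itself, is that the $\lambda$-regularization decouples $\kappa(\overline{W})$ from the singularity of any individual block $A_S$; the corollary merely instantiates this decoupling on the concrete failure example \cref{eq:isosceles} to show that ReBlocK-U heals the pathology that breaks RBK-U.
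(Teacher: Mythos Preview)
Your proposal is correct and matches the paper's own justification, which simply invokes the bounds \eqref{eq:reblock_bounds} together with the observations that $\max_i\|a_i\|$ stays bounded and $\|r\|\to 0$. You have merely spelled out the bookkeeping in a bit more detail (e.g.\ computing $\max_i\|a_i\|^2 = 1+\epsilon^4$ and using the $\kappa(A)$ form of the bias bound), but the argument is the same.
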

In other words, introducing even a small amount of regularization prevents the algorithm from failing catastrophically. We confirm this effect numerically by directly calculating the bias for various values of $\epsilon$ and $\lambda$, with results in \Cref{fig:thetas}. Note that for fixed $\epsilon$, the bias decreases monotonically with $\lambda$, as expected.}

\begin{figure}[htbp]
    \centering
    \includegraphics[width=0.4\linewidth]{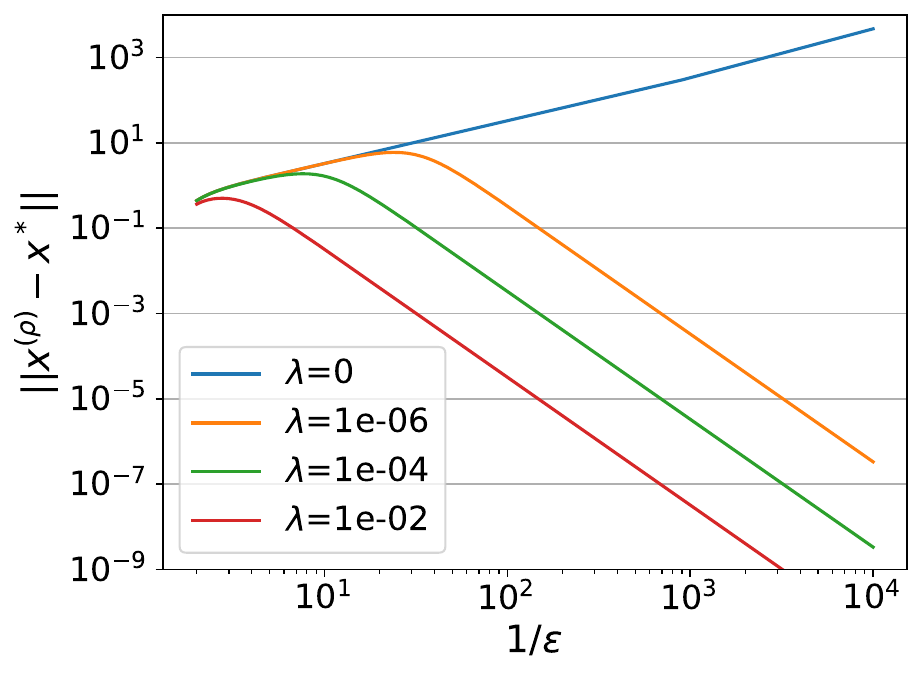} \quad
    \includegraphics[width=0.4\linewidth]{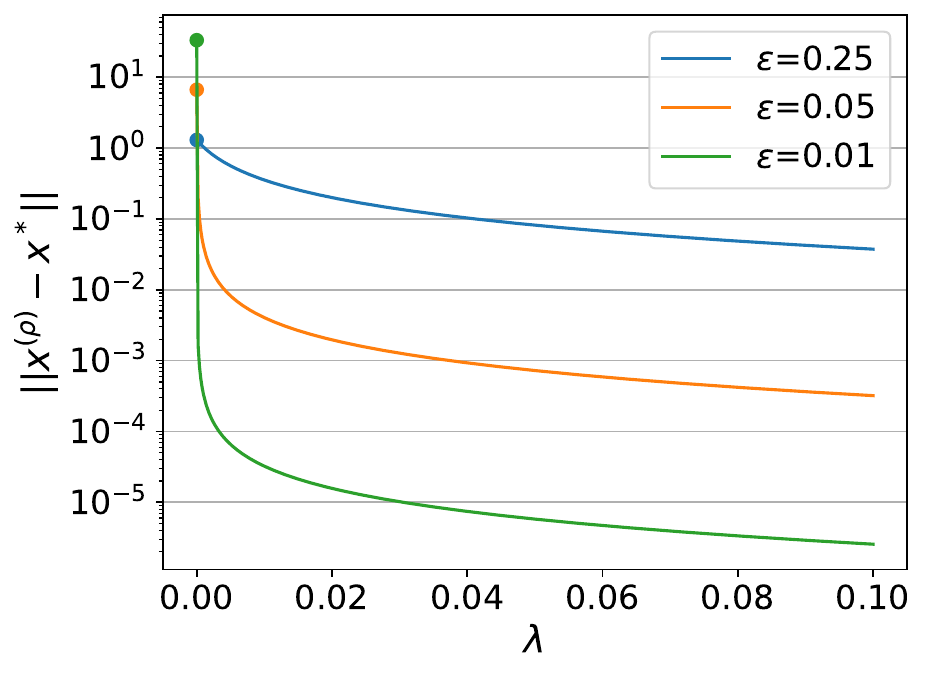}
    \caption{\revv{Size of the bias $x^{(\rho)} - x^*$ for ReBlocK-U with $k=2$ on the isosceles triangle problem of \cref{thm:no-go}, for various values of $\epsilon$ and $\lambda$. Left: for $\lambda=0$ the bias grows without bound as $1/\epsilon \rightarrow \infty$, whereas for any $\lambda > 0$ the bias reaches a fixed maximum value and then decays to zero. Right: for any fixed $\epsilon$ the bias decreases monotonically with $\lambda$.}}
    \label{fig:thetas}
\end{figure}
It is worth noting that these advantages could also be attained by truncating the small singular values in the RBK iteration \eqref{eq:rbk}. However, the ReBlocK iteration is additionally justified by the fact that it is cheaper to implement than the RBK iteration; see \Cref{sec:reblock_imp} and \cref{fig:it_speed} for further discussion of this point. The proof of \cref{thm:reblock} is provided in \cref{app:reblock}. Relative to the proof of \cref{thm:rbk}, the proof of \cref{thm:reblock} is more complicated because the ReBlocK iteration does not lead to an orthogonal decomposition of the error term $x_{t+1} - \xrh$. Instead, the proof relies on a bias-variance decomposition inspired by \cite{defossez2015averaged,jain2018parallelizing,epperly2024randomizedkaczmarztailaveraging}, which also leads to the extra factors of $2$ in the convergence bound. 

We are not yet able to analyze the convergence rate parameter $\alpha$ of ReBlocK-U, even in the case of Gaussian data, as to our knowledge there is no existing work bounding the quality of a regularized Gaussian sketch. However, a fast rate of convergence to the ordinary least-squares solution can be shown when sampling from an appropriate determinantal point process; see \cref{app:dpp}.
Additionally, convergence to the ordinary least-squares solution is obtained for a broader class of \textit{noisy} linear least-squares problems; see \cref{app:noisy}. 

\subsection{Numerical Demonstration\label{sec:reblock_num}}
\revv{We now demonstrate that the benefits of regularization can translate to more realistic types of problems. To do so we revisit the examples from \Cref{fig:rbku_fail_examples}, for which RBK-U exhibited severe instabilities. We observe in \Cref{fig:reblock_success} that ReBlocK-U with $\lambda=1e-3$ is stable for both of these problems} and converges much faster than TA-mSGD in the presence of rapid singular value decay.
To understand the impact of the inconsistency on the algorithms, we re-run the case with rapid singular value decay with varying levels of inconsistency, with results in \Cref{fig:inconsistency}. 
We find qualitatively similar results in every case. Further details on these experiments can be found in \Cref{app:synth}.

\begin{figure}[htbp]
\centering
{\includegraphics[width=0.4\textwidth]{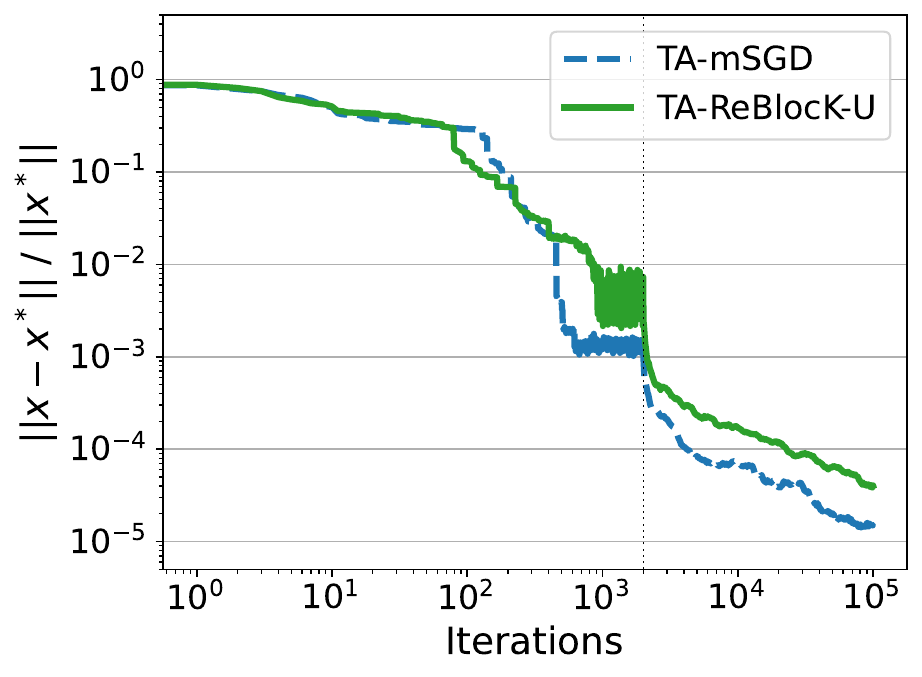}}
\quad
{\includegraphics[width=0.4\textwidth]{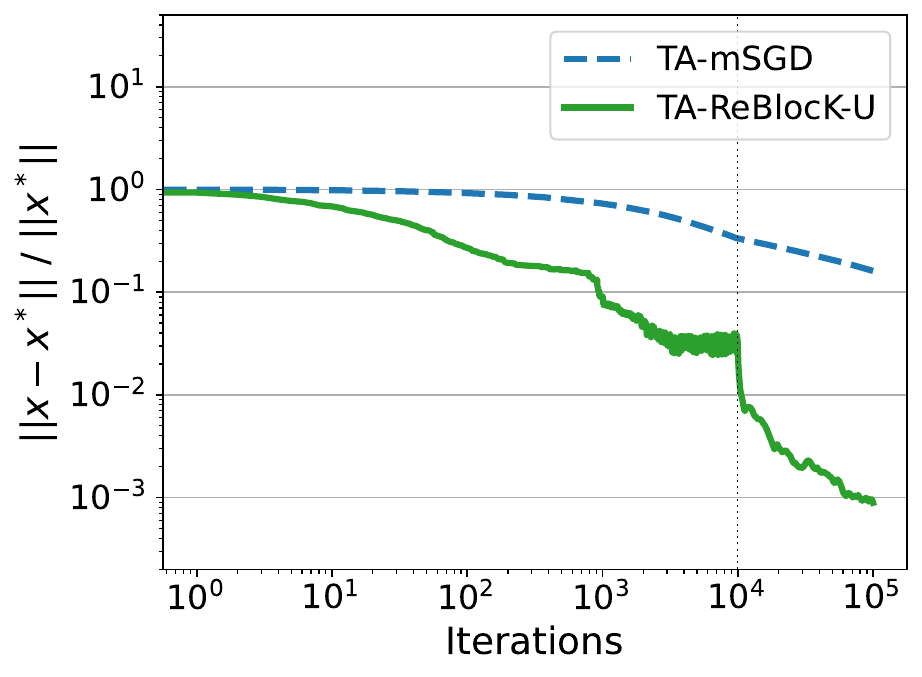}}
\caption{\revv{Performance of ReBlocK-U for the same two inconsistent problems from \Cref{fig:rbku_fail_examples}. These problems both contain} many nearly singular blocks, with mild singular value decay (left) and rapid singular value decay (right). The vertical dotted line indicates the burn-in time, before which results are shown for individual iterates.}
\label{fig:reblock_success}
\end{figure}

\begin{figure}[htbp]
\centering
{
\includegraphics[width=0.4\textwidth]{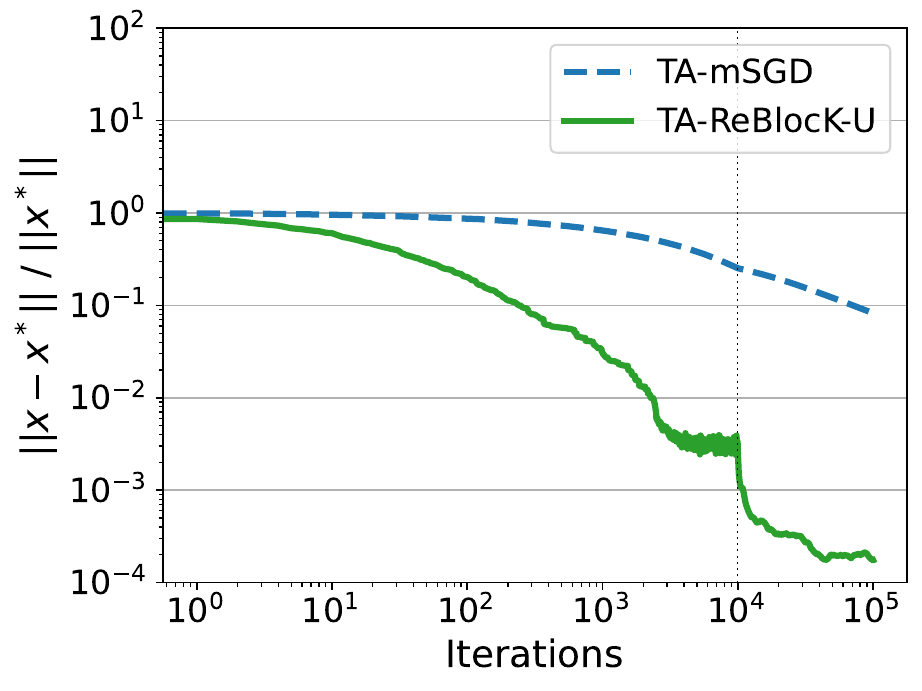} \quad \includegraphics[width=0.4\textwidth]{reblock_succeed_fast.pdf} \\
\includegraphics[width=0.4\textwidth]{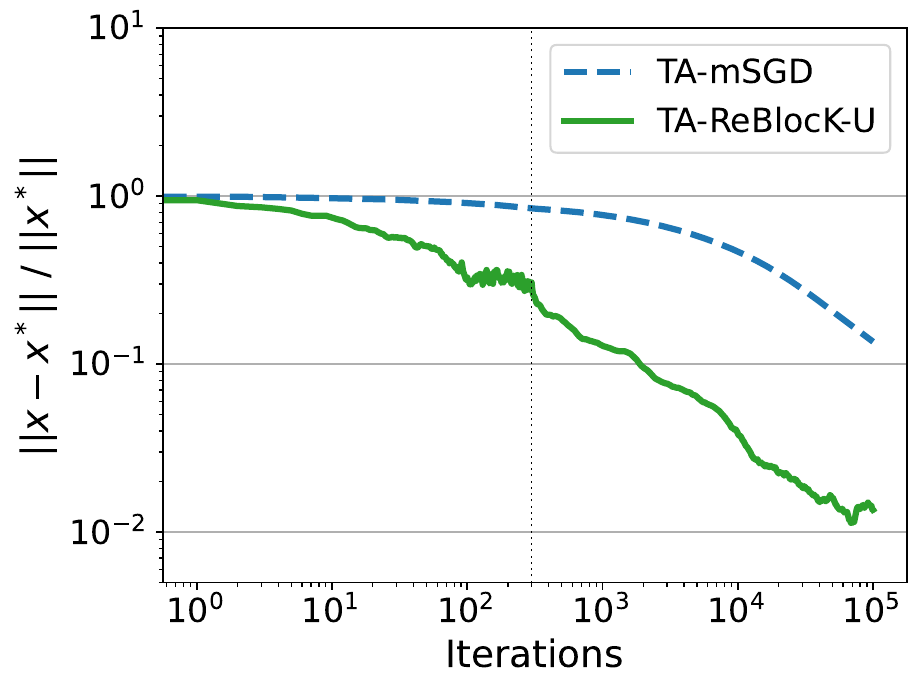}
}
\caption{Exploration of the effect of the inconsistency on the behavior of the algorithms. The variance of each entry of the noise vector is set to $1e-6$ (top left), $1e-4$ (top right), and $1e-2$ (bottom). The top right panel here is identical to the right panel of \Cref{fig:reblock_success}, and is reproduced here only for comparison with the top left and bottom}
\label{fig:inconsistency}
\end{figure}

\subsection{Implementation Details\label{sec:reblock_imp}}
To implement the ReBlocK iterations \eqref{eq:reblock_it}, we directly calculate the $k \times k$ matrix $A_{S_t} A_{S_t}^\top + \lambda k I$ and then use a Cholesky-based linear system solver to calculate $(A_{S_t} A_{S_t}^\top + \lambda k I)^{-1} (b_{S_t} - A_{S_t} x)$. This computation is stable as long as $\lambda$ is not chosen to be too small. Using this approach, the most expensive part of the ReBlocK iteration is calculating $A_{S_t} A_{S_t}^\top$, which has an asymptotic cost of $O(nk^2)$ just like RBK. Nonetheless, in practice the matrix-matrix multiplication for ReBlocK \revv{can have a much smaller preconstant than the QR decomposition} used for RBK. For example, in the experiments of \Cref{sec:ngd}, ReBlocK iterations are over twenty-five times faster than RBK iterations, as reported in \cref{fig:it_speed}.  \revv{We emphasize that this gap is not related to the use of the Cholesky decomposition but instead arises because ReBlocK avoids doing any matrix decomposition at all directly on the $k \times n$ matrix $A_S$.} For the largest problems, the ReBlocK iterations could be further accelerated using iterative solvers; see for example Section 4.1 of \cite{derezinski2025}.

\section{Natural Gradient Optimization\label{sec:ngd}}
\newcommand{\dd}{{\rm d}}

Our original motivation for this work comes from the problem of training deep neural networks using natural gradient descent \cite{amari1998natural}, which is based on an efficient natural gradient induced by a problem-dependent Riemannian metric. 
Natural gradient descent has been studied extensively in the machine learning community; see \cite{martens2015optimizing, ren2019efficient, martens2020new}. Furthermore, there is increasing evidence that natural gradient methods can improve the accuracy when training neural networks to solve physical equations. See \cite{muller2023achieving,dangel2024kronecker} for applications to physics-informed neural networks and \cite{pfau2020ab,schatzle2023deepqmc} for applications to neural network wavefunctions. 

To elucidate the structure of the natural gradient direction, consider the function learning problem
\begin{equation}
\label{eq:func_learn}
\min_\theta L(\theta),\quad
L(\theta) \coloneqq \frac{1}{2} \int_\Omega  (f_\theta(s) - f(s))^2 \dd s,
\end{equation}
where $\Omega \subset \Rea^d$ is the domain of the functions, $f:\Omega \rightarrow \Rea$ is the target function and $f_\theta: \Omega \rightarrow \Rea$ is a function represented by a neural network with parameters $\theta \in \Rea^n$. The standard definition of natural gradient descent for this problem is
\begin{equation}
\theta \gets \theta - \eta G_N,\quad G_N \coloneqq F^{-1} \nabla_\theta L(\theta), \label{eq:ngd_pre}
\end{equation}
where $\eta$ is the step size, $F$ is the Fisher information matrix
\begin{equation}
F = \int_\Omega \nabla_\theta f_\theta(s)  \nabla_\theta f_\theta(s)^\top \dd s = J^\top J,
\end{equation}
and the Euclidean gradient $\nabla_\theta L(\theta)$ takes the form
\begin{equation}
\nabla_\theta L(\theta) = \int_\Omega  \nabla_\theta f_\theta(s)  (f_\theta(s) - f(s)) \dd s = J^\top [f_\theta -f].
\end{equation}
Here $J: \Rea^n \rightarrow \Rea^\Omega$ represents the Jacobian, which is a linear operator from the space of parameters to the space of real-valued functions on $\Omega$. $J^\top$ represents the adjoint.

Calculating  the natural gradient direction $G_N$ using \eqref{eq:ngd_pre} requires a linear solve against the $n \times n$ matrix $F$, which is very challenging in realistic settings when $n \geq 10^6.$ This has motivated the development of approximate schemes such as \cite{martens2015optimizing}. An alternative approach is to reformulate $G_N$ using the structure of $F$ and $\nabla_\theta L(\theta)$:
\begin{align}
G_N &= (J^\top J)^{-1} J^\top [f_\theta -f] \\
&= \argmin{x \in \Rea^n} \norm{J x - [f_\theta - f]}^2, \label{eq:ngd_ls}
\end{align}
where the norm in the final expression is the $L_2$-norm in function space. This least-squares formulation has been pointed out for example by \cite{martens2020new, chen2024empowering,goldshlager2024kaczmarz}, with the work of Chen and Heyl empowering major advances in the field of neural quantum states. A major goal of the current work is to provide a more solid foundation for the development of natural gradient approximations along these lines.

The natural way to access the data when training a neural network is to sample a set of points in the domain $\Omega$ and evaluate the target function $f$, the network outputs $f_\theta$, and the network gradients $\nabla_\theta f_\theta$ at the sampled points. This is precisely equivalent to sampling a small subset of the rows of \eqref{eq:ngd_ls}, which motivates the consideration of row-access least-squares solvers for calculating natural gradient directions. Furthermore, both empirical evidence from scientific applications \cite{park2020geometry,wang2022and} and theoretical evidence from the literature on neural tangent kernels \cite{bietti2019inductive,ronen2019convergence,cao2019towards} suggest that $J$ should be expected to exhibit fast singular value decay, motivating the possibility that RBK and ReBlocK could converge rapidly when solving \eqref{eq:ngd_ls}. Note that these observations translate straightforwardly from the simple function learning problem \eqref{eq:func_learn} to the realistic problems of training physics-informed neural networks or neural network wavefunctions.

\revv{It is also worth noting that while we focus on the problem of calculating natural gradient directions, similar linear systems also arise when using neural networks to simulate the time evolution of either differential equations \cite{bruna2024neural} or quantum systems \cite{schmitt2020quantum}. When simulating time dynamics it is essential to solve each linear system to high accuracy. This provides additional motivation for developing block row access methods that can solve such linear systems accurately without preprocessing.  }
\subsection{Numerical Demonstration}

\begin{figure}[htbp]
\centering
\centering
{\includegraphics[width=0.4\textwidth]{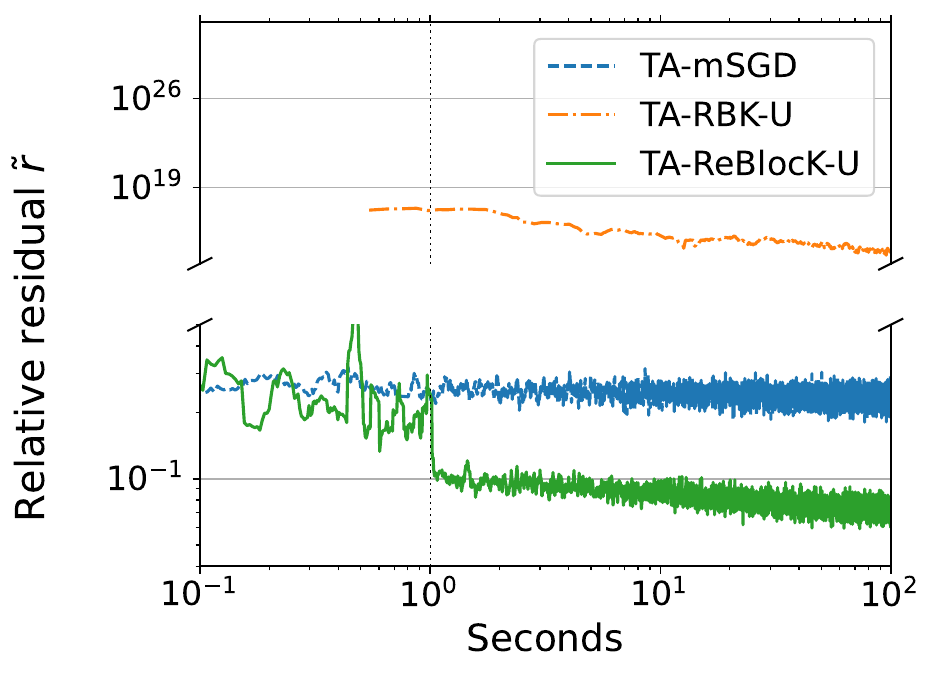}}
{\includegraphics[width=0.4\textwidth]{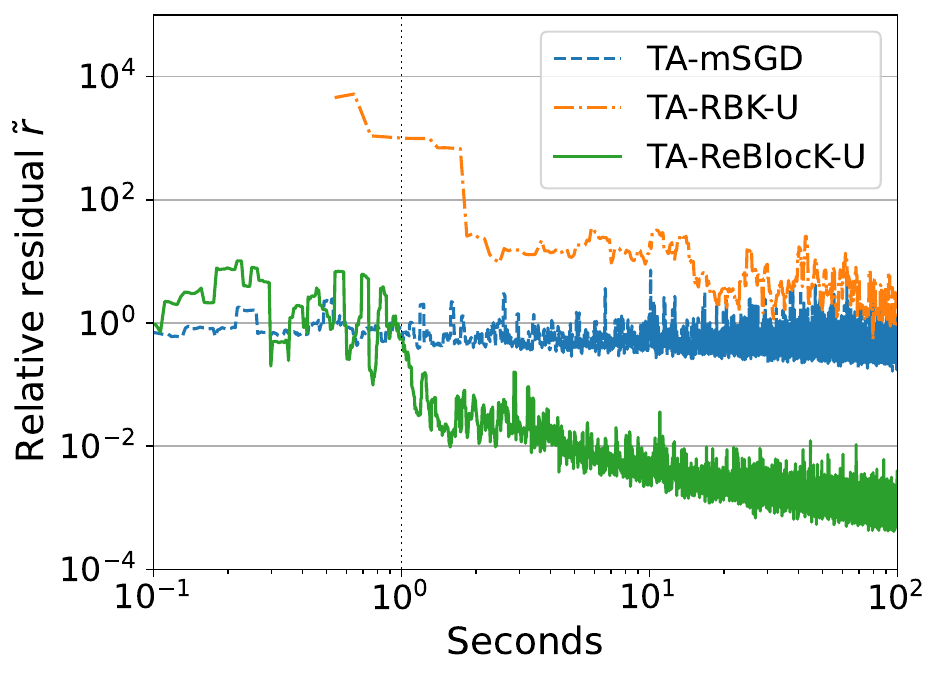}}\\
{\includegraphics[width=0.4\textwidth]{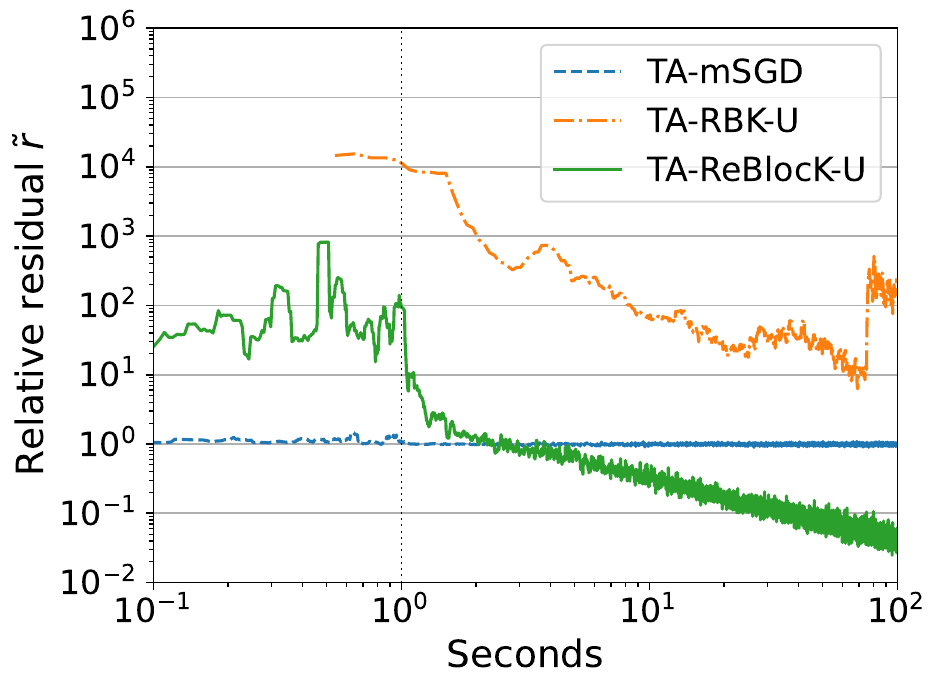}}
\caption{Comparison of methods for calculating natural gradient directions for a small neural network. The network parameters $\theta$ are taken from three snapshots of a single training run, with one snapshot from the ``pre-descent'' phase before the loss begins to decrease (top left), one snapshot from the ``descent'' phase during which the loss decreases rapidly (top right), and one snapshot from the ``post-descent'' phase when the decay rate of the loss has slowed significantly (bottom). The algorithms are measured in terms of their progress towards reducing the relative residual $\tilde{r} = \norm{J x - [f_\theta - f]} / \norm{f_\theta - f}$ for the least-squares problem \eqref{eq:ngd_ls}, which measures how well the function-space update direction $J x$ agrees with the function-space loss gradient $f_\theta - f$. The burn-in time is set to $T_b \approx T/100$ in each case, as indicated by the vertical dotted line.}
\label{fig:nn}
\end{figure}

\begin{figure}[htbp]
\centering
\centering
{\includegraphics[width=0.4\textwidth]{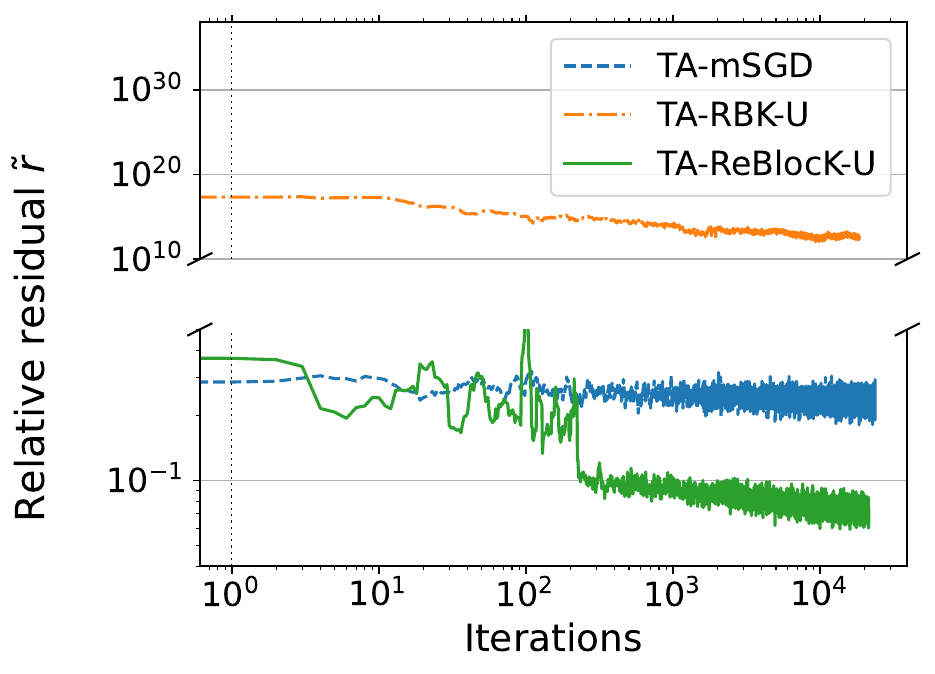}}
{\includegraphics[width=0.4\textwidth]{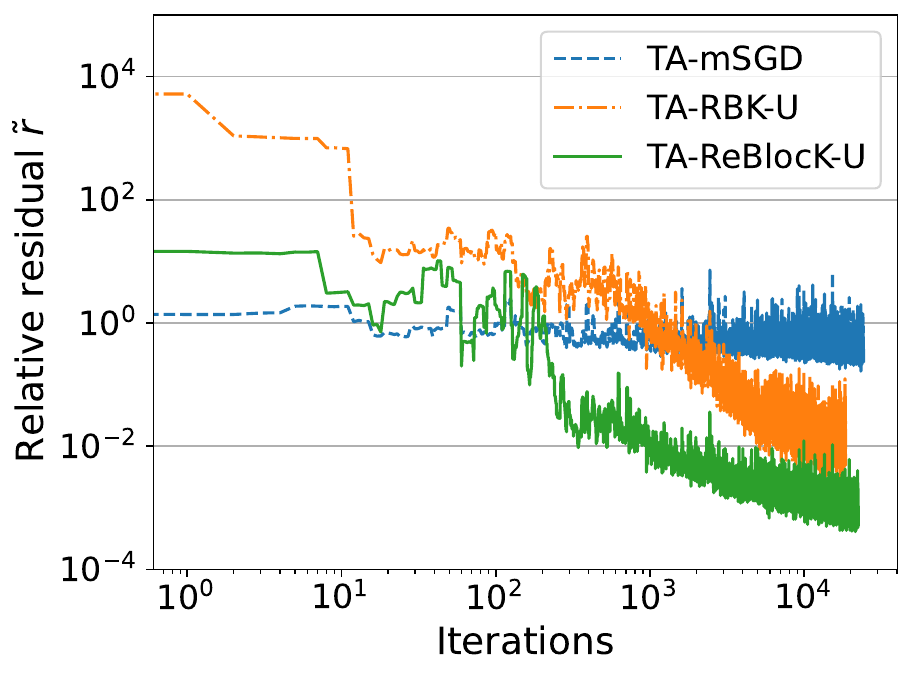}} \\
{\includegraphics[width=0.4\textwidth]{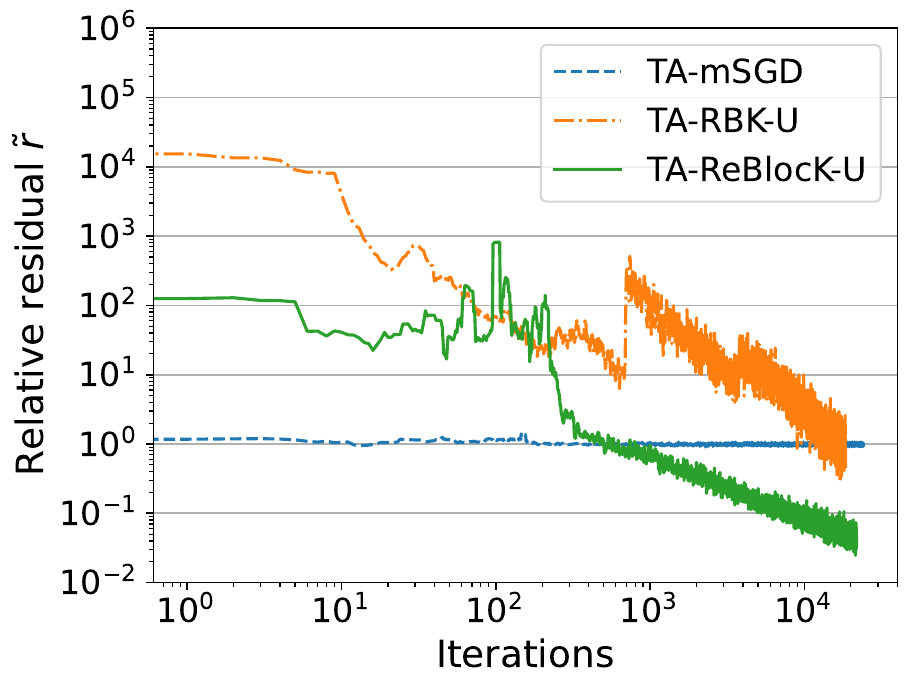}} 
\caption{\revv{Per iteration plots for the same three snapshots from \Cref{fig:nn}, with TA-RBK-U run for much longer than the other methods to produce a comparable number of iterations. The algorithms are measured in terms of their progress towards reducing the relative residual $\tilde{r} = \norm{J x - [f_\theta - f]} / \norm{f_\theta - f}$ for the least-squares problem \eqref{eq:ngd_ls}, which measures how well the function-space update direction $J x$ agrees with the function-space loss gradient $f_\theta - f$. The burn-in period is set based on seconds rather than iterations and so is not indicated in this plot.}}
\label{fig:per_it}
\end{figure}

To test our algorithms for calculating natural gradient directions, we train a neural network with about $7.5 \times 10^5$ parameters to learn a rapidly oscillating function on the unit interval. We take three snapshots from the training process and form the least-squares problem \eqref{eq:ngd_ls} for each.  The methods TA-mSGD, TA-RBK-U, and TA-ReBlocK-U are then compared on these problems with results in \cref{fig:nn}. The computations are performed on an A100 GPU to simulate a deep learning setting \revv{and in double precision to enable each algorithm's iterations to be computed accurately}, and progress is measured with wall-clock time on the horizontal axis. TA-ReBlocK-U performs best in every case. \revv{Interestingly, RBK-U fails catastrophically on the first snapshot and also exhibits instabilities in the last snapshot, providing further evidence that the issues highlighted by the no-go \Cref{thm:no-go} are not merely of theoretical interest. We also provide per iteration convergence plots in \Cref{fig:per_it}. In the first snapshot the catastrophic failure of TA-RBK-U persists while in the second and third snapshots, we find that TA-RBK-U is more competitive on a per iteration basis, but is still outperformed by TA-ReBlocK-U.}

In \cref{fig:lambdas}, we investigate the effect of the parameter $\lambda$ on the performance of ReBlocK. \revv{We observe that choosing $\lambda=1$ already provides an advantage relative to mSGD, and the advantage is further improved by reducing $\lambda$ down to $1e-3$ or $1e-6$. However, when choosing $\lambda=1e-9$ the variance becomes so large as to significantly hinder the performance of the algorithm}. These results suggest that the performance of ReBlocK is optimized by choosing $\lambda$ as small as possible without introducing significant instabilities. Next, in \cref{fig:ks} we explore the effect of the batch size $k$ on the performance of the algorithms. We observe that ReBlocK performs best across all batch sizes. Finally, in \cref{fig:it_speed}, we report the number of iterations per second for mSGD, RBK, and ReBlocK for the experiments of this section. We find that the ReBlocK iterations are over twenty-five times faster than the RBK iterations and only about \revv{$10\%$} slower than the mSGD iterations, \revv{which provides a significant boost to TA-ReBlocK-U even beyond its improved stability}. Further details on these experiments can be found in \Cref{app:nn}. 

\begin{figure}[htbp]
    \centering
    \includegraphics[width=0.45\textwidth]{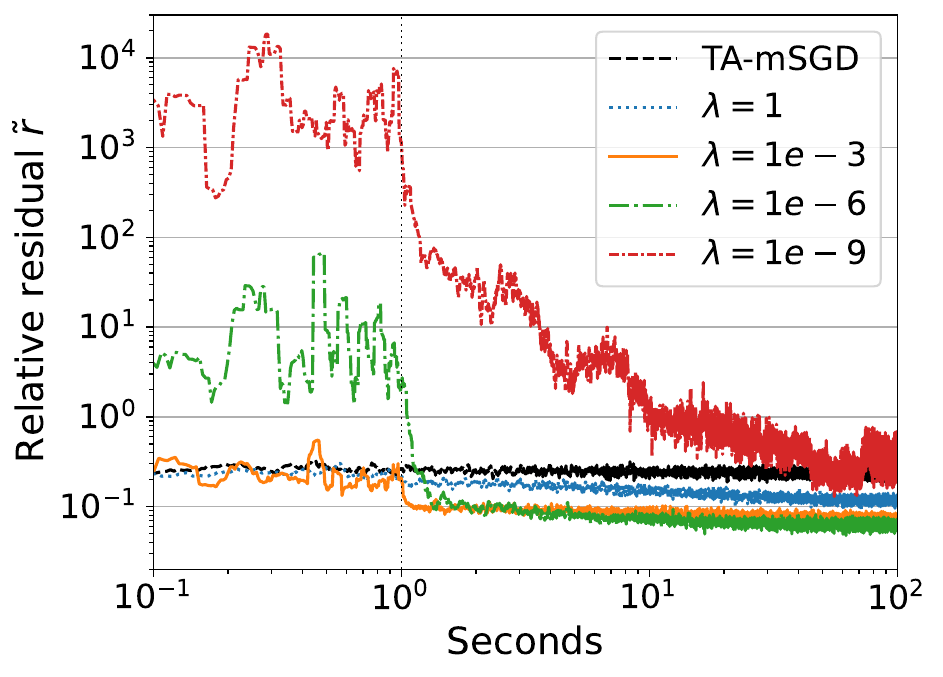}
    \caption{\rev{Performance of TA-ReBlocK-U on the problem from the top left panel of \cref{fig:nn} with various values of $\lambda$.}}
\label{fig:lambdas}
\end{figure}

\begin{figure}[htbp]
\centering
\centering
{\includegraphics[width=0.4\textwidth]{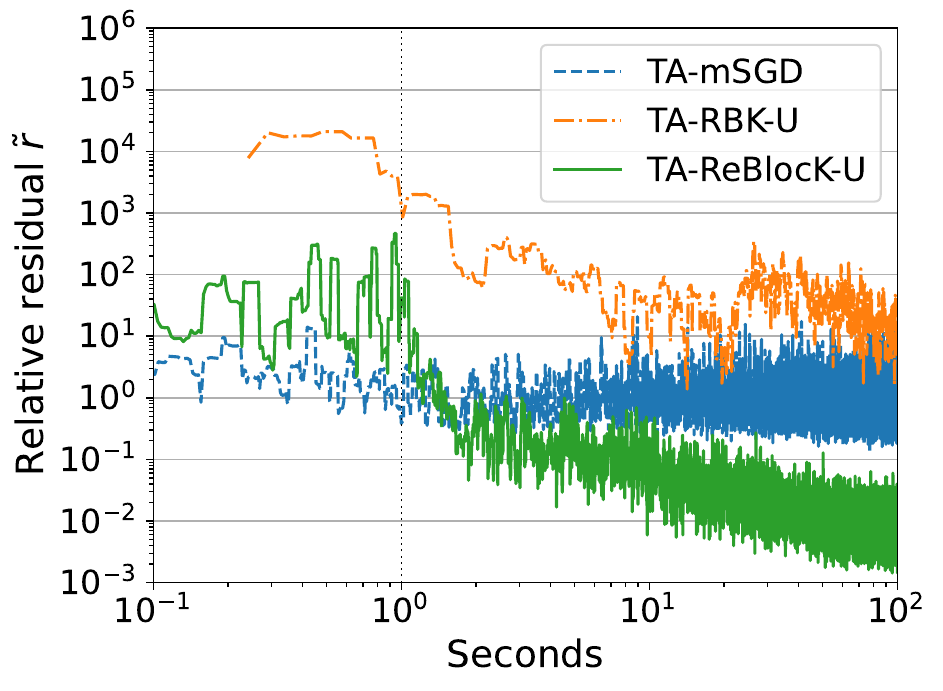}}
{\includegraphics[width=0.4\textwidth]{NN_mid.pdf}} \\
{\includegraphics[width=0.4\textwidth]{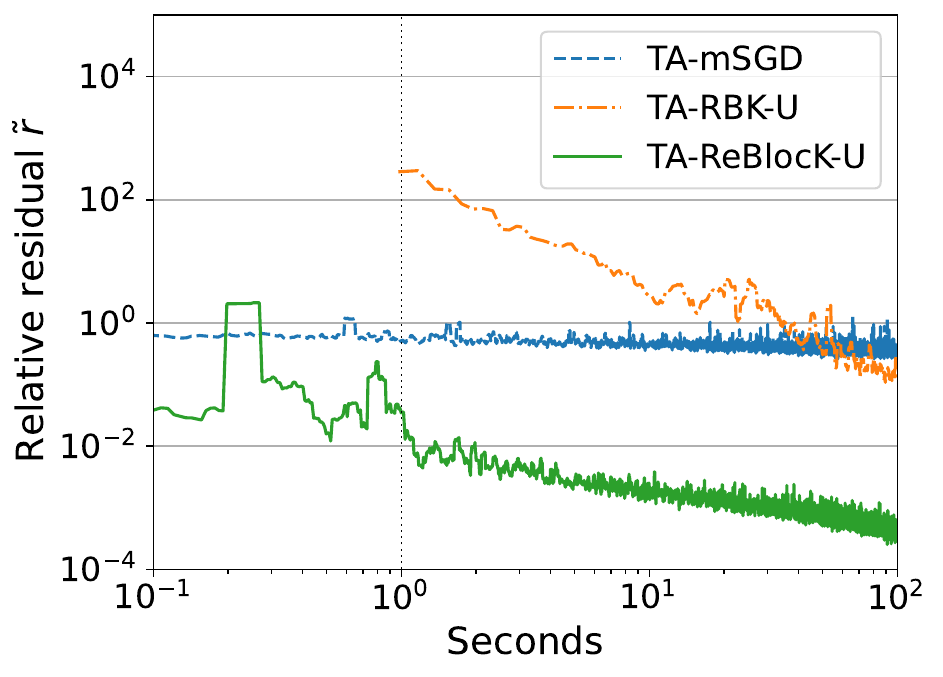}}
{\includegraphics[width=0.4\textwidth]{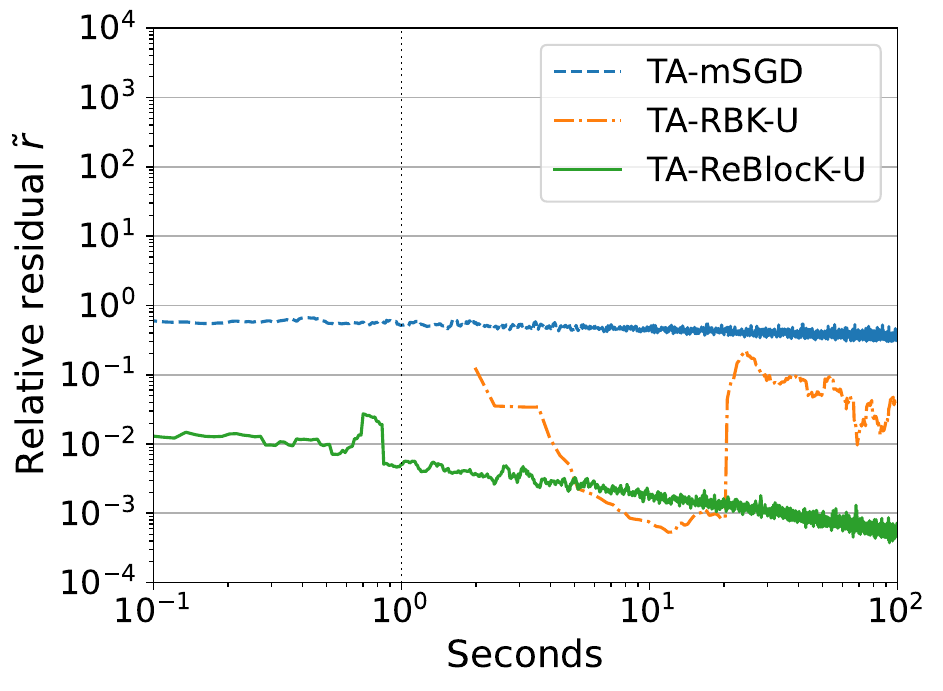}}
\caption{\rev{Performance of algorithms for different values of the block size $k$. The problem is the same as in the top right panel of \cref{fig:nn} and the values of $k$ are $k=25$ (top left), $k=50$ (top right), $k=100$ (bottom left), and $k=200$ (bottom right).  The top right panel here is identical to the top right panel of \cref{fig:reblock_success}, and is reproduced here only for comparison with the other quadrants.}}
\label{fig:ks}
\end{figure}

\begin{figure}[htbp]
\centering
\includegraphics[width=0.4\textwidth]{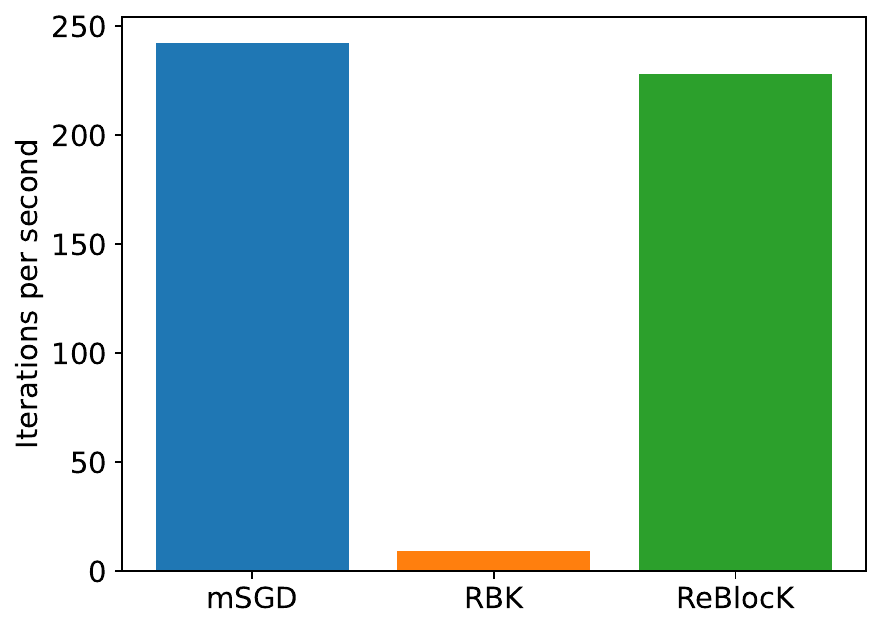} 
\caption{Iteration speed of each algorithm for the natural gradient experiments with $k=50$.}
\label{fig:it_speed}
\end{figure}
 
Our results, though preliminary due to their synthetic nature, suggest that ReBlocK is a promising method for calculating natural gradient directions. This provides justification for the Kaczmarz-inspired SPRING algorithm \cite{goldshlager2024kaczmarz} and suggests a family of related methods to be explored by future works. Open questions include how many ReBlocK iterations to run between each update of $\theta$, whether to incorporate averaging, and how to tune $\lambda$.

\section{Conclusions}
In this work, we have explored the problem of solving large-scale linear least-squares problems without preprocessing the data matrix $A$. Our results suggest that ReBlocK is a more effective algorithm than either RBK or mSGD for inconsistent problems that exhibit rapid singular value decay. More broadly, our work highlights the value of incorporating regularization as a path towards broadening the applicability of block Kaczmarz methods, and our analysis suggests that a Monte Carlo perspective can be useful in elucidating the behavior of randomized block row-access methods in general. Finally, our work provides motivation and suggests new directions for \revv{both natural gradient optimizers and time stepping methods in scientific machine learning.}

\section*{Acknowledgements}
We thank Ethan Epperly, Robert Webber, Marius Zeinhofer, Ruizhe Zhang, and the anonymous reviewers for their thoughtful discussions and feedback.

\appendix

\section{Helpful Lemmas}

In this appendix we prove five lemmas which will help us analyze both RBK and ReBlocK. \cref{lemma:expect} shows how the expectation of the iterates evolves. \cref{lemma:range} shows that the iterates of the algorithms always stay within $\range(A^\top)$, and \cref{lemma:P} provides a contraction property of applying $I - \overline{P}$ to certain types of vectors. \cref{lemma:tail} shows how to derive bounds on the expected square error of tail-averaged iterates under relevant assumptions. Finally, \cref{lemma:res} bounds the \revv{weighted residual and the bias} based on the condition number of $\overline{W}$.

\begin{lemma}[Convergence of single-iterate expectation]
\label{lemma:expect}
Consider the generalized iteration \eqref{eq:it_general} with some fixed choice of a positive semidefinite  mass matrix $M(A_S)$ and sampling distribution $\rho$, and fix two indices $r < s$. Then the expectation of $x_s$ conditioned on $x_r$ satisfies
\begin{equation}
\expect{}{x_s - \xrh  | x_r} = (I - \overline{P})^{s-r} (x_r - \xrh).
\end{equation}
\end{lemma}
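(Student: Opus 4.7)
The plan is to establish the one-step identity $\expectE[x_{t+1}-\xrh\mid x_t] = (I-\overline{P})(x_t-\xrh)$ and then iterate using the tower property of conditional expectation. The key ingredient is already spelled out in the paper's overview: the rewritten iteration
\begin{equation*}
x_{t+1} - \xrh = (I - P(S_t))(x_t - \xrh) + A^\top W(S_t) \rrh,
\end{equation*}
together with the fact that the normal equations for the weighted problem \eqref{eq:xrh} read $A^\top \overline{W}\rrh = 0$. Since $S_t$ is drawn independently of $x_t$ according to $\rho$, I would take the conditional expectation of both sides given $x_t$, pull $x_t - \xrh$ out as deterministic, and obtain
\begin{equation*}
\expectE[x_{t+1}-\xrh\mid x_t] = (I-\overline{P})(x_t - \xrh) + A^\top \overline{W}\rrh = (I-\overline{P})(x_t-\xrh),
\end{equation*}
where the final term vanishes by the normal equations.

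Next, I would extend this from a single step to $s-r$ steps by induction on $s-r$. The base case $s-r=1$ is exactly the one-step identity above (with $t=r$). For the inductive step, write $\expectE[x_s - \xrh \mid x_r] = \expectE\bigl[\expectE[x_s - \xrh \mid x_{s-1}] \,\big|\, x_r\bigr]$ using the tower property, apply the one-step identity to the inner conditional expectation to pull out a factor of $(I-\overline{P})$, and then invoke the inductive hypothesis on the remaining $(s-1)-r$ steps. The deterministic matrix $(I-\overline{P})$ commutes with the conditional expectation, yielding $(I-\overline{P})^{s-r}(x_r - \xrh)$ as desired.

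I do not anticipate a real obstacle here. The only substantive point is verifying that $A^\top \overline{W}\rrh=0$; this follows immediately from $\xrh$ being a minimizer of $\|Ax-b\|_{\overline{W}}^2$ (and holds even when the minimizer is nonunique, since the gradient $A^\top \overline{W}(A\xrh - b)$ vanishes at any minimizer). The independence of $S_t$ from the past iterates, which is used to conclude $\expectE[P(S_t)\mid x_t]=\overline{P}$ and $\expectE[W(S_t)\mid x_t]=\overline{W}$, is implicit in the sampling scheme of Algorithm~\ref{alg:gen}. Everything else is routine manipulation of conditional expectations.
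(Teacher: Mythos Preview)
Your proposal is correct and follows essentially the same approach as the paper's own proof: establish the one-step identity via the rewritten iteration \eqref{eq:it_nice} and the normal equations $A^\top \overline{W}\rrh=0$, then iterate using the tower property (which the paper phrases as ``the law of total expectation and the linearity of expectation''). There is no substantive difference.
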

\begin{proof}[Proof of \cref{lemma:expect}]
Fix $t \in r, \ldots, s-1$, let $P_t = P(S_t)$ and $W_t = W(S_t)$. Recall from \eqref{eq:it_nice} that the generalized iteration \eqref{eq:it_general} can be reformulated as
\begin{equation}
x_{t+1} - \xrh = (I - P_t) (x_t - \xrh) + A^\top W_t \rrh.
\end{equation}
Recall also that the normal equations for $\xrh$ imply that $A^\top \overline{W} \rrh = 0$. Taking the expectation over the choice of $S_t$, it thus obtains
\begin{equation}
\expect{}{x_{t+1} - \xrh | x_t}= (I - \overline{P}) (x_t - \xrh) + A^\top \overline{W} \rrh = (I - \overline{P}) (x_t - \xrh).
\end{equation}
Using the law of total expectation and the linearity of expectation, iterating this result for $t = r, \ldots, s-1$ yields the lemma.
\end{proof}

\begin{lemma}
\label{lemma:range}
Consider the generalized iteration \eqref{eq:it_general} with some fixed choice of a positive semidefinite mass matrix $M(A_S)$ and sampling distribution $\rho$. Then $x^*,\xrh \in \range(A^\top)$ and if $x_0 \in \range(A^\top)$, then $x_t \in \range(A^\top)$ for all $t \geq 0$.
\end{lemma}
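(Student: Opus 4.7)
The plan is to prove the three claims in order of increasing technical content. The first two are pure linear algebra facts about minimal-norm solutions of (weighted) least-squares problems, and the third follows by a one-line induction once the update direction is identified.

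For the first claim, $x^* \in \range(A^\top)$, I would simply invoke the standard pseudoinverse identity $\range(A^+) = \range(A^\top)$, so $x^* = A^+ b$ lies in $\range(A^\top)$ by definition. For the second claim, $\xrh \in \range(A^\top)$, I would rewrite the weighted objective $\|Ax - b\|_{\overline{W}}^2$ as an ordinary least-squares objective $\|L x - \overline{W}^{1/2} b\|^2$ where $L := \overline{W}^{1/2} A$. The unique minimal-norm least-squares solution of this reformulated problem is $L^+ \overline{W}^{1/2} b$, which lies in $\range(L^\top) = \range(A^\top \overline{W}^{1/2}) \subseteq \range(A^\top)$. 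I would then check that this minimal-norm element agrees with $\xrh$ defined in \eqref{eq:xrh}: the two optimization problems share the same set of minimizers (since $\|\cdot\|_{\overline{W}} = \|\overline{W}^{1/2} \cdot \|$), so their minimal-norm minimizers coincide.

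For the third claim, I would argue by induction on $t$. The base case $t=0$ is the hypothesis. For the inductive step, I would observe that the submatrix $A_{S_t}$ can be written as $A_{S_t} = I_{S_t} A$, hence $A_{S_t}^\top = A^\top I_{S_t}^\top$. Therefore the update term in \eqref{eq:it_general} factors as
\begin{equation}
A_{S_t}^\top M(A_{S_t})(b_{S_t} - A_{S_t} x_t) = A^\top \bigl[ I_{S_t}^\top M(A_{S_t})(b_{S_t} - A_{S_t} x_t) \bigr] \in \range(A^\top),
\end{equation}
so adding it to $x_t \in \range(A^\top)$ preserves membership in $\range(A^\top)$.

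The only place where any care is genuinely needed is the second claim in the case of a rank-deficient $\overline{W}$, since then $\|\cdot\|_{\overline{W}}$ is merely a seminorm and $L = \overline{W}^{1/2} A$ may have a larger kernel than $A$ itself. However, this enlargement only makes the set of minimizers larger while keeping $\range(L^\top) \subseteq \range(A^\top)$, so the minimal-norm element is still forced into $\range(A^\top)$. This is the main (and still very mild) obstacle; the remaining arguments are immediate from standard pseudoinverse identities and the factorization $A_{S_t}^\top = A^\top I_{S_t}^\top$.
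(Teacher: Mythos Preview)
Your proposal is correct and essentially matches the paper's proof: the induction step is identical (the paper phrases it as $\range(A_{S_t}^\top) \subseteq \range(A^\top)$, which is exactly your factorization $A_{S_t}^\top = A^\top I_{S_t}^\top$). For $x^*$ and $\xrh$ the paper uses a uniform contradiction-by-projection argument (project onto $\range(A^\top)$, note $Ax$ is unchanged so the norm strictly drops) rather than your pseudoinverse identities, but this is the same underlying fact presented differently and not a genuinely different route.
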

\begin{proof}
Recall that $x^*$ is the minimal-norm solution to \eqref{eq:ls} and $\xrh$ is the minimal-norm solution to \eqref{eq:xrh}. Suppose for the sake of contradiction that $x^* \notin \range(A^\top)$. Then let $\tilde{x}$ be the projection of $x^*$ onto $\range(A^\top)$. It follows that $\norm{\tilde{x}} < \norm{x^*}$ and $Ax^* = A\tilde{x}$, making $\tilde{x}$ a minimizer of \eqref{eq:ls} with a smaller norm than $x^*$ (contradiction). The same argument holds for $\xrh$. 

To show $x_t \in \range(A^\top)$ for all $t \geq 0$ we apply an inductive argument. The claim holds for $t=0$ by assumption, so now suppose that it holds for some arbitrary $t \geq 0$. Recall
\begin{equation}
 x_{t+1} = x_t + A_{S_t}^\top M(A_{S_t}) (b_{S_t} - A_{S_t} x_t).
 \end{equation}
 Then $x_t \in \range(A^\top)$ by assumption and  $A_{S_t}^\top M(A_{S_t}) (b_{S_t} - A_{S_t} x_t) \in \range(A^\top)$ since $\range(A_{S_t}^\top )\subseteq \range(A^\top)$. It follows that $x_{t+1} \in \range(A^\top)$, completing the proof.

\end{proof}

\begin{lemma}
\label{lemma:P}
Suppose that $x \in \range(\overline{P})$, $\overline{P} \preceq I$, and $\alpha = \sigma_{\rm min}^+(\overline{P})$. Then for any $s \geq 0$, 
\begin{equation}
x^\top (I - \overline{P})^s  x \leq (1 - \alpha)^s \norm{x}^2
\end{equation}
and 
\begin{equation}
\norm{(I - \overline{P})^s x} \leq (1 - \alpha)^s \norm{x}.
\end{equation}
\end{lemma}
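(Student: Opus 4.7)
The plan is to use the spectral decomposition of $\overline{P}$, which is symmetric positive semidefinite since each $P(S) = A_S^\top M(A_S) A_S$ is symmetric PSD (being a conjugation of the PSD mass matrix $M(A_S)$) and expectations preserve this property. Combined with the hypothesis $\overline{P} \preceq I$, this means $\overline{P}$ has an orthonormal eigenbasis $v_1, \ldots, v_n$ with eigenvalues $\lambda_i \in [0,1]$.

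Next I would decompose $x = \sum_i c_i v_i$ in this eigenbasis. The hypothesis $x \in \range(\overline{P})$ forces $c_i = 0$ whenever $\lambda_i = 0$, so only indices with $\lambda_i \geq \sigma_{\min}^+(\overline{P}) = \alpha$ contribute. Since the $v_i$ are also eigenvectors of $I - \overline{P}$ with eigenvalues $1 - \lambda_i \in [0, 1-\alpha]$, we get $(I - \overline{P})^s x = \sum_i (1-\lambda_i)^s c_i v_i$ by an immediate induction on $s$.

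From here both inequalities follow directly. For the first,
\begin{equation}
x^\top (I - \overline{P})^s x = \sum_i (1 - \lambda_i)^s c_i^2 \leq (1 - \alpha)^s \sum_i c_i^2 = (1 - \alpha)^s \norm{x}^2,
\end{equation}
where the only non-vacuous bound needed is $(1-\lambda_i)^s \leq (1-\alpha)^s$ for the active indices. For the second, orthonormality of the $v_i$ gives
\begin{equation}
\norm{(I - \overline{P})^s x}^2 = \sum_i (1 - \lambda_i)^{2s} c_i^2 \leq (1 - \alpha)^{2s} \norm{x}^2,
\end{equation}
and taking square roots yields the stated bound.

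There is no real obstacle here; the only subtlety worth being careful about is confirming that $\overline{P}$ is genuinely symmetric so that the spectral theorem applies, and that the nonzero-eigenvalue restriction from $x \in \range(\overline{P})$ is what lets us replace $\lambda_i \geq 0$ with the strictly positive lower bound $\lambda_i \geq \alpha$. Without this restriction the bounds would fail, since $\lambda_i = 0$ would give a contraction factor of $1$ rather than $1-\alpha$.
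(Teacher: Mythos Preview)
Your proof is correct and matches the paper's own argument essentially line for line: expand $x$ in the eigenbasis of the symmetric matrix $\overline{P}$ and observe that, since $x \in \range(\overline{P})$, the only eigenvectors with nonzero coefficients have eigenvalues in $[\alpha,1]$. Your write-up is in fact more detailed than the paper's, which sketches this in a single sentence.
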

\begin{proof}
First expand $x$ in the basis of eigenvectors of the symmetric matrix $\overline{P}$, then note that every eigenvector that has a nonzero coefficient in the expansion has its eigenvalue in the interval $[\alpha, 1]$. 

\end{proof}
\begin{lemma}[Convergence of tail-averaged schemes] 
\label{lemma:tail}
Consider the generalized iteration \eqref{eq:it_general} with some fixed mass matrix $M(A_S)$ and sampling distribution $\rho$. Let $\alpha = \sigma_{\rm min}^+(\overline{P})$ and suppose that $x_0 \in \range(A^\top)$, $\range(\overline{P}) = \range(A^\top)$, and $\overline{P} \preceq I$. Additionally, suppose the stochastic iterates $x_0, x_1, \ldots$ satisfy 
\begin{equation}
\expectE \norm{x_t- \xrh}^2 \leq (1 - \alpha)^t B + \tilde{V}
\end{equation}
for all $t$ and some constants $B, \tilde{V}$, \revv{where we use $\tilde{V}$ to distinguish from the variance term $V$ in \Cref{thm:rbk,thm:reblock}}. Then the tail averages $\overline{x}_T$ of the iterates, with burn-in time $T_b$, satisfy
\begin{equation}
\expectE \norm{\overline{x}_T - \xrh}^2 \leq (1 - \alpha)^{T_b+1} B + \frac{2}{\alpha (T - T_b)} \tilde{V}.
\end{equation}
\end{lemma}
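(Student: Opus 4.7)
The plan is to expand the averaged error as $\overline{x}_T - \xrh = (T-T_b)^{-1}\sum_{t=T_b+1}^T e_t$ with $e_t := x_t - \xrh$, and then evaluate the expected squared norm as a double sum,
\begin{equation}
\expectE \norm{\overline{x}_T - \xrh}^2 = \frac{1}{(T-T_b)^2} \sum_{s,t=T_b+1}^T \expectE \langle e_s, e_t\rangle.
\end{equation}
From here the goal is to bound each cross term and then split the double sum into a ``bias'' piece that decays in $T_b$ and a ``variance'' piece that decays in $T - T_b$.

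For the cross terms with $s \leq t$, I would use the tower property to write $\expectE \langle e_s, e_t\rangle = \expectE \langle e_s, \expect{}{e_t \mid x_s}\rangle$ and then apply \cref{lemma:expect} to replace the inner conditional expectation by $(I - \overline{P})^{t-s} e_s$. By \cref{lemma:range}, both $x_0$ and $\xrh$ lie in $\range(A^\top) = \range(\overline{P})$, so each $e_s$ does as well, and \cref{lemma:P} yields the contraction
\begin{equation}
\expectE \langle e_s, e_t\rangle \;\leq\; (1-\alpha)^{t-s}\, \expectE \norm{e_s}^2 \;\leq\; (1-\alpha)^t B + (1-\alpha)^{t-s}\tilde V,
\end{equation}
where the last step invokes the hypothesis $\expectE \norm{e_s}^2 \leq (1-\alpha)^s B + \tilde V$. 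By symmetry in $(s,t)$, for all indices the bound takes the form $(1-\alpha)^{\max(s,t)} B + (1-\alpha)^{|s-t|}\tilde V$.

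Summing over $s,t \in \{T_b+1,\ldots,T\}$ separates cleanly into a bias contribution and a variance contribution. For the bias piece I would use the crude uniform bound $(1-\alpha)^{\max(s,t)} \leq (1-\alpha)^{T_b+1}$, so that the $(T-T_b)^2$ pair count exactly cancels the $(T-T_b)^{-2}$ prefactor and leaves $(1-\alpha)^{T_b+1} B$. For the variance piece, each fixed $s$ contributes
\begin{equation}
\sum_{t=T_b+1}^T (1-\alpha)^{|s-t|} \;\leq\; 1 + 2\sum_{j=1}^\infty (1-\alpha)^j \;=\; \frac{2-\alpha}{\alpha} \;\leq\; \frac{2}{\alpha},
\end{equation}
giving a total of $(T-T_b)\cdot 2/\alpha$ and hence the prefactor $\tfrac{2\tilde V}{\alpha(T-T_b)}$ claimed in the lemma. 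The main subtlety is simply the bookkeeping of the two sums --- in particular, recognizing that the very loose uniform bound on the bias term is already good enough, since the $O(1/(T-T_b))$ rate is set by the variance term and nothing is lost by overestimating $(1-\alpha)^{\max(s,t)}$ by its largest value.
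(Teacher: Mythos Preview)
Your proposal is correct and follows essentially the same approach as the paper's proof: both expand the squared error as a double sum of cross terms, apply \cref{lemma:expect} together with \cref{lemma:range,lemma:P} to bound each cross term by $(1-\alpha)^{\max(s,t)}B + (1-\alpha)^{|s-t|}\tilde V$, use the coarse bound $(1-\alpha)^{\max(s,t)} \leq (1-\alpha)^{T_b+1}$ for the bias contribution, and sum the geometric series for the variance contribution to obtain the $2/(\alpha(T-T_b))$ factor. The only cosmetic difference is that the paper writes the variance sum as $2\sum_r\sum_{s\ge 0}(1-\alpha)^s$ rather than your per-row bound $1+2\sum_{j\ge 1}(1-\alpha)^j$, but these yield the same estimate.
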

\begin{proof}
We follow closely the Proof of Theorems 1.2 and 1.3 of \cite{epperly2024randomizedkaczmarztailaveraging}. Decompose the expected mean square error as
\begin{equation}
\label{eq:mse_sum}
\expectE \norm{\overline{x}_T - \xrh}^2 = \frac{1}{(T-T_b)^2} \sum_{r,s = T_b+1}^{T} \expect{}{(x_r - \xrh)^\top (x_s - \xrh)}.
\end{equation}
For $T_b \leq r < s$ bound the covariance term using \cref{lemma:expect} and our assumptions on $\overline{P}$: 
\begin{align*}
\expect{}{(x_r - \xrh)^\top (x_s - \xrh)} &= \expect{}{(x_r - \xrh)^\top \expect{}{x_s - \xrh | x_r}} \\
&= \expect{}{(x_r - \xrh)^\top (I - \overline{P})^{s-r} (x_r - \xrh)} \\
& \leq (1-\alpha)^{s-r} \expectE \norm{x_r - \xrh}^2 \\
& \leq  (1 - \alpha)^s B + (1-\alpha)^{s-r} \tilde{V},
\end{align*}
where the third line uses \cref{lemma:range,lemma:P} and the assumptions $\range{\overline{P}} = \range(A^\top)$ and $\overline{P} \preceq I$.

Using the coarse bound $  (1 - \alpha)^s B \leq (1 - \alpha)^{T_b+1} B$ for $s \geq T_b+1$, it follows
\begin{equation}
 \expectE \norm{\overline{x}_T - \xrh}^2 \leq (1 - \alpha)^{T_b+1} B + \frac{\tilde{V}}{(T-T_b)^2}  \sum_{r,s = T_b+1}^T (1-\alpha)^{|s-r|}.
\end{equation}
Apply another coarse bound
\begin{equation}
\sum_{r,s = T_b+1}^T (1-\alpha)^{|s-r|} \leq 2 \sum_{r = T_b+1}^{T} \sum_{s=0}^\infty (1-\alpha)^s = \frac{2(T - T_b)}{\alpha}
\end{equation}
to obtain the final result:
\begin{equation}
 \expectE \norm{\overline{x}_T - \xrh}^2 \leq  (1 - \alpha)^{T_b+1} B + \frac{2}{\alpha (T-T_b)} \tilde{V}.
\end{equation}
\end{proof}

\begin{lemma}[Residual \revv{and bias bounds}]
\label{lemma:res}
Assuming $\overline{W} \succ 0$, it holds
\begin{equation}
\begin{aligned}
\norm{\rrh}^2 &\leq \kappa(\overline{W}) \cdot \norm{r}^2, \\
\revv{ \norm{x^{(\rho)} - x^*}} & \revv{\leq \sqrt{\kappa(\overline{W}) - 1} \cdot \norm{A^+} \cdot \norm{r}.}
\end{aligned}
\end{equation}
\end{lemma}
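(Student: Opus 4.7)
For the first inequality the plan is to exploit that $x^{(\rho)}$ is the minimizer of $\|Ax-b\|_{\overline{W}}^2$, so in particular it beats $x^*$: $\|r^{(\rho)}\|_{\overline{W}}^2 \leq \|r\|_{\overline{W}}^2$. Since $\overline{W}\succ0$, we then sandwich both sides with eigenvalue bounds, namely $\lambda_{\min}(\overline{W})\|r^{(\rho)}\|^2 \leq \|r^{(\rho)}\|_{\overline{W}}^2$ on the left and $\|r\|_{\overline{W}}^2 \leq \lambda_{\max}(\overline{W})\|r\|^2$ on the right. Dividing gives precisely $\|r^{(\rho)}\|^2 \leq \kappa(\overline{W})\cdot \|r\|^2$.

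For the bias bound I plan to first reduce estimating $\|x^{(\rho)}-x^*\|$ to estimating a residual difference. The key structural fact, established in \cref{lemma:range}, is that both $x^*$ and $x^{(\rho)}$ lie in $\range(A^\top)$, hence so does their difference. On this subspace $A^+A$ acts as the identity, so
\begin{equation}
x^{(\rho)}-x^* \;=\; A^+A\bigl(x^{(\rho)}-x^*\bigr) \;=\; A^+\bigl(r - r^{(\rho)}\bigr),
\end{equation}
which immediately yields $\|x^{(\rho)}-x^*\| \leq \|A^+\|\cdot\|r-r^{(\rho)}\|$.

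It then remains to bound $\|r-r^{(\rho)}\|$ in terms of $\|r\|$. The trick I would use is Pythagoras based on the OLS normal equations: since $r = b-Ax^*$ is orthogonal to $\range(A)$ while $r^{(\rho)}-r = A(x^*-x^{(\rho)})\in \range(A)$, the decomposition $r^{(\rho)} = r + (r^{(\rho)}-r)$ is orthogonal and therefore $\|r^{(\rho)}\|^2 = \|r\|^2 + \|r^{(\rho)}-r\|^2$. Substituting the first bound of the lemma gives $\|r^{(\rho)}-r\|^2 \leq (\kappa(\overline{W})-1)\|r\|^2$, and combining with the inequality of the previous paragraph produces the stated estimate.

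The whole argument is short and I do not anticipate any real obstacle; the only step that requires mild care is the orthogonality argument for the Pythagorean identity, where one needs to correctly invoke both the OLS optimality of $x^*$ (to get $r\perp\range(A)$) and the fact that $A(x^*-x^{(\rho)})$ is automatically in $\range(A)$. Everything else is linear algebra on eigenvalues of $\overline{W}$ and pseudoinverse identities on $\range(A^\top)$, which are already set up by the earlier lemmas.
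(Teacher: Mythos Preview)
Your proposal is correct and mirrors the paper's proof almost exactly: both parts use the optimality of $x^{(\rho)}$ in the $\overline{W}$-norm sandwiched by eigenvalue bounds for the residual inequality, and both use the orthogonal decomposition $r^{(\rho)} = r + A(x^*-x^{(\rho)})$ together with $x^{(\rho)}-x^*\in\range(A^\top)$ to pass from $\|A(x^{(\rho)}-x^*)\|$ to $\|x^{(\rho)}-x^*\|$ via $\|A^+\|$. The only cosmetic difference is that the paper writes $\|x^{(\rho)}-x^*\|\leq\|A^+\|\cdot\|A(x^{(\rho)}-x^*)\|$ directly, whereas you route through $x^{(\rho)}-x^*=A^+(r-r^{(\rho)})$; these are the same step.
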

\begin{proof}
\revv{To bound the weighted residual}, use operator norms and the optimality of $\xrh$ for the weighted least-squares problem: 
\begin{align*}
\norm{A \xrh - b}^2 & \leq \norm{\overline{W}^{-1}} \norm{A \xrh - b}_{\overline{W}}^2 \\
&\leq \norm{\overline{W}^{-1}} \norm{A x^* - b}_{\overline{W}}^2 \\
& \leq \norm{W^{-1}} \norm{\overline{W}}  \norm{r}^2 \\
& = \kappa(\overline{W})  \norm{r}^2 .
\end{align*}
\revv{Next, note that the weighted residual admits an orthogonal decomposition  $r^{(\rho)} = A(x^* - x^{(\rho)}) + r$. It follows that
\begin{equation}
\norm{A (x^{(\rho)} - x^*)}^2 = \norm{r^{(\rho)}}^2 - \norm{r}^2 \leq (\kappa(\overline{W}) - 1) \cdot \norm{r}^2.
\end{equation}
Taking square-roots and using $\norm{x^{(\rho)} - x^*} \leq \norm{A^+} \cdot \norm{A(x^{(\rho)} - x^*)}$ 
yields the bias bound
\begin{equation}
\norm{x^{(\rho)} - x^*} \leq \sqrt{\kappa(\overline{W}) - 1} \cdot \norm{A^+} \cdot \norm{r}.
\end{equation}}
\end{proof}

\newcommand{\zt}{z^{(t)}}
\section{Proofs of RBK Convergence Theorems\label{app:rbk}}
In this section we provide the proofs of \cref{thm:rbk,thm:rbk_gaussian,coro:gauss}.
We first prove the following lemma, which can be viewed as a more general version of Theorem 1.2 of \cite{needell2014paved}:
\begin{lemma}
\label{lemma:rbk}
Consider the RBK iterates \eqref{eq:rbk} and fix some sampling distribution $\rho$. Suppose that $x_0 \in \range(A^\top)$ and $\range(\overline{P}) = \range(A^\top)$. Then for all $t$ it holds
\begin{equation}
\expectE \norm{x_t - \xrh}^2 \leq (1 - \alpha)^t \norm{x_0 - \xrh}^2 + \frac{1}{\alpha} \expectE_{S \sim \rho} \norm{A_S^+ \rrh_S}^2. 
\end{equation}\end{lemma}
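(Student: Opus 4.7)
The plan is to leverage the orthogonal decomposition that is special to RBK, obtain a one-step recursion for the expected square error, and then unroll that recursion. Specifically, for RBK we have $M(A_S) = (A_S A_S^\top)^+$, which means $P(S_t) = A_{S_t}^\top (A_{S_t} A_{S_t}^\top)^+ A_{S_t}$ is the orthogonal projection onto $\range(A_{S_t}^\top)$, and $A^\top W(S_t) \rrh = A_{S_t}^\top (A_{S_t} A_{S_t}^\top)^+ \rrh_{S_t} = A_{S_t}^+ \rrh_{S_t}$. Starting from the error recursion \eqref{eq:it_nice},
\begin{equation*}
x_{t+1} - \xrh = (I - P(S_t))(x_t - \xrh) + A_{S_t}^+ \rrh_{S_t},
\end{equation*}
I would note that the first summand lies in $\range(I - P(S_t)) = \range(A_{S_t}^\top)^{\perp}$ while the second lies in $\range(A_{S_t}^+) = \range(A_{S_t}^\top)$, so the two summands are orthogonal.

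By the Pythagorean identity, it follows that
\begin{equation*}
\norm{x_{t+1} - \xrh}^2 = \norm{(I - P(S_t))(x_t - \xrh)}^2 + \norm{A_{S_t}^+ \rrh_{S_t}}^2.
\end{equation*}
Since $P(S_t)$ is a projection, $(I - P(S_t))^2 = I - P(S_t)$, so taking the conditional expectation over $S_t \sim \rho$ gives
\begin{equation*}
\expect{}{\norm{x_{t+1} - \xrh}^2 \,\big|\, x_t} = (x_t - \xrh)^\top (I - \overline{P}) (x_t - \xrh) + \expectE_{S \sim \rho} \norm{A_S^+ \rrh_S}^2.
\end{equation*}
Then I would invoke \cref{lemma:range} to conclude that $x_t - \xrh \in \range(A^\top) = \range(\overline{P})$, and observe that $\overline{P} \preceq I$ as an average of orthogonal projections, which lets me apply \cref{lemma:P} to bound the quadratic form by $(1-\alpha)\norm{x_t - \xrh}^2$.

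Taking total expectation and writing $V = \expectE_{S \sim \rho} \norm{A_S^+ \rrh_S}^2$, I obtain the one-step recursion
\begin{equation*}
\expectE \norm{x_{t+1} - \xrh}^2 \leq (1-\alpha) \expectE \norm{x_t - \xrh}^2 + V,
\end{equation*}
which unrolls to $\expectE \norm{x_t - \xrh}^2 \leq (1-\alpha)^t \norm{x_0 - \xrh}^2 + V \sum_{j=0}^{t-1} (1-\alpha)^j$, and the geometric sum is bounded by $1/\alpha$. The main conceptual step is identifying the orthogonality between $(I - P(S_t))(x_t - \xrh)$ and $A_{S_t}^+ \rrh_{S_t}$; once this is in hand, the rest is a routine Pythagorean-plus-contraction calculation. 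I do not anticipate significant difficulty beyond being careful to verify the hypotheses of \cref{lemma:P,lemma:range} hold (namely that $x_t - \xrh$ remains in $\range(A^\top) = \range(\overline{P})$ and that $\overline{P} \preceq I$), both of which are immediate from the RBK-specific structure.
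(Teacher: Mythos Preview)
Your proposal is correct and follows essentially the same approach as the paper's proof: both exploit the orthogonal decomposition of $x_{t+1}-\xrh$ into a null-space component $(I-P(S_t))(x_t-\xrh)$ and a range component $A_{S_t}^+\rrh_{S_t}$, apply Pythagoras and idempotency, invoke \cref{lemma:range,lemma:P} to obtain the one-step contraction, and then unroll with the geometric-sum bound.
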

\begin{proof}
Let $P_s = P(S_s)$ and $W_s = W(S_s)$. Using \eqref{eq:it_nice} and the definition $M(A_S) = (A_S A_S^\top)^+$, the RBK iteration \eqref{eq:rbk} can be reformulated as 
\begin{equation}
x_{s+1} - \xrh = (I - P_s) (x_s - \xrh) + A_{S_s}^+ \rrh.
\end{equation}
This represents an orthogonal decomposition of $x_{s+1} - \xrh$ since $I-P_s$ is the projector onto the null space of $A_{S_s}$, which is in turn orthogonal to the range of the pseudoinverse $A_{S_s}^+$. 

It thus holds
\begin{align*}
\norm{x_{s+1} - \xrh}^2 &= \norm{(I - P_s) (x_s - \xrh)}^2 + \norm{A_{S_s}^+ \rrh_{S_s}}^2 \\
&= (x_s - \xrh)^\top (I - P_s) (x_s - \xrh)  + \norm{A_{S_s}^+ \rrh_{S_s}}^2,
\end{align*}
where the second line uses the idempotency of the projector $I - P_s$. Note that $x_s - \xrh \in \range(A^\top)$ by \cref{lemma:range} and $\overline{P} \preceq I$ since $P(S)$ is always a projection matrix. Taking expectations and applying \cref{lemma:P} thus yields
\begin{align*}
\expectE \norm{x_{s+1} - \xrh}^2 &= (x_s - \xrh)^\top (I - \overline{P}) (x_s - \xrh)  + \expectE_{S \sim \rho} \norm{A_S^+ \rrh_S}^2 \\
& \leq (1-\alpha) \expectE \norm{x_s - \xrh}^2 + \expectE_{S \sim \rho} \norm{A_S^+ \rrh_S}^2.
\end{align*}
Iterating from $s=0$ to $s=t-1$ and utilizing $\sum_{s=0}^{t-1} (1-\alpha)^s < \sum_{s=0}^\infty (1-\alpha)^s = \sfrac{1}{\alpha}$ yields the desired result.
\end{proof}
\begin{proof}[Proof of \cref{thm:rbk}]
The result follows from the appropriate application of \cref{lemma:expect,lemma:tail,lemma:rbk}. To apply these lemmas it is first required to verify $\overline{P} \preceq I$ and $\range(\overline{P}) = \range(A^\top)$. For the first result, it suffices to observe that $P(S)$ is the orthogonal projector onto the row space of $A_S$ and thus $P(S) \preceq I$ for all $S$. 

For the second result, first note that since $\overline{P} = A^\top \overline{W} A$ it is clear that $\range(\overline{P}) \subseteq \range(A^\top)$. Utilizing this containment and the fact that $\overline{P}$ is symmetric, it suffices to show that there is no $x \in \range(A^\top)$ such that $\overline{P} x = 0$. Now, consider any such $x \in \range(A^\top)$. There must exist a row index $i$ such that $a_i^\top x \neq 0$. Thus
\begin{equation}
x^\top \overline{P} x = \expect{S \sim \mathbf{U}(m,k)}{x^\top A_S^\top  (A_S A_S^\top)^{-1} A_S x} \geq \frac{k}{m} \cdot \frac{(a_i^\top x)^2}{\norm{a_i}^2} > 0,
\end{equation}
where the intermediate step uses the facts that row $i$ is chosen with probability $k/m$ and that $A_S^\top (A_S A_S^\top)^{-1} A_S \succeq a_i (a_i^\top a_i)^{-1} a_i^\top$ when $i \in S$.  It follows that $\overline{P} x \neq 0$ and so indeed $\range(\overline{P}) = \range(A^\top)$.

With the assumptions verified, \cref{lemma:expect} can be applied using $r=0$, $s=T$ to yield
\begin{equation}
\expect{}{x_T} - \xrh = (I - \overline{P})^T (x_0 - \xrh).
\end{equation}
This directly implies
\begin{equation}
\norm{\expect{}{x_T} - \xrh} \leq (1- \alpha)^T \norm{x_0 - \xrh}
\end{equation}
using \cref{lemma:range,lemma:P} and the proven properties of $\overline{P}$.

Furthermore, \cref{lemma:rbk} holds and provides the conditions for \cref{lemma:tail} with $B = \norm{x_0 - \xrh}^2$, $\tilde{V} = \frac{1}{\alpha} \expectE_{S \sim \rho} \norm{A_S^+ \rrh_S}^2$. The application of \cref{lemma:tail} then directly implies the convergence result for the tail-averaged RBK-U algorithm.

\revv{Now, we proceed with upper bounds on condition number, the weighted residual, the bias, and the variance. For the condition number, first note that
\begin{equation} \label{eq:AAT}
\norm{A_S A_S^\top} = \norm{A_S^\top A_S}  = \norm{\sum_{i \in S} a_i a_i^\top} \leq\sum_{i \in S} \norm{ a_i a_i^\top}  \leq k \cdot \revv{\max_i \norm{a_i}^2}.
\end{equation}
Since we have assumed the rows of $A$ are in general position, this implies $\|(A_S A_S^\top)^+\| \geq 1 / (k \cdot \revv{\max_i \norm{a_i}^2})$.
Since $\overline{W} = \expect{S}{I_S^\top (A_S A_S^\top)^+ I_S}$ follows that
\[
\frac{1}{k \cdot \max_i \norm{a_i}^2} \cdot \mathbb{E}[I_S^\top I_S] \preceq \overline{W}\preceq \max_S \|A_S^+\|^2 \cdot \mathbb{E}[I_S^\top I_S].
\]
When $S$ is sampled uniformly, $\expect{S \sim \mathbf{U}(m,k)}{I_S^\top I_S} = \frac{k}{m} I$ and so this implies
\[
\kappa(\overline{W}) \le\ k \cdot \Big(\max_{i}\|a_i\|^2\Big)\cdot \max_S \|A_S^+\|^2 .
\]
}

\revv{
The bounds on the weighted residual and the bias follow directly from \Cref{lemma:res}.
Finally, regarding the variance, it follows from the definition of $V$ that
\begin{equation} 
\begin{aligned}
V & = \expectE_S \|A_S^+ r_S^{(\rho)}\|^2
\le \max_S\|A_S^+\|^2\ \cdot \,\mathbb{E}_{S} \|r_S^{(\rho)}\|^2 \\
& = \max_S\|A_S^+\|^2 \cdot \frac{k \|r^{(\rho)}\|^2}{m}.
\end{aligned}
\end{equation}
Applying the weighted residual bound from  \Cref{lemma:res}  the desired result.
}
\end{proof}

\begin{proof}[Proof of \cref{thm:rbk_gaussian}] 
\revv{In the case of Gaussian data, we consider the statistical linear regression problem
\begin{equation}
x^* = \argmin{x \in \Rea^n}  \expect{[a^\top  \;  b] \sim \mathcal{N}(0,Q)}{(a^\top x - b)^2}.
\end{equation}
The RBK-U algorithm is applied by generating \(k\) independent samples per iteration from the data distribution $\mathcal{N}(0,Q)$. Denote by $Q= LL^\top$ the Cholesky decomposition of $Q$.   
Let $L=\begin{bmatrix} L_a \\ \ell_b^\top \end{bmatrix}$ so $L_a$ represents the first $n$ rows of $L$ and $\ell_b^\top$ represents the last row of $L$. Due to the lower triangular nature of $L$ it follows that $L_a = \begin{bmatrix} L_n & 0 \end{bmatrix}$ where $L_n$ is the $n \times n$ Cholesky factor of $Q_n$ (which we previously defined to be the top left $n \times n$ block of $Q$).}

In each iteration of RBK-U, the $k$ data points $[a_1^\top\; b_1], \dots, [a_k^\top\; b_k]$ from $\mathcal{N}(0, Q)$ can be equivalently redistributed as $z_1^\top [L_a^\top\; \ell_b],\dots, z_{k}^\top [L_a^\top\; \ell_b]$ for $z_i \sim \mathcal{N}(0, I_{n+1})$.
Collecting the random vectors $z_1, \ldots, z_k$ into the columns of a single matrix $Z_t$, the RBK-U update can be written as
\begin{equation} \label{eq:rbk-gauss} x_{t+1} = x_t + (Z_t^\top L_a^\top)^{+}( Z_t^\top \ell_b - Z_t^\top L_a^\top x_t). \end{equation}

Now, note that $\expect{}{aa^\top} = L_a L_a^\top, \expect{}{b a} = L_a \ell_b$ and $\expect{}{b^2} = \ell_b^\top \ell_b$. Plugging these identities into the definition of $x^*$, we find
\begin{equation}
x^* = \argmin{x \in \Rea^n} \expect{[a^\top  \;  b] \sim \mathcal{N}(0,Q)}{(a^\top x - b)^2}  
= \argmin{x \in \Rea^n} \| L_a^\top x - \ell_b \|^2.
\end{equation}
We can thus define an ``underlying'' residual vector by $\tilde{r} = \ell_b - L_a^\top x^*$. 
Additionally defining $P_t = (Z_t^\top L_a^\top)^{+}  Z_t^\top L_a^\top$, the update \eqref{eq:rbk-gauss} can be reformulated as 
\begin{equation} \label{eq:rbk-gauss-fixed} x_{t+1} - x^* = (I - P_t)(x_t - x^*) + ( Z_t^\top L_a^\top)^{+} Z_t^\top  \tilde{r}. \end{equation}

\revv{
Now, decompose $Z_t^\top = [Z_t^1 \; Z_t^2]$ where $Z_t^1$ consists of the first $n$ columns and $Z_t^2$ of the last column. Note that $Z_t^1, Z_t^2$ are independent mean-zero Gaussian matrices and note also that that since $L_a = [L_n \; 0]$ it holds $Z_t^\top L_a^\top = Z_t^1 L_n^\top$. Furthermore since $L_n$ is full rank, $\range(L_a^\top) = {\rm span}(e_1, \ldots, e_n)$ and thus $\tilde{r} \parallel e_{n+1}$ (here $e_i$ is the $i$th standard basis vector in $\Rea^{n+1}$). It follows that $Z_t^\top \tilde{r} = Z_t^2 \tilde{r}_{n+1}$ and so
$$\expect{}{ ( Z_t^\top L_a^\top)^{+}( Z_t^\top \tilde{r})} =  \expect{}{ (Z_t^1 L_n^\top)^+ Z_t^2 \tilde{r}_{n+1}} = \expect{}{ (Z_t^1 L_n^\top)^+}\expect{}{ Z_t^2 \tilde{r}_{n+1}} = 0$$
since $\expect{}{Z_t^2} = 0$}. We note that a similar independence lemma has also been established in Lemma 3.14 of \cite{rebrova2021block}.
Furthermore, taking the expectation on both sides of \eqref{eq:rbk-gauss-fixed}, we have
\[ \expect{}{x_{t+1} - x^*} = (I - \overline{P}) (x_t - x^*),   \]
where $\overline{P}:= \expect{}{P_t}$. 
Consequently, \revv{$\expect{}{x_t}$ converges to $x^*$ using the same logic as the proof of \cref{thm:rbk}, which proves that $x^{(\rho)} = x^*$.}

\rev{Now, consider the bound on the value of $\alpha$.} Note that the entries of 
$Z_t \in \mathbb{R}^{(n + 1) \times k}$ are independently drawn from the standard Gaussian distribution. This property enables us to leverage existing results from \cite{derezinski2024sharp} concerning the spectrum of the matrix $\overline{P}$. Specifically, $P_t = (Z_t L_a^\top)^{+} Z_t L_a^\top$, which has a similar formula to the matrix $P$ considered in Equation (6) of \cite{derezinski2024sharp}. 
\revv{Then, noting that $L_a = [L_n \; 0]$ has the same singular value spectrum as $L_n$}, applying Theorem 3.1 of \cite{derezinski2024sharp} gives the result stated in \cref{thm:rbk_gaussian}. 

\rev{It remains to show the bound on the variance.} Note that the variance term $\norm{A_{S_t}^\top r_{S_t}}^2$ in \cref{thm:rbk} here takes the form
\begin{equation}
\norm{(Z_t^\top L_a^\top)^+ Z_t^\top \tilde{r}}^2.
\end{equation}
We bound
\begin{equation}
\norm{(Z_t^\top L_a^\top)^+ Z_t^\top \tilde{r}}^2 = \norm{(Z_t^1 L_n^\top)^+ Z_t^2 \tilde{r}_{n+1}}^2 \leq \sigma_{\rm min}^{-2}(Z_t^1 L_n^\top) \norm{Z_t^2 \tilde{r}_{n+1}}^2.
\end{equation}

We can bound the expectation of the smallest singular value of $Z_t^1 L_n^\top$ using a similar technique to the proof of Lemma 22 in \cite{derezinski2024sharp}. To start, let $L_n^\top = W\Sigma Y^\top$ be the SVD of $L_n^\top$. Then 
\begin{equation}
\sigma_{\rm min}^2(Z_t^1 L_n^\top) = \sigma_{\rm min}(Z_t^1 L_n^\top L_n (Z_t^1)^\top) = \sigma_{\rm min}(Z_t^1 W \Sigma^2 W^\top (Z_t^1)^\top).
\end{equation}
Let $W_{2k}$ denote the first $2k$ columns of $W$ and note that $\Sigma \succeq {\rm diag}(\sigma_{2k}, \dots, \sigma_{2k}, 0,\dots, 0)$ where we recall that $\sigma_i$ is the $i$th singular value of $L_n$. It thus holds
\[ \sigma_{\rm min}^2(Z_t^1 L_n^\top) \geq \sigma_{2k}^2 \cdot \sigma_{\rm \min}(Z_t^1 W_{2k} W_{2k}^\top (Z_t^1)^\top) = \sigma_{\min}^{2}(Z_t^1 W_{2k}).  \]
Since the columns of $W_{2k}$ are orthonormal the random matrix $Z_t^1 W_{2k}$ can be redistributed as a single Gaussian random matrix $G_{2k}$ of size $k \times (2k)$ and with standard normal entries. By Lemma 3.16 of \cite{rebrova2021block}, we thus have for $k \geq 6$, 
\begin{equation}
\expect{}{\sigma_{\rm min}^{-2}(Z_t^1 L_n^\top)} \leq \frac{20}{(\sqrt{2k} - \sqrt{k})^2 \sigma_{2k}^2} \leq \frac{200}{k \sigma_{2k}^2}.
\end{equation}
Hence, by the independence between $Z_t^1$ and $Z_t^2$ we have
\begin{equation}
\expectE \norm{(Z_t^\top L_a^\top)^+ Z_t^\top \tilde{r}}^2 = \expect{}{\sigma_{\rm min}^{-2}(Z_t^1 L_n^\top)} \expectE \norm{Z_t^2 \tilde{r}_{n+1}}^2 \leq \frac{200}{\sigma_{2k}^2} \cdot \|\tilde{r}\|^2.  
\end{equation}

\revv{Noting that $ \expect{[a^\top  \;  b] \sim \mathcal{N}(0,Q)}{(a^\top x^* - b)^2} = \|L_a^\top x^* - \ell_b\|^2 = \|\tilde{r}\|^2$, we prove \eqref{eq:variance-gauss}}.
\end{proof}

\begin{proof}[Proof of \cref{coro:gauss}]
    By Corollary 3.4 of \cite{derezinski2024sharp}, if $L_n$ has the polynomial spectral decay of order $\beta>1$, i.e., $\sigma_i^2 \leq  i^{-\beta} \sigma_1^2$ for all $i$,
    the dependence of $C_{n,k}$ on $n$ and $k$ in \cref{thm:rbk_gaussian} can be eliminated when $k \leq n/2$.  
    Furthermore, there is a constant $C=C(\beta)$ such that for any $k\le n/2$, the linear convergence rate satisfies   
 \[ \alpha^{\rm RBK} \geq C\frac{k^\beta \sigma_n^2}{\|L_n\|_F^2}. \]

Regarding the convergence rate of mSGD, first consider a finite matrix $A$. It is demonstrated in \cite{needell2014stochastic} that, with a minibatch size of 1 and importance sampling, the corresponding convergence parameter can be at most $\alpha^{\rm SGD} \leq \kappa_{\rm dem}^{-2}(A)$ for convergent step sizes. \revv{It is straightforward to generalize this result to the statistical problem \cref{eq:stat_ls}, where the corresponding convergence bound is $\alpha^{\rm SGD} \leq \kappa_{\rm dem}^{-2}(L_n)$}. When a larger minibatch size \(k\) is used, \cite{jain2018parallelizing} shows that the learning rate can be increased at most linearly, corresponding to a rate of at most 
\[\alpha^{\rm mSGD} \leq k  \kappa_{\rm dem}^{-2}(L_n) =\frac{k\sigma_n^2}{\| L_n\|_F^2}. \]
\end{proof}

\section{Proofs of ReBlocK Convergence Theorems\label{app:reblock}}
This appendix contains the proof of the ReBlocK convergence bound \cref{thm:reblock}. We first establish a useful lemma which bounds the error of the individual ReBlocK iterates under arbitrary sampling distributions.

\begin{lemma}
\label{lemma:reblock}
Consider the ReBlocK iterates \eqref{eq:reblock_it} and fix some sampling distribution $\rho$. Suppose that $x_0 \in \range(A^\top)$ and $\range(\overline{P}) = \range(A^\top)$. Then for all $t$ it holds
\begin{equation}
 \expectE \norm{x_t - \xrh}^2 \leq 2 (1 - \alpha)^{t} \norm{x_0 - \xrh}^2 + \revv{\frac{2}{\alpha} \expectE_{S \sim \rho} \norm{A_S^\top (A_S A_S^\top + \lambda k I)^{-1} \rrh_S}^2.}
\end{equation}
\end{lemma}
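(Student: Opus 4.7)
The plan is to execute a coupled bias-noise decomposition in the spirit of the references cited after the theorem statement. I would introduce two auxiliary sequences that share the same sample path $(S_t)$:
\begin{align*}
b_{t+1} &= (I - P(S_t)) b_t, \qquad b_0 = x_0 - \xrh, \\
n_{t+1} &= (I - P(S_t)) n_t + A^\top W(S_t) \rrh, \qquad n_0 = 0.
\end{align*}
By linearity of the rewritten iteration \eqref{eq:it_nice}, the error decomposes pathwise as $x_t - \xrh = b_t + n_t$, so the parallelogram inequality gives $\expectE \|x_t - \xrh\|^2 \leq 2\,\expectE\|b_t\|^2 + 2\,\expectE\|n_t\|^2$, which accounts for the factor of $2$ in both summands of the target bound. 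It thus suffices to establish $\expectE\|b_t\|^2 \leq (1-\alpha)^t \|x_0 - \xrh\|^2$ and $\expectE\|n_t\|^2 \leq V/\alpha$, with $V$ as in the lemma.

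For the bias sequence, I would exploit the operator inequality $(I-P(S))^2 \preceq I - P(S)$, which holds for ReBlocK because the spectrum of $P(S) = A_S^\top (A_S A_S^\top + \lambda k I)^{-1} A_S$ lies in $[0,1)$. Combining this with the inductive containment $b_t \in \range(A^\top) = \range(\overline{P})$ (which follows by the same argument as \cref{lemma:range} since $b_0 = x_0 - \xrh \in \range(A^\top)$) and \cref{lemma:P} with $s=1$, a one-step conditioning argument yields $\expectE\|b_{t+1}\|^2 \leq (1-\alpha)\expectE\|b_t\|^2$, and iteration from $b_0$ gives the claimed geometric decay.

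The noise analysis is the crux of the argument. Expanding $\|n_{t+1}\|^2$ produces the cross term
\[
2\,\expectE\bigl[n_t^\top (I - P(S_t)) A^\top W(S_t) \rrh\bigr],
\]
which does \emph{not} vanish when conditioned on $\mathcal{F}_t$, because the decomposition is no longer orthogonal as it was for RBK. Handling this cross term is the main obstacle. The key observation is that $n_t$ is a function of $S_0, \ldots, S_{t-1}$ only and hence independent of $S_t$, while $\expectE n_t = 0$ for every $t$; the latter follows by induction from $n_0 = 0$ using the recursion together with the normal equation $A^\top \overline{W} \rrh = 0$ for $\xrh$. Independence then factors the cross term as $(\expectE n_t)^\top \expectE\bigl[(I - P(S_t)) A^\top W(S_t) \rrh\bigr] = 0$ in total expectation. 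Applying the quadratic operator inequality $(I-P(S))^2 \preceq I - P(S)$ once more to the remaining quadratic term yields the recursion $\expectE\|n_{t+1}\|^2 \leq (1-\alpha)\expectE\|n_t\|^2 + V$, which when iterated from $n_0 = 0$ gives $\expectE\|n_t\|^2 \leq V/\alpha$. Combining with the bias bound through the parallelogram inequality completes the proof.
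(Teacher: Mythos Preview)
Your proposal is correct and follows essentially the same approach as the paper: the paper introduces the identical coupled bias--variance sequences (denoted $d_s,v_s$ rather than $b_t,n_t$), uses the same parallelogram split, the same operator inequality $(I-P(S))^2 \preceq I-P(S)$ for both sequences, and eliminates the cross term in exactly the same way by establishing $\expectE v_s = 0$ and then taking total expectation so that the cross term becomes $\expectE[v_s]^\top$ against a deterministic vector. The only cosmetic differences are that the paper phrases the cross-term elimination via conditioning on $v_s$ first rather than invoking independence directly, and it appeals to \cref{lemma:expect} to obtain $\expectE v_s = 0$ rather than your direct induction.
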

\begin{proof}
Since the ReBlocK iteration does not admit a simple orthogonal decomposition, we instead proceed by utilizing a \textit{bias-variance decomposition} inspired by \cite{defossez2015averaged,jain2018parallelizing, epperly2024randomizedkaczmarztailaveraging}.

Let $P_s = P(S_s)$ and $W_s = W(S_s)$. Bias and variance sequences are defined respectively by
\begin{equation}
d_0 = x_0 -\xrh, \; d_{s+1} = (I-P_s) d_s,
\end{equation}
\begin{equation}
v_0 = 0,\;v_{s+1} = (I - P_s) v_s + A^\top W_s \rrh.
\end{equation} Intuitively, the bias sequence captures the error due to the initialization $x_0 \neq \xrh$ and the variance sequence captures the rest of the error. It can be verified by mathematical induction and \eqref{eq:it_nice} that 
$x_s-\xrh = d_s + v_s$ for all $s$. As a result, it holds
\begin{equation}
\label{eq:bv}
\expectE \norm{x_t - \xrh}^2 \leq 2 \expectE \norm{d_t}^2 + 2 \expectE \norm{v_t}^2.
\end{equation}
Note also that it is a simple extension of \cref{lemma:range} that $d_s,v_s \in \range(A^\top)$ for all $s$.

To analyze the bias, first calculate
\begin{equation}
\expect{}{\norm{d_{s+1}^2} | d_s} = d_s^\top \expect{}{(I-P_s)^2} d_s = d_s^\top (I - 2\overline{P} + \expect{S \sim \rho}{P(S)^2}) d_s.
\end{equation}
Observe that
\begin{equation}
P(S) = A_S^\top (A_S A_S^\top + \lambda k I)^{-1} A_S \preceq  A_S^\top (A_S A_S^\top)^{-1} A_S \preceq I,
\end{equation}
and hence $P(S)^2 \preceq P(S)$ and $\expect{S \sim \rho}{P(S)^2} \preceq \overline{P}$. As a result, $I - 2\overline{P} + \expect{S \sim \rho}{P(S)^2} \preceq I - \overline{P}$. Additionally leveraging the properties $d_s \in \range(A^\top) = \range(\overline{P})$ and $\overline{P} \preceq I$, \cref{lemma:P} implies
\begin{equation}
\expect{}{\norm{d_{s+1}^2} | d_s} \leq (1-\alpha) \norm{d_s}^2.
\end{equation}
Iterating this inequality yields a simple bound on the bias term:
\begin{equation}
\label{eq:bias_bound}
\expectE \norm{d_t}^2 \leq (1-\alpha)^t \norm{x_0 - \xrh}^2.
\end{equation}

The analysis of the variance term is less straightforward. To begin, note that $v_s$ follows the same recurrence as $x_s - \xrh$, so we can apply a slight modification of \cref{lemma:expect} to the variance sequence to obtain
\begin{equation}
\expect{}{v_s} = (I - \overline{P})^s v_0 = 0.
\end{equation}

Now, square the iteration for $v_{s+1}$ conditioned on $v_s$:
\begin{align*}
\expect{}{\norm{v_{s+1}}^2 | v_s} &= v_s^\top \expect{S \sim \rho}{(I-P(S))^2} v_t + 2 v_s^\top \expect{S \sim \rho}{(I -P(S))A^\top W(S) \rrh} + \expectE_{S \sim \rho} \norm{A^\top W(S) \rrh}^2 \\
& \leq (1-\alpha) \norm{v_s}^2 + 2 v_s^\top \expect{S \sim \rho}{(I -P(S))A^\top W(S) \rrh} + \expectE_{S \sim \rho} \norm{A^\top W(S) \rrh}^2,
\end{align*}
where the second step uses our previous observations that $\expect{S \sim \rho}{(I - P(S))^2} \preceq I - \overline{P}$. $v_s \in \range(\overline{P})$, and \cref{lemma:P}. Take expectations over $v_s$ as well to eliminate the cross-term, yielding
\begin{align*}
\expectE \norm{v_{s+1}}^2 \leq (1-\alpha) \expectE \norm{v_s}^2 + \expectE_{S \sim \rho} \norm{A^\top W(S) \rrh}^2.
\end{align*}

Iterating this last inequality for $s = 0, \ldots, t-1$, using $\sum_{s=0}^{t-1} (1-\alpha)^s < \sum_{s=0}^\infty (1-\alpha)^s = \sfrac{1}{\alpha}$, and recalling that $v_0 = 0$, we find
\begin{equation}
\expectE \norm{v_{t}}^2  \leq \frac{1}{\alpha} \expectE_{S \sim \rho} \norm{A^\top W(S) \rrh}^2. \label{eq:reblock_var}
\end{equation}

Noting that 
\begin{equation}
A^\top W(S) \rrh = A_S^\top (A_S A_S^\top + \lambda k I)^{-1} \rrh_S,
\end{equation}
\revv{we can combine the bias bound \eqref{eq:bias_bound} and the variance bound \eqref{eq:reblock_var} using \eqref{eq:bv} to obtain the desired result.}
\end{proof}

\begin{proof}[Proof of \cref{thm:reblock}]
The result will follow from the appropriate application of \cref{lemma:expect,lemma:tail,lemma:reblock}. To apply these lemmas it is first required to verify $\overline{P} \preceq I$ and $\range(\overline{P}) = \range(A^\top)$. For the first result, it suffices to observe that $P(S) \preceq P_{RBK}(S) \preceq I$ for all $S$. 

For the second result, the logic follows almost identically to the same part of the proof of \cref{thm:rbk}. The only difference is in how we show that $\overline{P} x \neq 0$ for any $x \in \range(A^\top)$. Like before, we start by noting that there must exist a row index $i$ such that $a_i^\top x \neq 0$. We then bound
\begin{equation}
x^\top \overline{P} x = \expect{S \sim \mathbf{U}(m,k)}{x^\top A_S^\top  (A_S A_S^\top + \lambda k I)^{-1} A_S x} \geq \frac{k}{m} \cdot \frac{(a_i^\top x)^2}{\norm{A^\top A + \lambda k I}_2} > 0,
\end{equation}
where the intermediate step uses the facts that $\norm{A_S A_S^\top + \lambda k I} = \norm{A_S^\top A_S + \lambda k I} \leq \norm{A^\top A + \lambda k I}$, that row $i$ is chosen with probability $k/m$, and that $x^\top A_S^\top A_S x \geq x^\top a_i a_i^\top x$ when $i \in S$.  It follows that $\overline{P} x \neq 0$ and so indeed $\range(\overline{P}) = \range(A^\top)$.

With the assumptions verified, \cref{lemma:expect} can be applied using $r=0$, $s=T$ to yield
\begin{equation}
\expect{}{x_T} - \xrh = (I - \overline{P})^T (x_0 - \xrh).
\end{equation}
This directly implies 
\begin{equation}
\norm{\expect{}{x_T} - \xrh} \leq (1 - \alpha)^T \norm{x_0 - \xrh}.
\end{equation}
using \cref{lemma:range,lemma:P} and the proven properties of $\overline{P}.$

The conditions of \cref{lemma:reblock} are also satisfied, the results of which provide the conditions for \cref{lemma:tail} with $B = \norm{x_0 - \xrh}^2$, $\tilde{V} = \revv{\frac{2}{\alpha} \expectE_{S \sim \rho} \norm{A_S^\top (A_S A_S^\top + \lambda k I)^{-1} \rrh_S}^2.}$ \cref{lemma:tail} then \revv{yields the desired bound for the tail averages.}

The next step is to bound the condition number $\kappa(\overline{W})$. \revv{Recall from the proof of \Cref{thm:rbk} that $\norm{A_S A_S^\top} \leq k \cdot \max_i \norm{a_i}^2$, so that} 
\begin{equation}
\lambda k I \preceq A_S A_S^\top + \lambda k I \preceq (\revv{\max_i \norm{a_i}^2} + \lambda) \cdot k I.
\end{equation}
Now, when $S$ is sampled uniformly, $\expect{S \sim \mathbf{U}(m,k)}{I_S^\top I_S} = \frac{k}{m} I$. It follows that
\begin{equation}
\overline{W}= \expect{S \sim \mathbf{U}(m,k)}{I_S^\top (A_S A_S^\top + \lambda k I)^{-1} I_S} \succeq \frac{1}{( \max_i \revv{\norm{a_i}^2} + \lambda)m}I, 
\end{equation}
\begin{equation}
\overline{W}= \expect{S \sim \mathbf{U}(m,k)}{I_S^\top (A_S A_S^\top + \lambda k I)^{-1} I_S} \preceq \frac{1}{\lambda m}I.
\end{equation}
Putting these together implies
\begin{equation} \label{eq:reblock_cond}
\kappa(\overline{W}) \leq 1 + \frac{1}{\lambda} \cdot \max_i \revv{\norm{a_i}^2}.
\end{equation}

\revv{
The bounds on the weighted residual and the bias follow from \Cref{lemma:res} with the bias bound taking the standard form 
\begin{equation}
\|x^{(\rho)}-x^*\|
 \le \sqrt{\kappa(\overline{W})-1} \cdot \|A^+\| \cdot \|r\|. \label{eq:reblock_bias}
\end{equation}
Since $\max_i \norm{a_i}^2 \leq \norm{A}^2$, the combination of \cref{eq:reblock_bias,eq:reblock_cond} implies the looser bound
\begin{equation}
\|x^{(\rho)}-x^*\| \leq \frac{1}{\sqrt{\lambda}} \cdot \kappa(A) \cdot \norm{r}.
\end{equation}}

\revv{
Lastly, consider the variance $V=\mathbb E\|A_S^\top (A_SA_S^\top+\lambda k I)^{-1} r_S^{(\rho)}\|^2$.
Since for $\sigma\ge0$ one has $\sigma /(\sigma+\lambda k)^2\le 1 / (4\lambda k)$, it holds
\[
\big\|A_S^\top (A_SA_S^\top+\lambda k I)^{-1}\big\|^2
=\lambda_{\max}\!\Big((A_SA_S^\top)(A_SA_S^\top+\lambda k I)^{-2}\Big)
\ \le\ \frac{1}{4\lambda k}.
\]
Thus
\[
V\ \le\ \frac{1}{4\lambda k} \cdot \mathbb E\|r_S^{(\rho)}\|^2
= \frac{1}{4\lambda m} \cdot\|r^{(\rho)}\|^2 \leq \frac{1}{4 \lambda} \cdot \kappa(\overline{W}) \cdot \frac{\norm{r}^2}{m}.
\]}
\end{proof}

\section{Noisy Linear Least-squares\label{app:noisy}}
In this section we consider the special case in which the inconsistency in the problem is generated by zero-mean noise. This case is more general than the case of Gaussian data, but less general than the case of arbitrary inconsistency. 
\revv{In particular, we consider the statistical linear regression problem
\begin{equation} \label{eq:noisy_prob}
x^* = \argmin{x \in \Rea^n} \expect{a \sim \mathbf{A}, z \sim \mathbf{Z}}{(a^\top x - (a^\top x^* - z))^2}
\end{equation}
where $\mathbf{A}$ is arbitrary, $\mathbf{Z}$ satisfies $\expect{z \sim \mathbf{Z}}{z}=0$,  and $x^* \in \Rea^n$. 
}
This scenario might arise, for example, when making noisy measurements of $x^*$ along the directions $a$. 

\revv{The RBK-U and ReBlocK-U algorithms are applied to \cref{eq:noisy_prob} by generating \(k\) independent samples per iteration from the data distributions $\mathbf{A},\mathbf{Z}$.}
The result is that RBK and ReBlocK both converge (in a Monte Carlo sense) to the ordinary least-squares solution rather than a weighted solution. This can be viewed as an advancement over previous results on noisy linear systems, such as \cite{needell2010randomized}, which show only convergence to a finite horizon even for arbitrarily large noisy systems. The advance comes from treating the noise model explicitly and applying tail averaging to the algorithm.

\begin{theorem}
\label{thm:rbk_noisy}
Consider the RBK-U algorithm applied to the noisy linear least-squares problem \eqref{eq:noisy_prob}. Then the results of \cref{thm:rbk} hold and \revv{$x^{(\rho)} = x^*$}.
\end{theorem}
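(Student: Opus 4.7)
The plan is to show that in the noisy model, the RBK iteration can be rewritten as a linear contraction toward $x^*$ plus a zero-mean additive term, so the entire framework of \Cref{app:rbk} applies verbatim with $\xrh$ replaced by $x^*$. The crucial point is that the role played by the normal-equations identity $A^\top \overline{W} \rrh = 0$ in the general proof is played here by the statistical independence of the noise $z$ from the sampled rows $a$ combined with $\expect{z \sim \mathbf{Z}}{z}=0$.

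The first step is to substitute $b_{S_t} = A_{S_t} x^* - z_{S_t}$ into the RBK update \eqref{eq:rbk} and rearrange to obtain
\begin{equation}
x_{t+1} - x^* = (I - P_t)(x_t - x^*) - A_{S_t}^+ z_{S_t},
\end{equation}
where $P_t = A_{S_t}^+ A_{S_t}$ is the orthogonal projector onto $\range(A_{S_t}^\top)$. This exactly mirrors \eqref{eq:it_nice} with $x^*$ in place of $\xrh$ and $-z_{S_t}$ playing the role of $\rrh_{S_t}$. The second step is to use the conditional expectation $\expectE[z_{S_t} \mid A_{S_t}] = 0$ to conclude that $\expectE[A_{S_t}^+ z_{S_t}] = 0$, which is the ingredient needed to kill the mean of the additive term.

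With these two observations, Lemma \ref{lemma:expect} applies directly to give $\expectE[x_T] - x^* = (I - \overline{P})^T (x_0 - x^*)$, yielding the expectation bound \eqref{eq:rbk_expect_converge} with $\xrh$ replaced by $x^*$, and in particular $\xrh = x^*$. For the mean-square bound on individual iterates, I would reuse the orthogonal decomposition argument from Lemma \ref{lemma:rbk}: the key identity $(I-P_t)(x_t - x^*) \perp A_{S_t}^+ z_{S_t}$ is a purely geometric fact about the range and kernel of $A_{S_t}$, so it does not care whether the additive term arises from a weighted residual or from independent noise. This gives a variance quantity $V = \expectE_{S,z}\norm{A_S^+ z_S}^2$ in place of $\expectE_{S}\norm{A_S^+ \rrh_S}^2$, and feeding the result into Lemma \ref{lemma:tail} then delivers the tail-averaged bound \eqref{eq:rbk_ta_converge}.

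The main bookkeeping obstacle, rather than a genuine technical difficulty, is verifying the structural assumptions $\overline{P} \preceq I$ and $\range(\overline{P}) = \range(A^\top)$ in the semi-infinite setting where rows are drawn from the distribution $\mathbf{A}$. These should be reinterpreted using the effective second-moment operator $\expect{a \sim \mathbf{A}}{a a^\top}$, and the verification then follows the same argument as in the proof of \cref{thm:rbk}, with $\range(A^\top)$ replaced by the range of this operator. Apart from this, no new ideas are required: the zero-mean independent noise structure eliminates the bias for free, so every quantitative conclusion of \cref{thm:rbk} transfers with $\xrh = x^*$.
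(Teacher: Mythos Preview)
Your proposal is correct and rests on the same key observation as the paper: in the noisy model the sampled residual $r_{S_t}$ coincides with the noise $z_{S_t}$, which is independent of $A_{S_t}$ and has mean zero, so the additive term $A_{S_t}^+ z_{S_t}$ vanishes in expectation.

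The only difference is one of economy. The paper does not re-derive the framework; it simply observes that \cref{thm:rbk} already applies to any instance of RBK-U, so the only thing left to prove is the identification $\xrh = x^*$. This is done in one line by plugging $r_S = z_S$ into the normal-equations characterization $\overline{P}\xrh = \overline{P}x^* + \expect{}{A_S^\top M(A_S) r_S}$ and using independence to kill the last term. Your route instead reproduces \cref{lemma:expect,lemma:rbk,lemma:tail} with $x^*$ in place of $\xrh$ and then infers $\xrh = x^*$ from uniqueness of the limit of $\expectE[x_T]$. That is perfectly valid but redundant: once you have shown $\expectE[A_S^\top M(A_S) r_S]=0$, you already have $\xrh = x^*$ directly, and everything else is inherited from \cref{thm:rbk} without rework.
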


\begin{theorem}
\label{thm:reblock_noisy}
Consider the ReBlocK-U algorithm applied to the noisy linear least-squares problem \eqref{eq:noisy_prob}. Then the results of \cref{thm:reblock} hold and \revv{$x^{(\rho)} = x^*$}.
\end{theorem}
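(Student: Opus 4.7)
The plan is to lift the ReBlocK-U framework of \cref{thm:reblock} to the semi-infinite statistical setting of \cref{eq:noisy_prob} and then show that $x^*$ itself solves the resulting weighted normal equations, forcing $\xrh = x^*$.

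First I would reinterpret $\overline{W}$ and $\overline{P}$: instead of averaging over uniform $k$-subsets of rows of a finite matrix $A$, we take expectations over $k$ i.i.d.\ samples $(a_i, z_i)$ from $\mathbf{A} \times \mathbf{Z}$, treating the random block $A_{S_t}$ and right-hand side $b_{S_t}$ as being assembled directly from these samples. The weighted problem becomes
\begin{equation*}
\xrh = \argmin{x \in \Rea^n} \expect{S}{(A_S x - b_S)^\top M(A_S) (A_S x - b_S)},
\end{equation*}
characterized by the normal equations $\expect{S}{A_S^\top M(A_S)(A_S \xrh - b_S)} = 0$, with $\xrh$ again denoting the minimum-norm such solution. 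Inspecting the proofs of \cref{lemma:expect,lemma:range,lemma:P,lemma:tail,lemma:reblock}, each manipulation uses only expectations over the sampling distribution and never exploits the finite structure of $A$ or $b$, so these lemmas transfer verbatim and yield the convergence bounds of \cref{thm:reblock} in the statistical setting.

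Next I would verify that $x^*$ satisfies the weighted normal equations. Substituting $b_S = A_S x^* - z_S$,
\begin{equation*}
\expect{S}{A_S^\top M(A_S)(A_S x^* - b_S)} = \expect{S}{A_S^\top M(A_S) z_S}.
\end{equation*}
Since $z$ is drawn independently of $a$ with $\mathbb{E}[z] = 0$, conditioning on $a_S$ gives $\mathbb{E}[z_S \mid a_S] = 0$, so the right-hand side vanishes. Hence $x^*$ is a weighted-least-squares solution, and differs from the minimum-norm solution $\xrh$ only by an element of $\ker(\overline{P})$. Under the natural convention that $x^*$ carries no component in $\ker(\overline{P})$ (such a component is invisible to the statistical problem), we conclude $\xrh = x^*$.

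The main subtlety is handling the semi-infinite setting cleanly: one must ensure that $\overline{W}$, $\overline{P}$, and the minimum-norm interpretation of $\xrh$ all generalize unambiguously, and that the range condition $\range(\overline{P}) = \range(\expect{a \sim \mathbf{A}}{aa^\top})$ continues to make sense. The technical core of the argument, however, is the one-line computation $\expect{S}{A_S^\top M(A_S) z_S} = 0$, a direct consequence of $z$ being mean-zero noise independent of $a$. An essentially identical argument handles RBK-U and yields \cref{thm:rbk_noisy}.
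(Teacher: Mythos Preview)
Your proposal is correct and takes essentially the same approach as the paper: the core of both arguments is the one-line computation $\expect{S}{A_S^\top M(A_S) z_S} = 0$, justified by the independence of the mean-zero noise $z$ from $a$, which shows $x^*$ satisfies the weighted normal equations and hence coincides with $\xrh$. Your treatment is somewhat more explicit about lifting the finite-$m$ framework to the statistical setting and about the minimum-norm subtlety, but these are refinements of presentation rather than differences of method.
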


\begin{proof}[Unified proof of \cref{thm:rbk_noisy,thm:reblock_noisy}]
\revv{Recall that the weighted solution $\xrh$ is characterized by
\begin{equation}
\overline{P} \xrh = A^\top \overline{W} b = \overline{P} x^* + A^\top \overline{W} r = \overline{P}x^* + \expect{}{A_S^\top M(S) r_S}
\end{equation}
where $r_S = A_S x^* - b_S$. Based on the definition of the noisy equation \cref{eq:noisy_prob}, each entry of the residual satisfies $r_i = a_i^\top x^* - (a_i^\top x^* - z_i) = z_i$ and is thus distributed exactly according to $\mathbf{Z}$. In particular each entry of $r_S$ is independent from $A_S$ and the other entries of $r_S$ and has a mean of zero. It follows that 
\begin{equation}
\expect{}{A_S^\top M(S) r_S} = \expect{}{A_S^\top M(S)} \expect{}{r_S} = 0,
\end{equation}
which completes the proof that $x^{(\rho)} = x^*$.}
\end{proof}

\section{Sampling from a Determinantal Point Process \label{app:dpp}}

In Section \ref{sec:reblock}, we have presented the convergence of ReBlocK to a weighted least-squares solution under the uniform sampling distribution $\rho = \mathbf{U}(m,k)$. However, the explicit convergence rate $1-\alpha$ is still unknown. Motivated by recent work on Kaczmarz algorithms with  determinantal point process (DPP) sampling \cite{derezinski2024solving}, we now show that under DPP sampling, ReBlocK converges to the ordinary least-squares solution and the convergence of ReBlocK can have a much better dependence on the singular values of $A$ than mSGD. 

The sampling distribution that we consider is  $\rho= k\text{-}{\rm DPP}(AA^\top + \lambda k I)$, namely
\begin{equation}
\label{eq:dpp}
\Pr[S] = \frac{\det (A_S A_S^\top + \lambda k I)}{\sum_{|S'| = k} \det (A_{S'}A_{S'}^\top + \lambda k I)}.
\end{equation}
Our analysis is similar to parts of the nearly concurrent work \cite{derezinski2025}, though \cite{derezinski2025} does not incorporate a fixed size $k$ for their regularized DPP distribution.

\begin{theorem} \label{thm:dpp}
Consider the ReBlocK algorithm with k-DPP sampling, namely $M(A_S) = (A_S A_S^\top + \lambda k I)^{-1}$ and $\rho= k\text{-}{\rm DPP}(AA^\top + \lambda k I)$. Let  $\alpha = \sigma_{\rm min}^+(\overline{P})$ and assume $x_0 \in \range(A^\top)$.  Then the expectation of the ReBlocK iterates $x_T$ converges to $x^*$ as
\begin{equation}
\norm{\expect{}{x_T} - x^*} \leq (1 - \alpha)^T \norm{x_0 - x^*}.
\end{equation}
Furthermore, the tail averages $\overline{x}_T$ converge to $x^*$ as 
\begin{equation}
\expectE \norm{\overline{x}_T - x^*}^2 \leq 2 (1 - \alpha)^{T_b+1} \norm{x_0 - x^*}^2 \\
 + \revv{\frac{4}{\alpha^2 (T-T_b)} V}
\end{equation}
\revv{with $V = \expectE_{S \sim \rho} \norm{A_S^\top (A_S A_S^\top + \lambda k I)^{-1} r_S}^2$}.
\revv{The variance term is bounded by 
\begin{equation}
V \leq \frac{1}{4 \lambda} \cdot \max_i r_i^2.
\end{equation}
}

Finally, for any $\ell < k$ the convergence parameter $\alpha$ satisfies 
\begin{equation} \label{eq:alpha}
\alpha \geq  \frac{k-\ell}{(k-\ell) +  \kappa_\ell^2(A) + (m+k-2\ell)\frac{\lambda \revv{k}}{\sigma_{n_r}^2}},
\end{equation}
where $\kappa_{\ell}^2(A) := \sum_{j > \ell}^{n_r} \sigma_j^2(A)/\sigma_{n_r}^2(A)$ and $\sigma_1(A) \geq \dots \geq \sigma_{n_r}(A) > 0= \sigma_{n_{r+1}} (A) = \dots = \sigma_n(A)$ are the singular values of $A$. 
\end{theorem}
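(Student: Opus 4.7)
My plan is to specialize the proof strategy of \cref{thm:reblock} to the $k$-DPP sampling distribution. Three of the four components carry over: \cref{lemma:expect}, \cref{lemma:tail}, and \cref{lemma:reblock} apply once I verify $\overline{P}\preceq I$ and $\range(\overline{P})=\range(A^\top)$. The first inequality is immediate from the sample-by-sample bound $A_S^\top(A_SA_S^\top+\lambda kI)^{-1}A_S\preceq I$. For the range identity, since the $k$-DPP kernel $AA^\top+\lambda kI$ is positive definite, every row index $i$ has strictly positive marginal inclusion probability, so a single-row lower bound on $\overline{P}$ excludes any nonzero $x\in\range(A^\top)$ with $\overline{P}x=0$, exactly as in the proof of \cref{thm:reblock}.

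The structural novelty is the identification $\xrh=x^*$, which is what produces convergence to the ordinary least-squares solution rather than a weighted one. By the characterization of $\xrh$ via the normal equation $A^\top\overline{W}\rrh=0$, it suffices to prove the $k$-DPP identity
\[
\mathbb{E}_{S\sim\rho}\bigl[A_S^\top(A_SA_S^\top+\lambda kI)^{-1}r_S\bigr]=0 \quad\text{whenever}\quad A^\top r=0.
\]
Applying the push-through $A_S^\top(A_SA_S^\top+\lambda kI)^{-1}=(A_S^\top A_S+\lambda kI)^{-1}A_S^\top$, this becomes a DPP calculation: the sampling weight $\det(L_S)$ combined with entries of $L_S^{-1}$ via Cramer's rule produces signed minors of $L=AA^\top+\lambda kI$ that collapse, via a Jacobi/Cauchy--Binet identity, into a scalar multiple of $L^{-1}$ acting on $r$, whereupon the push-through identity and $A^\top r=0$ give the desired vanishing. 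This is the regularized analogue of the unbiasedness property underlying \cite{derezinski2024solving}.

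The variance bound is routine and requires no DPP structure beyond $|S|=k$. The inequality $\sigma/(\sigma+\lambda k)^2\le 1/(4\lambda k)$ for $\sigma\ge 0$ gives $\|A_S^\top(A_SA_S^\top+\lambda kI)^{-1}\|^2\le 1/(4\lambda k)$ sample by sample. Since $\|r_S\|^2\le k\max_i r_i^2$, one obtains $V\le\frac{1}{4\lambda}\max_i r_i^2$ immediately.

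The main obstacle is the quantitative lower bound \cref{eq:alpha} on $\alpha$, which is the ingredient that distinguishes $k$-DPP sampling from mSGD. Using push-through, $\overline{P}=I-\lambda k\,\mathbb{E}_S[(A_S^\top A_S+\lambda kI)^{-1}]$, so the task reduces to upper-bounding $\mathbb{E}_S[(A_S^\top A_S+\lambda kI)^{-1}]$ on $\range(A^\top)$. Following the spectral strategy of the nearly concurrent work \cite{derezinski2025}, I would truncate the SVD of $A$ at rank $\ell<k$, handle the top-$\ell$ subspace (which contributes the $\kappa_\ell^2(A)$ term) separately from its complement, and exploit the $k$-DPP's bias toward covering the high-variance directions: conditioning on $S$ covering the top-$\ell$ eigen-directions leaves an ``effective batch size'' of $k-\ell$ for the tail, producing the leading $(k-\ell)$ factor in the denominator of \cref{eq:alpha}. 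The $(m+k-2\ell)\lambda k/\sigma_{n_r}^2$ term is the contribution of the regularization, tracked through a Schur complement of $(A_S^\top A_S+\lambda kI)$ against the top-$\ell$ block. Enforcing the fixed-size constraint $|S|=k$ throughout this conditioning argument, rather than working with a mixed-size regularized DPP, is the technically delicate step and is where $k$-DPP calculus becomes essential.
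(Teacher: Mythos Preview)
Your framework (carrying over \cref{lemma:expect,lemma:tail,lemma:reblock}, verifying $\overline{P}\preceq I$ and $\range(\overline{P})=\range(A^\top)$) and your variance bound are correct and match the paper essentially verbatim. The two substantive parts, however, diverge from the paper's approach.

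For $\xrh=x^*$, your Cramer/Cauchy--Binet sketch heads in a reasonable direction, but the claim that the expectation collapses to ``a scalar multiple of $L^{-1}$'' is not right: the diagonal entries $p_{k-1}(q_{-i})/p_k(q)$ are not all equal. What actually holds, and what you need, is only that $\overline{W}$ is \emph{diagonalized by the left singular vectors of $A$}. The paper obtains this instantly by citing the closed-form identity (equation~(5.3) of \cite{derezinski2024solving})
\[
\overline{W}=\expect{S\sim\rho}{I_S^\top(A_SA_S^\top+\lambda kI)^{-1}I_S}=U\,\mathrm{diag}\!\left(\frac{p_{k-1}(q_{-1})}{p_k(q)},\ldots,\frac{p_{k-1}(q_{-m})}{p_k(q)}\right)U^\top,
\]
and then observes that the weighted normal equations $A^\top\overline{W}A\xrh=A^\top\overline{W}b$ reduce to $A^\top A\xrh=A^\top b$ because $\Sigma^\top D=\hat{D}\Sigma^\top$ for diagonal $D$. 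This is a two-line argument once the formula is invoked; your route would essentially reprove that formula.

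For the bound on $\alpha$, your proposed conditioning/Schur-complement argument is substantially different and much heavier than what the paper does. Because the same closed-form gives $\overline{P}=V\Sigma^\top D\Sigma V^\top$ explicitly, the paper avoids any probabilistic conditioning entirely: it follows Lemma~4.1 of \cite{derezinski2024solving} and introduces the deterministic comparison matrix
\[
B_\ell=U\,\mathrm{diag}(q_1,\ldots,q_m)\,U^\top+\frac{1}{k-\ell}\Big(\sum_{j>\ell}q_j\Big)I,
\]
shows $\alpha\ge\sigma_{\min}^+(A^\top B_\ell^{-1}A)$, and then computes this last quantity by elementary diagonal algebra, directly yielding \cref{eq:alpha}. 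Your ``effective batch size $k-\ell$ after covering the top-$\ell$ directions'' intuition is morally what $B_\ell$ encodes, but the paper's route is purely algebraic and avoids the delicate fixed-size conditioning you flagged as the hard step. The missing ingredient in your plan is simply this explicit formula for $\overline{W}$ (and hence $\overline{P}$); once you have it, both $\xrh=x^*$ and the $\alpha$ bound become short computations rather than structural DPP arguments.
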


By choosing $\ell =1$ and a small value of $\lambda$, the above theorem indicates that when $n_r=n$ the value of $\alpha$ is at least on the order of $k/\kappa^2_{\rm dem}(A)$ where $\kappa_{\rm dem}:= \|A\|_F / \sigma_{\min}(A)$ is the Demmel condition number. To understand the meaning of the bound more generally, suppose additionally that $\norm{a_i}=1$ for all $i$ so that $\norm{A}_F^2 = m$. Then, choosing $\ell = k/2$ we can rewrite \eqref{eq:alpha} as 
\begin{equation}
\begin{split} 
\label{eq:alpha-dpp-alterantive}
\alpha &\geq \frac{k}{k + 2\kappa_{k/2}^2(A) + 2m \lambda \revv{k}/\sigma_n^2} \\
&\approx \frac{k}{2\kappa^2_{k/2}(A) + 2 \lambda \revv{k} \kappa_{\rm dem}^2(A)}.
\end{split}
\end{equation}
As noted in \Cref{sec:gauss_data}, the convergence parameter for mSGD is bounded by
\[ \alpha^{\rm SGD} \leq k/\kappa_{\rm dem}^2(A). \]
Hence, when the singular values of $A$ decay rapidly and $\lambda \revv{k}$ is significantly smaller than one, ReBlocK with DPP sampling can have a much faster convergence rate than mSGD.

Our bound for the variance of the tail-averaged estimator is crude and could likely be improved using techniques such as those of \cite{derezinski2025}. Nonetheless, it is interesting to note that the dependence of the variance on the residual vector can be worse in the case of DPP sampling than in the case of uniform sampling. This is because the DPP distribution can potentially sample the residual vector in unfavorable ways. It is thus unclear whether DPP sampling is a good strategy for arbitrary inconsistent systems, even if it can be implemented efficiently

\begin{proof}
The majority of the proof follows similarly to the proofs of \cref{thm:rbk,thm:reblock}, and we only highlight the differences, of which there are three: the fact that $\xrh = x^*$, the value of the parameter $\alpha$, and the bound on the variance term in the tail-averaged bound.

We begin by verifying that $\xrh = x^*$. For a vector $u \in \mathbb{R}^{m}$, we denote its elementary symmetric polynomial by
$p_{k}(u):=\sum_{S \in\binom{[m]}{k}} \prod_{i \in S} u_i$, where $\binom{[m]}{k}$ is the set of all subsets of size $k$ from the set $[m]=\{1,\ldots,m\}$. Applying equation (5.3) of \cite{derezinski2024solving} by using $B = AA^\top + \lambda \revv{k} I$ leads to
\begin{equation} \label{eq:53}
\overline{W} = \expect{S \sim \rho}{I_S^\top (I_S (AA^\top + \lambda \revv{k} I) I_S^\top)^{-1} I_S} = \frac{U {\rm diag}(p_{k-1}(q_{-1}), \ldots, p_{k-1}(q_{-m})) U^\top}{p_{k}(q)},
\end{equation}
where $A = U\Sigma V^\top$ denotes the \revh{full} singular value decomposition of $A$, and $q_i = \sigma_i^2 + \lambda \revv{k}$ when $i \leq n_r$ and $q_i = \lambda \revv{k}$ otherwise, with $\sigma_1 \geq \dots \geq \sigma_{n_r}$ representing the sorted singular values and $n_r$ the rank of $A$. Denote by $\overline{W}= UDU^\top$ with the diagonal matrix $D:= \frac{{\rm diag}(p_{k-1}(q_{-1}), \ldots, p_{k-1}(q_{-m}))}{p_{k}(q)} \succ 0$. Then, by the definition $x^{(\rho)} = {\rm argmin}_{x \in \mathbb{R}^n} \|Ax - b\|_{\overline{W}}^2$, we have
\begin{equation} \label{eq:normal-1} A^\top \overline{W} Ax^{(\rho)} = A^\top \overline{W}b. \end{equation}
Note that $A^\top \overline{W} A = V \revh{\Sigma^\top} D \Sigma V^\top$ and $A^\top\overline{W} = V\revh{\Sigma^\top} DU^\top b$. \revh{Also, because $\Sigma \in \mathbb{R}^{m\times n}$ and $D \in \mathbb{R}^{m \times m}$ are diagonal, we have $\Sigma^\top D = \hat{D} \Sigma^\top$, where $\hat{D} :=D[1:n,1:n]$ is the leading principal submatrix of $D$. Substituting this into \eqref{eq:normal-1} gives 
$V\hat{D} \Sigma^\top \Sigma V^\top x^{(\rho)} = V\hat{D}\Sigma^\top U^\top b$.}
Thus, we can apply $V^\top$ on the left and use $D \succ 0$ to conclude
\[ \revh{\Sigma^\top \Sigma} V^\top x^{(\rho)} = \revh{\Sigma^\top} U^\top b. \]
Applying $V$ now on the left implies $A^\top A x^{(\rho)} = A^\top b$, i.e., $x^{(\rho)} \in {\rm argmin}_{x \in \mathbb{R}^n} \|Ax - b\|^2$. Since $\xrh \in \range(A^T)$ this implies that $\xrh = x^*$.

Regarding $\alpha$, plugging \eqref{eq:53} into the definition of $\overline{P}$ leads to
\begin{equation} \label{eq:P}
\overline{P} = A^\top \overline{W} A = V \Sigma^\top \frac{{\rm diag}(p_{k-1}(q_{-1}), \ldots, p_{k-1}(q_{-m}))}{p_{k}(q)} \Sigma V^\top.
\end{equation}

We now follow closely the logic of the proof of Lemma 4.1 of \cite{derezinski2024solving}, making appropriate modifications to handle the fact that in \revv{ReBlocK} we invert $AA^\top + \lambda \revv{k}  I$ rather than just $AA^\top$. Note also that through \eqref{eq:P} we have explicitly verified that $\range(\overline{P}) = \range(A^T)$, which is the needed condition for the contraction properties to hold in our convergence bound. 

First, for any $\ell < k$ we can construct an approximation matrix 
\begin{equation}
B_\ell = U {\rm diag}\paren{q_1, \ldots q_m} U^\top +\frac{1}{k-\ell}  \paren{\sum_{j>\ell}^m q_j} I.
\end{equation}
Note that we have used a mildly simpler and looser approximation than \cite{derezinski2024solving} by replacing $\frac{k - \ell - 1}{k - \ell}$ with $1$ in the first term. 

The same arguments as in \cite{derezinski2024solving} then imply that
\begin{equation}
\alpha = \sigma_{\min}^+(\overline{P}) \geq \sigma_{\min}^+ (A^\top B_\ell^{-1} A).
\end{equation}
Furthermore we have
\begin{align*}
\sigma_{\min}^+ (A^\top B_\ell^{-1} A) &= \sigma_{\min}^+  \paren{V \Sigma^\top U^\top B_\ell^{-1} U \Sigma V^\top} \\
&= \sigma_{\min} \paren{V {\rm diag} \paren{\frac{\sigma_1^2}{q_1 + \frac{1}{k - \ell} \sum_{j>\ell}^m q_j}, \ldots, \frac{\sigma_{n_r}^2}{q_{n_r} + \frac{1}{k - \ell} \sum_{j>\ell}^m q_j}} V^\top}\\
&= \frac{\sigma_{n_r}^2}{q_{n_r} + \frac{1}{k - \ell} \sum_{j > \ell}^m q_j} \\
&=  \frac{\sigma_{n_r}^2}{\sigma_{n_r}^2  + \frac{1}{k - \ell} \sum_{j > \ell}^{n_r} \sigma_j^2 +  \frac{m+k-2\ell}{k - \ell} \lambda \revv{k}} \\
&= \frac{k-\ell}{(k-\ell) + \kappa_\ell^2(A) + (m+k-2\ell)\frac{\lambda \revv{k}}{\sigma_{n_r}^2}}.
\end{align*}
This completes the bound for $\alpha$.

Finally, we consider the bound for the variance term in the tail-averaged bound. Following the proof of \cref{thm:reblock} we can obtain an initial bound of
\begin{equation}
\revv{V \leq \frac{1}{4 \lambda k} \cdot \expectE_{S \sim \rho} \norm{r_S}^2.}
\end{equation}
Unfortunately, in the case of DPP sampling, we have no guarantee on how $S$ samples the residual vector $\rrh$. Thus, the best we can do is uniformly bound $\norm{r_S}^2 \leq k \cdot \max_i r_i^2$. This yields the bound in the theorem.
\end{proof}

In this supplement we provide more details on the numerical examples throughout the paper. The code to run the experiments can be found at \url{https://github.com/ggoldsh/block-kaczmarz-without-preprocessing}. 

\section{Experiments with random matrices\label{app:synth}}
In this section we provide the details of the synthetic experiments from Sections 3.1, 4.1, and 5.1. 
All of these experiments have $m=10^5$, $n=10^2$, and $k=30$. The experiments differ only in how the matrix $A$ is generated. Thus, we first describe the matrix construction for each case separately, then describe the rest of the set-up in a unified manner. 

Beginning with the experiments of \Cref{sec:rbk_num}, for the left panel of \cref{fig:gaussian_results} we generate the matrix $A$ by setting each entry independently as $A_{ij} \sim \mathcal{N}(0,1)$. For the right panel, we construct $A = GU$ where $G \in \Rea^{m \times n}$ has independent standard normal entries and and $U$ has $\sigma_i = 1/i^2$ and random orthonormal singular vectors. The condition number of the matrix $A$ is $\kappa(A) \approx 1$ in the left panel and $\kappa(A) \approx 10^4$ in the right panel.

Regarding \Cref{sec:rbk_fail_num,sec:reblock_num}, the examples these sections are constructed as discretizations of underlying continuous problems, which is both a realistic scenario and serves to ensure that $A$ contains many nearly singular blocks. In both cases, we first generate an $n \times n$ matrix $C$. For the left panel we set $C=I$, whereas for the right panel we set $C$ with singular values $\sigma_i = 1/\revv{i}$ and random orthonormal singular vectors. Once $C$ is constructed, we define a set of $n$ functions $f_1, \ldots, f_n$ by 
\begin{equation}
f_j(s) = \sum_{\ell = 1}^n C_{j \ell} T_\ell(s),
\end{equation}
where $T_\ell$ is the $\ell^{\rm th}$ Chebyshev polynomial of the first kind. The functions $f$ correspond to the columns of $A$.

Next, we construct a vector $v$ of $m$ coordinates uniformly spaced across the interval $[-1, 1]$, namely
\begin{equation}
v_i = \paren{-1 + 2 \frac{i-1}{m-1}}.
\end{equation}
Finally, the matrix $A$ is designed as
\begin{equation}
A_{ij} = f_j(v_i),
\end{equation}
so that column $j$ of $A$ is a discretized representation of the function $f_j$. The condition number of the resulting matrices $A$ are $\kappa(A) \approx 11$ in the left panel and $\kappa(A) \approx \revv{450}$ in the right panel.

For all experiments in  \Cref{sec:rbk_fail_num,sec:rbk_num,sec:reblock_num} the vectors $b$ are constructed by setting
\begin{equation}
\label{eq:gen_b}
b_i = a_i^\top y + z_i
\end{equation}
where $y \sim \mathcal{N}(0,I_n)$ and $z_i \sim \mathcal{N}(0, \sigma)$. The noise level is set to $\sigma=1e-2$ except for in \Cref{fig:inconsistency} where the noise level is explicitly varied. The initial guess is $x_0 = 0$ and the number of iterations is \revv{$T=10^5$}, which is equivalent to \revv{thirty} passes over the data. The step size for minibatch SGD is constant within each run and has been tuned independently for each example, specifically to be as large as possible without introducing any signs of instability, up to a factor of $2$. The value of $\lambda$ for ReBlocK has been set to $\lambda = 0.001$ and is not optimized on a per-example basis. \revv{ The reported quantity is the relative solution error $\norm{x - x^*} / \norm{x^*}$}. These experiments with random matrices are carried out in double precision on a 1.1 GHz Quad-Core Intel Core i5 CPU.

\section{Natural Gradient Experiments\label{app:nn}}
In this section we describe in detail the numerical experiments of \Cref{sec:ngd}. The training problem for the neural network is a simple function regression task on the unit interval. The target function is chosen to be periodic to avoid any consideration of boundary effects, and the neural network is correspondingly designed to be periodic by construction. Specifically, the target function is constructed as
\begin{equation} 
f(s) = q(\sin(2 \pi s)),
\end{equation}
with the polynomial $q$ defined as
\begin{equation}
q(s) = \frac{1}{\sqrt{d}} \sum_{\ell = 1}^{d} c_\ell T_\ell(s)
\end{equation}
for $d = 30$ and each $c_\ell$ chosen randomly from a standard normal distribution. The resulting function is pictured in \cref{fig:target}.

\begin{figure}[htbp]
\centering
\includegraphics[width=0.4\textwidth]{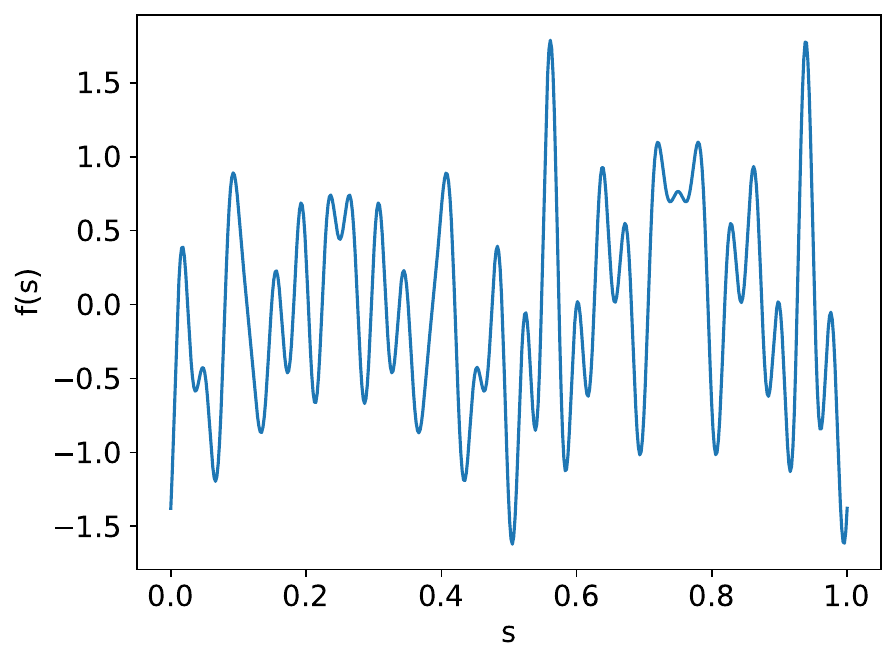} 
\caption{Target function for neural network training.}
\label{fig:target}
\end{figure}

The neural network model is a simple periodic ResNet \cite{he2016deep} with 5 layers. For input $s \in \Rea$, the network outputs $f(s) \in \Rea$ are given by
\begin{equation}
y_1 = \tanh(W_1 \sin (2 \pi s) + b_1),
\end{equation}
\begin{equation}
y_i = y_{i-1} + \tanh(W_i y_{i-1} + b_i) ;\; i = 2,3,4,
\end{equation}
\begin{equation}
f(s) = W_5 y_4 + b_5.
\end{equation}
The intermediate layers have dimensions $y_i \in \Rea^{500}$ and the weight matrices $W_i$ and bias vectors $b_i$ have the appropriate dimensions to match. The weights are initialized using a Lecun normal initialization and the biases are all initialized to zero. The parameters are collected into a single vector $\theta \in \Rea^{753,001}$ for convenience, leading to the neural network function $f_\theta(s)$. 

The loss function is defined as in \eqref{eq:func_learn} and the network is trained using subsampled natural gradient descent, as described for example in \cite{ren2019efficient}, with a batch size of $N_b = 500$, a Tikhonov regularization of $\lambda = 0.01$, and a step size of $\eta = 0.5$. This corresponds to the parameter update
\begin{equation}
\theta_{t+1} = \theta_t - \eta J_S^\top (J_S J_S^\top + N_b \lambda I)^{-1} [f_\theta - f]_S,
\end{equation}
where $S$ represents the set of $N_b$ sample points. The setting of $N_b=500$ is intended to represent something close to a full batch training regime, which is only practical since our function regression problem lives in a very compact, one-dimensional domain. The resulting training curve is presented in \cref{fig:nn_train}.

\begin{figure}[htbp]
\centering
\includegraphics[width=0.4\textwidth]{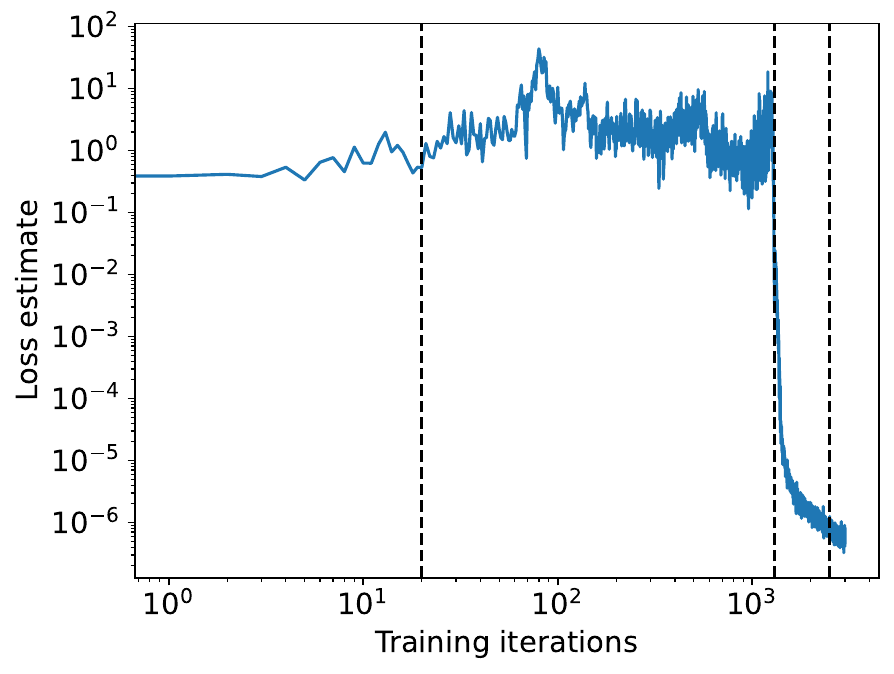} 
\caption{Training curve for the neural network function regression example. The three vertical lines indicate the training snapshots that are used to generate the least-squares problems studied in \cref{fig:nn}.}
\label{fig:nn_train}
\end{figure}

From this training run, three snapshots are taken and used to generate the least-squares problems for \cref{fig:nn,fig:per_it,fig:lambdas,fig:ks,fig:it_speed}, following \cref{eq:ngd_ls}. The first snapshot is from the ``pre-descent'' phase before the loss begins to decrease, the second is from the ``descent'' phase during which the loss decreases rapidly, and the third is from the ``post-descent'' phase when the decay rate of the loss has slowed substantially. The snapshots are indicated by the vertical dotted lines in \cref{fig:nn_train} and form the basis of the experiments of \cref{fig:nn,fig:per_it,fig:lambdas,fig:ks,fig:it_speed}. The batch size  of $k=50$ used in \cref{fig:nn} is meant to represent the realistic scenario when each iteration uses too few samples to thoroughly represent the target function. Furthermore, the continuous problems are treated directly by uniformly sampling $k$ points from the domain $[0,1]$ at each iteration and calculating the network outputs and gradients at these points.

The initial guess is $x_0=0$ in every case and the step size for minibatch SGD is constant within each run and has been tuned independently for each example, specifically to be as large as possible without introducing any signs of instability, up to a factor of $2$. The value of $\lambda$ for ReBlocK has been set to $\lambda = 0.001$ unless otherwise specified, and is not optimized on a per-example basis. The reported quantity is the relative residual $\tilde{r} = \norm{J x - [f_\theta - f]} / \norm{f_\theta - f}$, which measures how well the function-space update direction $J x$ agrees with the function-space loss gradient $f_\theta - f$. This quantity is estimated at each iteration using the available sample points.

\section*{Disclaimer}
This report was prepared as an account of work sponsored by an agency of the
United States Government. Neither the United States Government nor any agency thereof, nor
any of their employees, makes any warranty, express or implied, or assumes any legal liability
or responsibility for the accuracy, completeness, or usefulness of any information, apparatus,
product, or process disclosed, or represents that its use would not infringe privately owned
rights. Reference herein to any specific commercial product, process, or service by trade name,
trademark, manufacturer, or otherwise does not necessarily constitute or imply its
endorsement, recommendation, or favoring by the United States Government or any agency
thereof. The views and opinions of authors expressed herein do not necessarily state or reflect
those of the United States Government or any agency thereof.

\bibliographystyle{siamplain}
\bibliography{references}
\end{document}